\newenvironment{proof*}{\noindent{\bf Proof:}}{}
\newcommand{\ignore}[1]{}
\newcommand{\dd}{\mathrm{d}}
\newcommand{\EE}{\mathrm{E}}
\newcommand{\Real}{\mathbb{R}}
\newcommand{\fhat}{\hat{f}}
\newcommand{\fstar}{f^*}
\newcommand{\calB}{\mathcal{B}}
\newcommand{\calF}{\mathcal{F}}
\newcommand{\calG}{\mathcal{G}}
\newcommand{\calH}{\mathcal{H}}
\newcommand{\calN}{\mathcal{N}}
\newcommand{\calO}{\mathcal{O}}
\newcommand{\calS}{\mathcal{S}}
\newcommand{\calX}{\mathcal{X}}
\newcommand{\scrE}{\mathscr{E}}
\newcommand{\Eqref}[1]{Eq.~{\eqref{#1}}}
\newcommand{\kmin}{\kappa_M}
\newcommand{\hnorm}[1]{\|_{\calH_{#1}}}
\newcommand{\lambdaone}{{\lambda_1^{(n)}}}
\newcommand{\Uns}{U_{n,\sm}^{(m)}}
\newcommand{\Unall}{U_{n,*}}
\newcommand{\Rn}{\hat{R}}
\newcommand{\psiloss}{\Psi}%\zeta_\ell}
\newcommand{\LPi}{L_2(\Pi)}
\newcommand{\calHlp}{\calH_{\ell_p}}
\newcommand{\calHl}[1]{\calH_{\ell_{#1}}}
\newcommand{\calHpsi}{\calH_{\psi}}
\newcommand{\repH}{\widetilde{\calH}}
\newcommand{\repk}{\widetilde{k}}
\newcommand{\bmid}{~\Big |~}
\newcommand{\ellp}{$\ell_p$}
\newcommand{\kminrho}{\kmin}
\newcommand{\calHtot}{\calH^{\oplus M}}%{\calH_{\oplus}}
\newcommand{\bolda}{\boldsymbol{a}}
\newcommand{\boldb}{\boldsymbol{b}}
\newcommand{\sm}{s_m}
\newcommand{\cm}{r_m}
\newcommand{\smm}[1]{s_{#1}}
\newcommand{\cmm}[1]{r_{#1}}
\newcommand{\opleq}[1]{\mathop{\leq}^{\rm #1}}
\newcommand{\boldone}{\boldsymbol{1}}
\newtheorem{Theorem}{Theorem}
\newtheorem{Lemma}[Theorem]{Lemma}
\newtheorem{Proposition}[Theorem]{Proposition}
\newtheorem{Corollary}[Theorem]{Corollary}
\newtheorem{Assumption}{Assumption}
\newcounter{assump}
\renewcommand{\theassump}{\arabic{assump}} %"CˆÓ
\newcommand{\Assump}[1][]{{\refstepcounter{assump}{#1} (A\theassump)}}
\begin{document}

\title{Fast Learning Rate of Non-Sparse Multiple Kernel Learning \\
and Optimal Regularization Strategies}

\author{\name Taiji Suzuki \email t-suzuki@mist.i.u-tokyo.ac.jp \\
       \addr Department of Mathematical Informatics \\
       The University of Tokyo \\
       7-3-1 Hongo, Bunkyo-ku, Tokyo 113-8656, Japan}
%       \AND
%       \name Michael I.\ Jordan \email jordan@cs.berkeley.edu \\
%       \addr Division of Computer Science and Department of Statistics\\
%       University of California\\
%       Berkeley, CA 94720-1776, USA}

%\editor{??}

\maketitle

\begin{abstract}
In this paper, we give a new generalization error bound of Multiple Kernel Learning (MKL) 
for a general class of regularizations,
and discuss what kind of regularization gives a favorable predictive accuracy.
Our main target in this paper is dense type regularizations including \ellp-MKL.
%that imposes $\ell_p$-mixed-norm regularization instead of $\ell_1$-mixed-norm regularization.
%In contrast to the recent numerical results that support the usefulness of dense type regularizations in MKL,
%few theoretical results have been developed for dense type MKL.
According to the recent numerical experiments, 
the sparse regularization does not necessarily show a good performance compared with dense type regularizations.
%and dense regularization often shows better performances than sparse one.
Motivated by this fact, this paper gives a general theoretical tool to derive fast learning rates of MKL that 
is applicable to arbitrary mixed-norm-type regularizations in a unifying manner.
%Our analysis utilizes the {\it localization technique}, thus 
This enables us to compare the generalization performances of various types of regularizations.
As a consequence, we observe that 
the homogeneity of the complexities of candidate reproducing kernel Hilbert spaces (RKHSs)
affects which regularization strategy ($\ell_1$ or dense) %of sparse $\ell_1$-regularization or dense regularization 
is preferred.
In fact, in homogeneous complexity settings where the complexities of all RKHSs are evenly same,
$\ell_1$-regularization is optimal among all isotropic norms.
On the other hand, in inhomogeneous complexity settings, 
dense type regularizations can show better learning rate than sparse $\ell_1$-regularization.
We also show that our learning rate achieves the minimax lower bound in homogeneous complexity settings.
%Moreover we obtain a fast learning rate of $\ell_p$-MKL that is tightest among existing bounds as a by-product.
%Finally we give a numerical experiment, 
%and observe that the experimental result well justifies the theoretical hypothesis. 
\end{abstract}

\begin{keywords}
  Multiple Kernel Learning, Fast Learning Rate, Mini-max Lower Bound, Non-sparse, Generalization Error Bounds
\end{keywords}

\section{Introduction}
Multiple Kernel Learning (MKL) proposed by \cite{JMLR:Lanckriet+etal:2004} is one of the most promising methods that adaptively select the kernel function
in supervised kernel learning.
Kernel method is widely used and several studies have supported its usefulness \citep{book:Schoelkopf+Smola:2002,Book:Taylor+Cristianini:2004}. 
However the performance of kernel methods critically relies on the choice of the kernel function.
Many methods have been proposed to deal with the issue of kernel selection. 
\cite{JMLR:Ong+etal:2005} studied hyperkernels as a kernel of kernel functions. 
\cite{ICML:Argriou+etal:2006} considered DC programming approach to learn a mixture of kernels with continuous parameters. % (see also \cite{COLT:Andreas+etal:2005}).
Some studies tackled a problem to learn non-linear combination of kernels as in \cite{NIPS:Bach:2009,NIPS:Cortes+etal:nonlinear:2009,ICML:Varma+Babu:2009}. 
Among them, learning a linear combination of finite candidate kernels with non-negative coefficients is the most basic, fundamental and 
commonly used approach.
The seminal work of MKL by \cite{JMLR:Lanckriet+etal:2004} considered learning convex combination of candidate kernels
as well as its linear combination. 
This work opened up the sequence of the MKL studies.
\cite{ICML:Bach+etal:2004} showed that MKL can be reformulated as a kernel version of the group lasso \citep{JRSS:YuanLin:2006}.
This formulation gives an insight that MKL can be described as a $\ell_1$-mixed-norm regularized method.
%According to their work, MKL fits a linear combination of $M$ functions, $\sum_{m=1}^M f_m$, to the data 
%imposing the $\ell_1$ regularization $\sum_{m=1}^M \|f_m\hnorm{m}$
%where each $f_m$ is an member of $M$ reproducing kernel Hilbert spaces (RKHSs) $\{\calH_m\}_{m=1}^M$ and 
%$\|f_m\hnorm{m}$ is the RKHS norm in $\calH_m$.
As a generalization of MKL, \ellp-MKL that imposes $\ell_p$-mixed-norm regularization 
%($(\sum_{m=1}^M \|f_m\hnorm{m}^p)^{\frac{1}{p}}$ with $p\geq 1$) 
has been proposed \citep{JMLR:MicchelliPontil:2005,NIPS:Marius+etal:2009}.
%where $\{\calH_m\}_{m=1}^M$ are $M$ reproducing kernel Hilbert spaces (RKHSs) and $f_m \in \calH_m$.
\ellp-MKL includes the original MKL %by \cite{JMLR:Lanckriet+etal:2004,ICML:Bach+etal:2004} 
as a special case as $\ell_1$-MKL. 
Another direction of generalization is elasticnet-MKL \citep{NIPSWS:Taylor:2008,NIPSWS:ElastMKL:2009}
that imposes a mixture of $\ell_1$-mixed-norm and $\ell_2$-mixed-norm regularizations.
Recently numerical studies have shown that \ellp-MKL with $p>1$ and elasticnet-MKL 
show better performances than $\ell_1$-MKL in several situations \citep{NIPS:Marius+etal:2009,UAI:Cortes+etal:2009,NIPSWS:ElastMKL:2009}.
%It is also shown numerically that elasticnet-MKL shows better performances than $\ell_1$-MKL in some datasets \cite{}.
An interesting perception here is that both $\ell_p$-MKL and elasticnet-MKL produce denser estimator than the original $\ell_1$-MKL
while they show favorable performances.
The goal of this paper is to give a theoretical justification to these experimental results favorable for the {\it dense type} MKL methods. 
To this aim, we give a unifying framework to derive a fast learning rate of an \emph{arbitrary} norm type regularization,
and discuss which regularization is preferred depending on the problem settings.

In the pioneering paper of \cite{JMLR:Lanckriet+etal:2004}, 
a convergence rate of MKL is given as $\sqrt{\frac{M}{n}}$, where $M$ is the number of given kernels and $n$ is the number of samples. 
\cite{COLT:Srebro+BenDavid:2006} gave improved learning bound utilizing the pseudo-dimension of the given kernel class. 
\cite{COLT:Ying+Campbell:2009} gave a convergence bound utilizing Rademacher chaos
and gave some upper bounds of the Rademacher chaos utilizing the pseudo-dimension of the kernel class.
\cite{UAI:Cortes+etal:2009} presented a convergence bound for a learning method with $L_2$ regularization on the kernel weight.
\cite{ICML:Cortes+etal:gbound:2010} %showed that the convergence rate of $\ell_1$-MKL is $\sqrt{\frac{\log(M)}{n}}$, 
%and also 
gave the convergence rate of \ellp-MKL as $\sqrt{\frac{\log(M)}{n}}$ for $p=1$ and $\frac{M^{1-\frac{1}{p}}}{\sqrt{n}}$ for $1<p\leq 2$.
%But according to their bound, it is favorable in terms of the convergence rate that $p$ is as small as possible,
%which contradicts the experimental facts that \ellp-MKL usually outperforms $\ell_1$-MKL.
\cite{JMLR:lp:Marius+etal:2011} gave a similar convergence bound with improved constants. %that resolves this problem. 
\cite{ECML:Marius+etal:2010} generalized this bound to a variant of the elasticnet type regularization and 
widened the effective range of $p$ to all range of $p \geq 1$ while $1\leq p \leq 2$ had been imposed in the existing works.
%Their bound indicates that the convergence rate is determined by a trade-off between the effective number $M^{1-\frac{1}{p}}$ of kernels 
%and the $\ell_p$-mixed-norm radius $R_p = \sup_{f_m}(\sum_{m=1}^M \|f_m\hnorm{m}^p)^{\frac{1}{p}}$ of the searching space.
One concern about these bounds is that all bounds introduced above are ``global'' bounds in a sense that  
the bounds are applicable to all candidates of estimators.
Consequently all convergence rate presented above are of order $1/\sqrt{n}$ with respect to the number $n$ of samples.
However, by utilizing the {\it localization} techniques including   
so-called local Rademacher complexity \citep{LocalRademacher,Koltchinskii} and peeling device \citep{Book:VanDeGeer:EmpiricalProcess},
we can derive a faster learning rate.
Instead of uniformly bounding all candidates of estimators,  
the localized inequality focuses on a particular estimator such as empirical risk minimizer, 
thus can give a sharp convergence rate.  

Localized bounds of MKL have been given mainly in sparse learning settings \citep{COLT:Koltchinskii:2008,AS:Meier+Geer+Buhlmann:2009,AS:Koltchinskii+Yuan:2010},
and there are only few studies for non-sparse settings in which the sparsity of the ground truth is not assumed.
%for $\ell_1$-MKL or elasticnet type MKL.
The first localized bound of MKL is derived by \cite{COLT:Koltchinskii:2008} in the setting of $\ell_1$-MKL.
The second one was given by \cite{AS:Meier+Geer+Buhlmann:2009} who gave a near optimal convergence rate for elasticnet type regularization.
Recently \cite{AS:Koltchinskii+Yuan:2010} considered a variant of $\ell_1$-MKL and showed it achieves the minimax optimal convergence rate. 
%that improves the bound of \cite{AS:Meier+Geer+Buhlmann:2009}.
All these localized convergence rates were considered in sparse learning settings,
and it has not been discussed how a dense type regularization outperforms the sparse $\ell_1$-regularization.
%Thus their research direction is different from our motivations in which we don't assume sparsity for the ground truth.
%The localized fast learning rate of \ellp-MKL has not been addressed.
Recently \cite{arXiv:Kloft+Gilles:2011} gave a localized convergence bound of $\ell_p$-MKL.
However, their analysis assumed a strong condition where RKHSs have no-correlation to each other. 

In this paper, we show a unifying framework to derive fast convergence rates of MKL with various regularization types.
The framework is applicable to {\it arbitrary} mixed-norm regularizations
including $\ell_p$-MKL and elasticnet-MKL.
Our learning rate utilizes the localization technique, thus is tighter than global type learning rates.
Moreover our analysis does not require no-correlation assumption as in \cite{arXiv:Kloft+Gilles:2011}.
We discuss our bound in two situations: {\it homogeneous complexity} situation and {\it inhomogeneous complexity} situation
where homogeneous complexity means that all RKHSs have the same \emph{complexities} and inhomogeneous complexity
means that the complexities of RKHSs are different to each other.
In the homogeneous situation, 
we apply our general framework to some examples and show our bound achieves the minimax-optimal rate.
As a by-product, we obtain a tighter convergence rate of $\ell_p$-MKL than existing results. 
Moreover we show that our bound indicates that $\ell_1$-MKL shows the best performance among all ``isotropic'' mixed-norm regularizations
in homogeneous settings.
Next we analyze our bound in inhomogeneous settings where the {\it complexities} of the RKHSs are not uniformly same.
We show that dense type regularizations can give better generalization error bounds than the sparse $\ell_1$-regularization in the inhomogeneous setting.
Here it should be noted that in real settings inhomogeneous complexity is more natural than homogeneous complexity. 
Finally we give numerical experiments to show the validity of the theoretical investigations.
We see that the numerical experiments well support the theoretical findings.
As far as the author knows, this is the first theoretical attempt to clearly show 
the inhomogeneous complexities are advantageous for dense type MKL.
%advantage of dense type MKL.

%We investigate our framework in two situations: homogeneous settings where the {\it complexities} of the RKHSs are 
%same and inhomogeneous settings where the complexities are not same.
%In the homogeneous setting, we apply our general framework to some examples and show our bound achieves the minimax-optimal rate.
%In the inhomogeneous setting, 

\section{Preliminary}
In this section we give the problem formulation, the notations and the assumptions required for the convergence analysis. % of \ellp-MKL. 

\subsection{Problem Formulation}
Suppose that we are given $n$ i.i.d. samples $\{(x_i,y_i)\}_{i=1}^n$ distributed from a probability distribution $P$ on $\calX \times \Real$ where 
$\calX$ is an input space.
We denote by $\Pi$ the marginal distribution of $P$ on $\calX$. 
We are given $M$ reproducing kernel Hilbert spaces (RKHS) $\{\calH_m\}_{m=1}^M$ each of which is associated with a kernel $k_m$. 
We consider a mixed-norm type regularization with respect to an arbitrary given norm $\|\cdot\|_{\psi}$,
that is, the regularization is given by the norm $\|(\|f_m\hnorm{m})_{m=1}^M\|_{\psi}$ of the vector $(\|f_m\hnorm{m})_{m=1}^M$ for $f_m \in \calH_m$ ($m=1,\dots,M$)\footnote{
We assume that the mixed-norm $\|(\|f_m\hnorm{m})_{m=1}^M\|_{\psi}$ satisfies the triangular inequality with respect to $(f_m)_{m=1}^M$, that is,
$\|(\|f_m + f_m'\hnorm{m})_{m=1}^M\|_{\psi} \leq \|(\|f_m \hnorm{m})_{m=1}^M\|_{\psi} + \|(\|f_m'\hnorm{m})_{m=1}^M\|_{\psi}$. To satisfy this condition,
it is sufficient if the norm is monotone, i.e., $\|\bolda \|_{\psi} \leq \|\bolda + \boldb \|_{\psi}$ for all $\bolda,\boldb \geq \boldsymbol{0}$.}.
For notational simplicity, we write $\|f\|_{\psi} = \|(\|f_m\hnorm{m})_{m=1}^M\|_{\psi}$
for $f=\sum_{m=1}^M f_m~(f_m\in \calH_m)$.

The general formulation of MKL, we consider in this paper, 
fits a function $f = \sum_{m=1}^M f_m~(f_m\in \calH_m)$ to the data by solving the following optimization problem: 
\begin{align}
\fhat = \sum_{m=1}^M \fhat_m = &\mathop{\arg \min}_{f_m \in \calH_m~(m=1,\dots,M)}
\frac{1}{n}\sum_{i=1}^n \left(y_i- \sum_{m=1}^M f_m(x_i) \right)^2 +
\lambdaone \|f \|_{\psi}^2.
\label{eq:primalLp}
\end{align}
We call this ``$\psi$-norm MKL''.
This formulation covers many practically used MKL methods (e.g., $\ell_p$-MKL, elasticnet-MKL, variable sparsity kernel learning (see later for their definitions)),
and is solvable by a finite dimensional optimization procedure due to the representer theorem \citep{JMAA:KimeldorfWahba:1971}.
In this paper, we mainly focus on the regression problem (the squared loss).
However the discussion can be generalized to Lipschitz continuous and strongly convex losses as in \cite{LocalRademacher}
(see Section \ref{sec:GeneralLoss}).

\paragraph{Example 1: $\ell_p$-MKL}
The first motivating example of $\psi$-norm MKL is $\ell_p$-MKL \citep{NIPS:Marius+etal:2009}
that employs $\ell_p$-norm for $1\leq p \leq \infty$ as the regularizer:
$\|f\|_{\psi} = %\|f\|_{\ell_p} = 
\|(\|f_m\hnorm{m})_{m=1}^M\|_{\ell_p} = 
(\sum_{m=1}^M \|f_m\hnorm{m}^p)^{\frac{1}{p}}$.
If $p$ is strictly greater than 1 $(p>1)$, the solution of $\ell_p$-MKL becomes dense.
In particular, $p=2$ corresponds to averaging candidate kernels with uniform weight \citep{JMLR:MicchelliPontil:2005}.
It is reported that $\ell_p$-MKL with $p$ greater than 1, say $p=\frac{4}{3}$, often shows better performance than 
the original sparse $\ell_1$-MKL \citep{ICML:Cortes+etal:gbound:2010}.
%There are several methods to solve $\ell_p$-MKL efficiently \citep{NIPS:Marius+etal:2009,JMLR:lp:Marius+etal:2011,NIPS:Vishwanathan+etal:2010}.

%\footnote{
%One might like to use $\sum_{m=1}^M \|f_m\hnorm{m}^p$ instead of $\left(  \sum_{m=1}^M \|f_m\hnorm{m}^p \right)^{\frac{2}{p}}$ as regularization.
%However this difference does not matter because by adjusting the regularization parameter $\lambdaone$ there is a one-to-one correspondence between 
%the solutions of both regularization types.}

\paragraph{Example 2: Elasticnet-MKL}

The second example is elasticnet-MKL \citep{NIPSWS:Taylor:2008,NIPSWS:ElastMKL:2009}
that employs mixture of $\ell_1$ and $\ell_2$ norms as the regularizer:
$\|f\|_{\psi} = \tau \|f\|_{\ell_1} + (1-\tau)\|f\|_{\ell_2} =
\tau \sum_{m=1}^M \|f_m\hnorm{m} + (1-\tau) (\sum_{m=1}^M \|f_m\hnorm{m}^2)^{\frac{1}{2}}$ with $\tau \in [0,1]$.
Elasticnet-MKL shares the same spirit with $\ell_p$-MKL in a sense that it bridges sparse $\ell_1$-regularization and dense $\ell_2$-regularization.
Efficient optimization method for elasticnet-MKL is proposed by \cite{ML:Suzuki+Tomioka:2011}.

\paragraph{Example 3: Variable Sparsity Kernel Learning}
Variable Sparsity Kernel Learning (VSKL) proposed by \cite{JMLR:Aflalo+etal:2011} 
divides the RKHSs into $M'$ groups $\{\calH_{j,k}\}_{k=1}^{M_j},~(j=1,\dots,M')$ %where $\sum_{j=1}^{M'} M_j = M$ 
and 
imposes
a mixed norm regularization $\|f\|_{\psi} = \|f\|_{(p,q)} = \left\{ \sum_{j=1}^{M'} (\sum_{k=1}^{M_j} \|f_{j,k}\hnorm{j,k}^p)^{\frac{q}{p}} \right\}^{\frac{1}{q}}$ 
where $1\leq p,~1\leq q$, and $f_{j,k}\in \calH_{j,k}$.
An advantageous point of VSKL is that by adjusting the parameters $p$ and $q$, 
various levels of sparsity can be introduced. %, that is, 
The parameters can control the level of sparsity {\it within} group and {\it between} groups.
This point is beneficial especially for multi-modal tasks like object categorization. 

\subsection{Notations and Assumptions}

Here, we prepare notations and assumptions that are used in the analysis.  %basic tools. 
Let $\calHtot = \calH_1 \oplus \dots \oplus \calH_M$. 
We utilize the same notation $f \in \calHtot$ indicating both the vector $(f_1,\dots,f_M)$ and the function $f = \sum_{m=1}^M f_m$ ($f_m \in \calH_m$).
This is a little abuse of notation because the decomposition $f = \sum_{m=1}^M f_m$ might not be unique as an element of $\LPi$. 
However this will not cause any confusion.

Throughout the paper, we assume the following technical conditions 
(see also \cite{JMLR:BachConsistency:2008}). 
\begin{Assumption}{\bf(Realizable Assumption)}\\ 
\label{ass:BasicAss}
%\begin{enumerate}
%\item[{\rm \Assump{\label{ass:truenoise}}}]
\noindent {\rm \Assump{\label{ass:truenoise}}}
There exists $\fstar = (\fstar_1,\dots,\fstar_M) \in \calHtot$
such that $\EE[Y|X] = \fstar(X) = \sum_{m=1}^M \fstar_m(X)$,
and the noise $\epsilon := Y - \fstar(X)$ 
%has a strictly positive variance;
%there exists $\sigma>0$ such that $\EE[\epsilon^2 | X] > \sigma^2 $ for all $X \in \calX$.
%We also assume that $\epsilon$ 
is bounded as $|\epsilon| \leq L$.
\end{Assumption} 
\begin{Assumption}{\bf(Kernel Assumption)}\\ 
\label{ass:BasicAss_kernel}
%\item[{\rm \Assump{\label{ass:kernelbound}}}]
\noindent {\rm \Assump{\label{ass:kernelbound}}}
For each $m=1,\dots,M$, $\calH_m$ is separable (with respect to the RKHS norm) and $\sup_{X\in \calX} |k_m(X,X)| \leq 1$.
%\item[$\mathrm{(A3)}$]
%There exist a real number $0 \leq q \leq 1$ and $\gstar_m \in \calH_m$ such that 
%\begin{equation}
%\fstar_m(x) = \int_{\calX}k_m^{(q/2)}(x,x')\gstar_m(x')\dd \Pi(x')
% \qquad(\forall m = 1,\dots,M), \label{eq:fstarSigmacond}
%\end{equation} 
%where $k_m^{(q/2)}(x,x')= \sum_{k=1}^{\infty} \mu_{k,m}^{q/2}
%  \phi_{k,m}(x) \phi_{k,m}(x')$. 
%This is equivalent to the following operator representation:
%$$
%\fstar_m = T_m^{\frac{q}{2}} \gstar_m.
%$$
%\end{enumerate}
\end{Assumption}
%\textbf{SUGIYAMA======Can you briefly explain the meaning of assumption (A1)?============}
The first assumption in (A1) ensures the model $\calHtot$ is correctly specified, 
and the technical assumption $|\epsilon| \leq L$ allows $\epsilon f$ to be Lipschitz continuous with respect to $f$.
%This could be reluxed to a situation of misspecified model and 
The noise boundedness can be relaxed to unbounded situation as in~\cite{arXiv:Raskutti+Martin:2010}
if we consider Gaussian noise, but
we don't pursue that direction for simplicity.
  
%It is well known that the assumption (A2) gives the following relation: % between the $\ell_\infty$-norm and the RKHS norm:
%\begin{align*}
%\|f_m\|_{\infty} \!\! \leq \! \sup_{x}\langle k_m(x,\cdot), f_m \rangle_{\calH_m}
%%&
%\!\! \leq \! \sup_{x} \| k_m(x,\cdot)\hnorm{m}\! \|f_m\hnorm{m} %\\
%%& 
%\!\! \leq \!  
%\sup_{x}  \sqrt{k_m(x,x)} \|f_m\hnorm{m} 
%\! \leq \! \|f_m\hnorm{m}.
%\end{align*}
%The assumption (A3) ensures that $\fstar$ is unique.

Let an integral operator $T_{k_m}:\LPi \to \LPi$ corresponding to a kernel function $k_m$ be 
$$
T_{k_m} f =  \int k_m(\cdot,x) f(x) \dd \Pi(x).
$$
It is known that this operator is compact, positive, and self-adjoint (see Theorem 4.27 of \cite{Book:Steinwart:2008}).
Thus it has at most countably many non-negative eigenvalues.
We denote by $\mu_{\ell,m}$ be the $\ell$-th largest eigenvalue (with possible multiplicity) of the integral operator $T_{k_m}$. 
By Theorem 4.27 of \cite{Book:Steinwart:2008}, the sum of $\mu_{\ell,m}$ is bounded ($\sum_{\ell} \mu_{\ell,m} < \infty$),
and thus $\mu_{\ell,m}$ decreases with order $\ell^{-1}$ ($\mu_{\ell,m} = o(\ell^{-1})$).
We further assume the sequence of the eigenvalues converges even faster to zero.

\begin{Assumption}{\bf (Spectral Assumption)}
\label{eq:specass}
There exist $0 < \sm < 1$ and $0 < c$ such that %the spectrum $\mu_{k,m}$ has the following decreasing exponent
\begin{flalign*}
%\label{eq:spetrumassump}
\text{\rm \Assump}&& %{\label{ass:spectral}} } &&
\mu_{\ell,m} \leq c \ell^{-\frac{1}{\sm}},~~~(\forall \ell \geq 1,~1\leq \forall m \leq M), && 
\end{flalign*}
where $\{\mu_{\ell,m}\}_{\ell=1}^\infty$ is the spectrum of the operator $T_{k_m}$ corresponding to the kernel $k_m$. % (see Eq.\eqref{eq:spectralRepre}).
\end{Assumption}
It was shown that the spectral assumption (A3) is equivalent to 
the classical covering number assumption~\citep{COLT:Steinwart+etal:2009}.
Recall that 
the $\epsilon$-covering number $N(\epsilon,\mathcal{B}_{\calH_m},\LPi)$ with respect to $\LPi$
is the minimal number of balls with radius $\epsilon$ needed to cover the unit ball $\mathcal{B}_{\calH_m}$ in $\calH_m$ \citep{Book:VanDerVaart:WeakConvergence}.
If the spectral assumption (A3) and the boundedness assumption (A\ref{ass:kernelbound}) holds, there exists a constant $C$ that
depends only on $s$ and $c$ such that 
\begin{align}
\label{eq:coveringcondition}
\textstyle
\log N(\varepsilon,\mathcal{B}_{\calH_m},\LPi) \leq C \varepsilon^{-2 \sm},
\end{align}
and the converse is also true (see \citet[Theorem 15]{COLT:Steinwart+etal:2009} and \cite{Book:Steinwart:2008} for details).
Therefore, 
if $\sm$ is large, 
the RKHSs are regarded as ``complex'',
and if $\sm$ is small, the RKHSs are ``simple''.

An important class of RKHSs where $\sm$ is known is Sobolev space.
(A\ref{eq:specass}) holds with $\sm=\frac{d}{2\alpha}$ for Sobolev space $W^{\alpha,2}(\calX)$ of $\alpha$-times continuously differentiability on the Euclidean ball $\calX$ of $\Real^d$ 
\citep{Book:Edmunds+Triebel:1996}.
%\citep[Theorem 2.7.1]{Book:VanDerVaart:WeakConvergence}.  
Moreover, for $\alpha$-times differentiable kernels on a closed Euclidean ball in $\Real^d$, 
(A\ref{eq:specass}) holds for $\sm=\frac{d}{2\alpha}$ \citep[Theorem 6.26]{Book:Steinwart:2008}. 
According to Theorem 7.34 of \cite{Book:Steinwart:2008}, %to \cite{JC:Zhou:2002}, 
for Gaussian kernels with compact support distribution, that holds for arbitrary small $0<\sm$.
The covering number of Gaussian kernels with {\it unbounded} support distribution is also described in Theorem 7.34 of \cite{Book:Steinwart:2008}.

%For a given set of indices $I \subseteq \{1,\dots, M \}$, % and $J = I^c$,
Let $\kmin$ be defined as follows:
\begin{align}
\label{eq:defkmin}
\kmin &
\textstyle 
:= \sup\left\{\kappa \geq 0 ~\Big |~ 
\kappa \leq  \frac{\|\sum_{m=1}^M f_m\|_{\LPi}^2}{\sum_{m=1}^M \|f_m\|_{\LPi}^2} ,~\forall f_m \in \calH_m~(m=1,\dots,M)\right\}. %,
\end{align}
$\kmin$ represents the correlation of RKHSs. 
We assume all RKHSs are not completely correlated to each other.
\begin{Assumption}{\bf (Incoherence Assumption)}
\label{ass:incoherence}
$\kappa_M$ is strictly bounded from below; there exists a constant $C_0>0$ such that 
\begin{flalign*}
%\rho(I) < 1.
%\frac{\kmax(\Istar)}{\kmin(\Istar)(1-\rho^2(\Istar))} < C_3.
\text{\rm \Assump} %{\label{ass:incoherence}} } 
 &&\textstyle 0 <C_0^{-1} < \kmin. && %<C_3^{-1}
%\label{eq:NoPerfectCorrelationOne}
\end{flalign*}
\end{Assumption}
This condition is motivated by the {\it incoherence condition} \citep{COLT:Koltchinskii:2008,AS:Meier+Geer+Buhlmann:2009}
considered in sparse MKL settings. This ensures the uniqueness of the decomposition $\fstar = \sum_{m=1}^M \fstar_m$ of the ground truth.
\cite{JMLR:BachConsistency:2008} also assumed this condition to show the consistency of $\ell_1$-MKL.

Finally we give a technical assumption with respect to $\infty$-norm.
\begin{Assumption}{\bf (Embedded Assumption)}
\label{ass:linfbound}
Under the Spectral Assumption, there exists a constant $C_1>0$ such that 
\begin{flalign*}
\text{\rm \Assump }
&&\textstyle \|f_m\|_{\infty} \leq C_1 \|f_m\hnorm{m}^{1-\sm} \|f_m\|_{\LPi}^{\sm}. &&
%\label{eq:linfbound}
\end{flalign*}
\end{Assumption}
This condition is met when the input distribution $\Pi$ has a density with respect
to the uniform distribution on $\calX$ that is bounded away from 0 and
the RKHSs are continuously embedded in a Sobolev space $W^{\alpha,2}(\calX)$ %Besov space $B_{2,1}^{\sm \alpha}(\calX)$ 
where $\sm=\frac{d}{2\alpha}$, 
$d$ is the dimension of the input space $\calX$ and 
$\alpha$ is the ``smoothness'' of the Sobolev space. %Besov space.
Many practically used kernels satisfy this condition (A5). 
For example, the RKHSs of Gaussian kernels can be embedded in all Sobolev spaces.
Therefore the condition (A5) seems rather common and practical.
More generally, there is a clear characterization of the condition (A5) in terms of {\it real interpolation of spaces}. 
%However we don't describe the details here. 
One can find 
detailed and formal discussions of interpolations in \cite{COLT:Steinwart+etal:2009}, and 
Proposition 2.10 of \cite{Book:Bennett+Sharpley:88} gives the necessary and sufficient condition for the assumption (A5). 

\begin{table}[t]
\centering
\caption{Summary of the constants we use in this article.}
\label{tab:constants}
\begin{tabular}{|c|l|}
\hline
$n$ & The number of samples.  \\ \hline
$M$ & The number of candidate kernels.  \\ \hline
$L$ & The bound of the noise (A\ref{ass:BasicAss_kernel}). \\ \hline 
$c$   & The coefficient for Spectral Assumption; see (A\ref{eq:specass}).\\ \hline 
$\sm$ & The decay rate of spectrum; see (A\ref{eq:specass}). \\ \hline 
$\kmin$ & The smallest eigenvalue of the design matrix; see \Eqref{eq:defkmin}. \\ \hline
$C_1$ & The coefficient for Embedded Assumption; see (A\ref{ass:linfbound}). \\ \hline 
%$R_p$ & The $\ell_p$-mixed-norm of the truth: $(\sum_{m=1}^M \|\fstar_m \hnorm{m}^p)^{\frac{1}{p}}$	. \\ \hline
\end{tabular}
\end{table}

Constants we use later are summarized in Table~\ref{tab:constants}.

\section{Convergence Rate of $\psi$-norm MKL}
Here we derive the learning rate of $\psi$-norm MKL in the most general setting. 
We suppose that the number of kernels $M$ can increase along with the number of samples $n$.
The motivation of our analysis is summarized as follows:
\begin{itemize}
\item Give a unifying framework to derive a sharp convergence rate of $\psi$-norm MKL.
\item (homogeneous complexity) Show the convergence rate of some examples using our general framework,
prove its minimax-optimality,
and show the optimality of $\ell_1$-regularization
under conditions that the complexities $\sm$ of all RKHSs are same.
\item (inhomogeneous complexity) Discuss how the dense type regularization outperforms sparse type regularization, when the complexities $\sm$ of all RKHSs are {\it not} uniformly same.  
\end{itemize}
%As seen later, the second motivation will reveal that there is a trade-off for the selection of $p$.

We define 
$$
\eta(t) := \eta_{n}(t) = \max(1,\sqrt{t},t/\sqrt{n}),
$$
for $t>0$.
For given positive reals $\{\cm\}_{m=1}^M$ and given $n$, we define $\alpha_1,\alpha_2,\beta_1,\beta_2$ as follows:  
\begin{align}
& \textstyle 
\alpha_1 := \alpha_1(\{\cm\}) = 3 \left( \sum\limits_{m=1}^M \frac{\cm^{-2\sm}}{n}\right)^{\frac{1}{2}},~~
\alpha_2 := \alpha_2(\{\cm\}) = 3  \left\| \left(\frac{\sm \cm^{1-\sm}}{\sqrt{n}}\right)_{m=1}^M \right\|_{\psi^*}, \notag \\
& \textstyle
\beta_1 := \beta_1(\{\cm\}) = \! 3 \left( \sum\limits_{m=1}^M \frac{\cm^{-\frac{2\sm(3-\sm)}{1+\sm}}}{n^{\frac{2}{1+\sm}}} \right)^{\frac{1}{2}}\!\!,~
\beta_2 := \beta_2(\{\cm\}) = \! 3 \left\| \left(\frac{\sm \cm^{\frac{(1-\sm)^2}{1+\sm}}}{n^{\frac{1}{1+\sm}}} \right)_{m=1}^M \right\|_{\psi^*},
\label{eq:defalphabeta}
\end{align}
(note that $\alpha_1,\alpha_2,\beta_1,\beta_2$ implicitly depends on the reals $\{\cm\}_{m=1}^M$).
Then the following theorem gives the general form of the learning rate of $\psi$-norm MKL. 

\begin{Theorem}
\label{th:convergencerateofLpMKL}
Suppose Assumptions 1-5 are satisfied. 
Let $\{\cm\}_{m=1}^M$ be arbitrary positive reals that can depend on $n$, and 
assume  
$\lambdaone \geq \left(\frac{\alpha_2}{\alpha_1}\right)^2 + \left(\frac{\beta_2}{\beta_1}\right)^2$.
Then 
there exists a constant $\phi$
depending only on $\{s_m\}_{m=1}^M$, $c$, $C_1$, $L$
such that 
for all $n$ and $t'$ that satisfy  $\frac{\log(M)}{\sqrt{n}}\leq 1$ and 
$\frac{4\phi \sqrt{n}}{\kminrho} \max\{\alpha_1^2,\beta_1^2,\frac{M\log(M)}{n} \}\eta(t') \leq \frac{1}{12}$
and for all $t \geq 1$,
we have 
\begin{align}
&\|\fhat - \fstar\|_{\LPi}^2 
\leq 
\frac{24 \eta(t)^2 \phi^2}{\kminrho} \left(\alpha_1^2 + \beta_1^2 + \frac{M\log(M)}{n} \right)
+
4 \lambdaone  \|\fstar \|_{\psi}^2,
\label{eq:ConvRateInMainTh_withlambdaone}
\end{align}
%\begin{align}
%\|\fhat - \fstar \|_{\LPi}^2 \leq C_1 M \max \left( n^{-\frac{1+q}{1+q+s}}, \frac{\log(M+1)}{n} \right) \eta(t), 
%\end{align}
with probability $1- \exp(- t) - \exp(-t')$.
In particular, for 
$\lambdaone = \left(\frac{\alpha_2}{\alpha_1}\right)^2 + \left(\frac{\beta_2}{\beta_1}\right)^2$, we have
\begin{align}
&\|\fhat - \fstar\|_{\LPi}^2 
\leq 
\frac{24 \eta(t)^2 \phi^2}{\kminrho} \left(\alpha_1^2 + \beta_1^2 + \frac{M\log(M)}{n} \right)
+
4 \left[ \left(\frac{\alpha_2}{\alpha_1}\right)^2 + \left(\frac{\beta_2}{\beta_1}\right)^2\right]  \|\fstar \|_{\psi}^2.
\label{eq:ConvRateInMainTh}
\end{align}

\end{Theorem}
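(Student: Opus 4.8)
The plan is to run a localized (peeling) analysis of the regularized empirical risk minimizer, in the spirit of the local Rademacher complexity method. First I would write down the \emph{basic inequality}: since $\fhat$ minimizes the objective in \eqref{eq:primalLp} and, by Assumption (A1), $Y=\fstar(X)+\epsilon$ with $\EE[Y|X]=\fstar(X)$, comparing the objective value at $\fhat$ with that at $\fstar$ and expanding the squared loss gives
$$\|\fhat-\fstar\|_n^2 + \lambdaone\|\fhat\|_\psi^2 \le \frac{2}{n}\sum_{i=1}^n \epsilon_i(\fhat-\fstar)(x_i) + \lambdaone\|\fstar\|_\psi^2,$$
where $\|\cdot\|_n$ is the empirical $L_2$ norm. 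Everything then reduces to controlling the noise (empirical process) term on the right and converting the empirical norm on the left into the population norm $\|\cdot\|_{\LPi}$.

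The heart of the argument --- and the step I expect to be the main obstacle --- is a uniform bound on the empirical process $\frac{2}{n}\sum_i \epsilon_i g(x_i)$ for $g=\fhat-\fstar$, localized so that it scales with both $\|g\|_{\LPi}$ and the regularizer $\|\fhat\|_\psi$. I would symmetrize, apply the contraction principle (using $|\epsilon|\le L$), and control the resulting Rademacher complexity by chaining against the covering-number bound \eqref{eq:coveringcondition}, carried out \emph{per RKHS component} $\calH_m$ with the free scale $\cm$ playing the role of the truncation radius at which each component's entropy integral is split. Aggregating the per-component contributions across $m$ by H\"older's inequality with the dual norm $\psi^*$ is what produces $\alpha_1,\alpha_2$ (first order); feeding the $L_\infty$ estimate of Assumption (A5) into the variance/Bernstein terms of Talagrand's concentration inequality then produces the higher-order corrections $\beta_1,\beta_2$, whose unusual exponents come precisely from balancing the $\calH_m$-norm against the $\LPi$-norm in (A5). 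The factor $\eta(t)=\max(1,\sqrt t,t/\sqrt n)$ collects the sub-Gaussian and sub-exponential deviation terms from Talagrand. The outcome is a bound of the schematic form $\frac{2}{n}\sum_i\epsilon_i g(x_i)\lesssim \eta(t)\big[(\alpha_1+\beta_1)\sqrt{\sum_m\|g_m\|_{\LPi}^2}+(\alpha_2+\beta_2)\|\fhat-\fstar\|_\psi\big]+\frac{M\log M}{n}$, the last term being the union-bound cost over the $M$ kernels.

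Next I would relate the empirical and population norms. A second, identically localized application of Talagrand's inequality shows that uniformly over the relevant class $\|g\|_n^2\ge \tfrac12\|g\|_{\LPi}^2-(\text{lower order})$; the side condition $\frac{4\phi\sqrt n}{\kmin}\max\{\alpha_1^2,\beta_1^2,\frac{M\log M}{n}\}\eta(t')\le\frac{1}{12}$ is exactly what forces the localization radius into the regime where this equivalence is valid, and it accounts for the probability term $\exp(-t')$. The incoherence Assumption (A4) then converts the componentwise sum via \eqref{eq:defkmin} into $\sum_m\|g_m\|_{\LPi}^2\le \kmin^{-1}\|g\|_{\LPi}^2$, which explains the $\kmin^{-1}$ factor appearing in front of the stated bound and matches the $\LPi$-norm recovered on the left-hand side.

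Finally I would assemble a quadratic inequality in $x:=\|\fhat-\fstar\|_{\LPi}$. The $\|g\|_{\LPi}$-proportional part of the empirical process is split off by Young's inequality, absorbing $\tfrac14\|g\|_{\LPi}^2$ into the left-hand side and leaving a term of order $\tfrac{1}{\kmin}(\alpha_1^2+\beta_1^2)$. The $\psi$-norm-proportional part is absorbed into the regularizer $\lambdaone\|\fhat\|_\psi^2$: this is precisely where the hypothesis $\lambdaone\ge(\alpha_2/\alpha_1)^2+(\beta_2/\beta_1)^2$ enters, making $\lambdaone\|\fhat\|_\psi^2$ dominate the $(\alpha_2+\beta_2)\|\fhat\|_\psi$ contribution up to the same $\alpha_1^2+\beta_1^2$ order (the triangle inequality peels off a $\|\fstar\|_\psi$ piece that merges with the $\lambdaone\|\fstar\|_\psi^2$ term from the basic inequality). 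Solving the resulting inequality $x^2\le a\,\eta(t)\,x+b$ and retaining $\lambdaone\|\fstar\|_\psi^2$ yields \eqref{eq:ConvRateInMainTh_withlambdaone}, and specializing $\lambdaone=(\alpha_2/\alpha_1)^2+(\beta_2/\beta_1)^2$ gives \eqref{eq:ConvRateInMainTh}.
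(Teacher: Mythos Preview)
Your proposal is correct and follows essentially the same route as the paper: the basic inequality, a per-component ratio-type (peeling) bound on the noise process aggregated via Cauchy--Schwarz/H\"older against the dual $\psi^*$-norm, a second concentration step for the empirical--population norm equivalence (this is exactly where the side condition on $t'$ enters), the incoherence assumption to pass from $\sum_m\|g_m\|_{\LPi}^2$ to $\kmin^{-1}\|g\|_{\LPi}^2$, Young's inequality to decouple, and finally the hypothesis on $\lambdaone$ together with $\|\fhat-\fstar\|_\psi^2\le 2(\|\fhat\|_\psi^2+\|\fstar\|_\psi^2)$ to absorb the $\psi$-norm contribution. One small inaccuracy: the $\beta_1,\beta_2$ terms do not arise from the Bernstein variance term of Talagrand but from the \emph{second regime} of the per-component local Rademacher bound itself (the $n^{-1/(1+\sm)}$ branch of the entropy-integral estimate), in which the sup-norm bound from (A5) is substituted for $\|f_m\|_\infty$; Talagrand is then applied on top of this Rademacher estimate to get the deviation factor $\eta(t)$.
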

The proof will be given in Appendix \ref{sec:ProofMainTh}.
The statement of Theorem \ref{th:convergencerateofLpMKL} itself is complicated.
Thus we will show later concrete learning rates on some examples such as $\ell_p$-MKL.
The convergence rate \eqref{eq:ConvRateInMainTh} depends on the positive reals $\{\cm\}_{m=1}^M$,
but the choice of $\{\cm\}_{m=1}^M$ are arbitrary.
Thus by minimizing the right hand side of \Eqref{eq:ConvRateInMainTh}, % Theorem \ref{th:convergencerateofLpMKL} indicates that 
we obtain tight convergence bound as follows:
\begin{align}
&\|\fhat - \fstar\|_{\LPi}^2 
\! = \! \mathcal{O}_p\Bigg(\!
\min_{\substack{ \{\cm\}_{m=1}^M: \\ \cm > 0 }}\! \Bigg\{
\alpha_1^2 + \beta_1^2 + 
\left[ \! \left(\frac{\alpha_2}{\alpha_1}\right)^2 \! + \left(\frac{\beta_2}{\beta_1}\right)^2\right]  \|\fstar \|_{\psi}^2
+
%\underbrace{\alpha_1^2 + \beta_1^2}_{(a)} + 
%\underbrace{\left[ \left(\frac{\alpha_2}{\alpha_1}\right)^2 + \left(\frac{\beta_2}{\beta_1}\right)^2\right]  \|\fstar \|_{\psi}^2}_{(b)} +
\frac{M\log(M)}{n}\Bigg\} \Bigg).
\label{eq:simpleConveRatemin}
\end{align}
There is a trade-off between the first two terms $(a) := \alpha_1^2 + \beta_1^2$ and the third term
$(b) := \left[ \left(\frac{\alpha_2}{\alpha_1}\right)^2 + \left(\frac{\beta_2}{\beta_1}\right)^2\right]  \|\fstar \|_{\psi}^2$,
that is, if we take $\{\cm\}_m$ large, then the term (a) becomes small and the term (b) becomes large,
on the other hand, if we take $\{\cm\}_m$ small, then it results in large (a) and small (b).
Therefore we need to balance the two terms (a) and (b) to obtain the minimum in \Eqref{eq:simpleConveRatemin}.

We discuss the obtained learning rate in two situations, (i) {\it homogeneous complexity} situation, and (ii) {\it inhomogeneous complexity} situation: \\
%\begin{enumerate}
%\item[(i)]
~~~~(i) (homogeneous)  All $\sm$s are same: there exists $0<s<1$ such that $\sm = s~(\forall m)$ (Sec.\ref{sec:homogeneous}). \\
%\item[(ii)]
~~~~(ii) (inhomogeneous) All $\sm$s are {\it not} same: there exist $m,m'$ such that $\sm \neq \smm{m'}$ (Sec.\ref{sec:inhomogeneous}).
%\end{enumerate}

\section{Analysis on Homogeneous Settings}
\label{sec:homogeneous}
Here we assume all $\sm$s are same, say $\sm = s$ for all $m$ (homogeneous setting). 
In this section, we give a simple upper bound of the minimum of the bound \eqref{eq:simpleConveRatemin} (Sec.\ref{sec:simplerateHom}),
derive concrete convergence rates of some examples using the simple upper bound (Sec.\ref{sec:RatesOfExamples}) 
and show that the simple upper bound achieves the minimax learning rate of $\psi$-norm ball if $\psi$-norm is isotropic (Sec.\ref{sec:MinimaxRate}).
Finally we discuss the optimal regularization (Sec.\ref{sec:OptimalReg}).
In Sec.\ref{sec:RatesOfExamples}, we also discuss the difference between our bound of \ellp-MKL and existing bounds.

\subsection{Simplification of Convergence Rate} %	 in Homogeneous Settings}
\label{sec:simplerateHom}
If we restrict the situation as all $\cm$s are same ($\cm = \cmm{}~(\forall m)$ for some $\cmm{}$),
then the minimization in \Eqref{eq:simpleConveRatemin} can be easily carried out as in the following lemma.
Let $\boldone$ be the $M$-dimensional vector each element of which is $1$: $\boldone := (1,\dots,1)^\top \in \Real^M$,
and $\|\cdot\|_{\psi^*}$ be the dual norm of the $\psi$-norm\footnote{The dual of the norm $\|\cdot\|_{\psi}$
is defined as $\|\boldb\|_{\psi^*}:= \sup_{\bolda}\{\boldb^\top \bolda \mid \|\bolda\|_{\psi}\leq 1\}$.}.

\begin{Lemma}
\label{lem:smcmuniformbound}
%When $\sm = s~(\forall m)$ with some $0<s<1$ and $n \geq  (\|\boldone\|_{\psi^*} \|\fstar\|_{\psi}/M)^{\frac{4s}{1-s}}$, 
%the bound \eqref{eq:simpleConveRatemin} indicates that 
%\begin{align}
%\|\fhat - \fstar\|_{\LPi}^2 = \mathcal{O}_p\Bigg(&
%M^{1-\frac{2s}{1+s}} n^{-\frac{1}{1+s}} (\|\boldone\|_{\psi^*} \|\fstar\|_{\psi})^{\frac{2s}{1+s}}
%+ \frac{M\log(M)}{n}\Bigg).
%\label{eq:convergencerateSimple}
%\end{align}

%More precisely,
%under the same condition,
Suppose $\sm = s~(\forall m)$ with some $0<s<1$,   
and set
$\lambdaone 
= 18 M^{\frac{1-\smm{}}{1+\smm{}}} n^{-\frac{1}{1+\smm{}}}\|\boldone\|_{\psi^*}^{\frac{2\smm{}}{1+\smm{}}} \|f^*\|_{\psi}^{-\frac{2}{1+\smm{}}}$,
then 
for all $n$ and $t'$ that satisfy  
$\frac{4\phi}{\kminrho} 
\left\{
9  \left(\frac{M}{\sqrt{n}}\right)^{\frac{1-\smm{}}{1+\smm{}}}
(\|\boldone\|_{\psi^*} \|f^*\|_{\psi})^{\frac{2\smm{}}{1+\smm{}}} \vee \frac{M\log(M)}{\sqrt{n}}
 \right\}
\eta(t') \leq \frac{1}{12}$ and 
$n \geq  (\|\boldone\|_{\psi^*} \|\fstar\|_{\psi}/M)^{\frac{4s}{1-s}}$,
and for all $t \geq 1$,
we have 
\begin{align}
\|\fhat - \fstar\|_{\LPi}^2 
\leq 
C \eta(t)^2  \left\{
M^{1-\frac{2\smm{}}{1+\smm{}}} n^{-\frac{1}{1+\smm{}}}(\|\boldone\|_{\psi^*} \|f^*\|_{\psi})^{\frac{2\smm{}}{1+\smm{}}}
 + \frac{M\log(M)}{n} \right\}, \notag
\end{align}
with probability $1- \exp(- t) - \exp(-t')$ where $C$ is a constant depending on $\phi$ and $\kminrho$.
In particular we have
\begin{align}
\|\fhat - \fstar\|_{\LPi}^2 = \mathcal{O}_p\Bigg\{&
M^{1-\frac{2s}{1+s}} n^{-\frac{1}{1+s}} (\|\boldone\|_{\psi^*} \|\fstar\|_{\psi})^{\frac{2s}{1+s}}
+ \frac{M\log(M)}{n}\Bigg\}.
\label{eq:convergenceRateSimple}
\end{align}
\end{Lemma}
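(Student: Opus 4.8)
The plan is to derive Lemma~\ref{lem:smcmuniformbound} as a direct specialization of Theorem~\ref{th:convergencerateofLpMKL}, exploiting the freedom to choose the positive reals $\{\cm\}_{m=1}^M$. Since Theorem~\ref{th:convergencerateofLpMKL} holds for \emph{arbitrary} $\{\cm\}$, I would first restrict to the uniform choice $\cm = r$ for all $m$ (together with $\sm = s$), reducing the problem to optimizing over the single scalar $r>0$. There is no new probabilistic content here; the work is to substitute, simplify, and balance.

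First I would compute $\alpha_1,\alpha_2,\beta_1,\beta_2$ under this restriction. Because every coordinate of the vectors appearing in $\alpha_2,\beta_2$ becomes identical, the dual norms collapse to a scalar multiple of $\|\boldone\|_{\psi^*}$, giving
\begin{align*}
\alpha_1 &= 3 M^{1/2} r^{-s} n^{-1/2}, & \alpha_2 &= 3 s r^{1-s} n^{-1/2}\|\boldone\|_{\psi^*}, \\
\beta_1 &= 3 M^{1/2} r^{-\frac{s(3-s)}{1+s}} n^{-\frac{1}{1+s}}, & \beta_2 &= 3 s r^{\frac{(1-s)^2}{1+s}} n^{-\frac{1}{1+s}}\|\boldone\|_{\psi^*}.
\end{align*}
The crucial observation is the algebraic identity $\frac{(1-s)^2}{1+s}+\frac{s(3-s)}{1+s}=1$, which forces the two ratios to coincide:
\[
\left(\frac{\alpha_2}{\alpha_1}\right)^2 = \left(\frac{\beta_2}{\beta_1}\right)^2 = \frac{s^2 r^2 \|\boldone\|_{\psi^*}^2}{M}.
\]
Hence the admissibility condition $\lambdaone \geq (\alpha_2/\alpha_1)^2+(\beta_2/\beta_1)^2$ of Theorem~\ref{th:convergencerateofLpMKL} reads $\lambdaone \geq 2 s^2 r^2\|\boldone\|_{\psi^*}^2/M$, and I would choose $r$ so that equality holds at the prescribed value of $\lambdaone$; solving for $r$ gives $r = \frac{3}{s} M^{\frac{1}{1+s}} n^{-\frac{1}{2(1+s)}}(\|\boldone\|_{\psi^*}\|\fstar\|_\psi)^{-\frac{1}{1+s}}$.

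Next I would substitute this $r$ back into \eqref{eq:ConvRateInMainTh}. A short computation shows that both the variance-type term $\alpha_1^2$ and the bias-type term $[(\alpha_2/\alpha_1)^2+(\beta_2/\beta_1)^2]\|\fstar\|_\psi^2$ evaluate, up to constants depending only on $s$, to $M^{1-\frac{2s}{1+s}} n^{-\frac{1}{1+s}}(\|\boldone\|_{\psi^*}\|\fstar\|_\psi)^{\frac{2s}{1+s}}$; that is, the two terms are automatically balanced by this choice of $r$. It then remains to dispose of $\beta_1^2$. Using $\frac{\alpha_1^2}{\beta_1^2} = r^{\frac{4s(1-s)}{1+s}} n^{\frac{1-s}{1+s}}$ and inserting the optimal $r$, the ratio reduces to a positive constant times $\big(M^{4s} n^{1-s}(\|\boldone\|_{\psi^*}\|\fstar\|_\psi)^{-4s}\big)^{\frac{1-s}{(1+s)^2}}$, which is at least $1$ exactly when $n \geq (\|\boldone\|_{\psi^*}\|\fstar\|_\psi/M)^{\frac{4s}{1-s}}$ --- precisely the sample-size hypothesis of the lemma. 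Thus $\beta_1^2 \leq \alpha_1^2$ and may be absorbed into the constant $C$.

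Finally I would verify that the probabilistic hypothesis of Theorem~\ref{th:convergencerateofLpMKL}, namely $\frac{4\phi\sqrt{n}}{\kmin}\max\{\alpha_1^2,\beta_1^2,M\log(M)/n\}\,\eta(t')\leq \frac{1}{12}$, is implied by the stated condition of the lemma: since $\sqrt{n}\,\alpha_1^2$ is a constant multiple of $(M/\sqrt{n})^{\frac{1-s}{1+s}}(\|\boldone\|_{\psi^*}\|\fstar\|_\psi)^{\frac{2s}{1+s}}$, that $\sqrt{n}\cdot M\log(M)/n = M\log(M)/\sqrt{n}$, and that $\beta_1^2\leq\alpha_1^2$, the hypothesis on $\eta(t')$ translates directly into the displayed form. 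Collecting the pieces yields the high-probability bound, and the $\mathcal{O}_p$ statement \eqref{eq:convergenceRateSimple} follows by fixing $t$ and $t'$ to absolute constants so that $\eta(t)^2$ is a constant and the failure probability is negligible. I expect the main burden to be purely the exponent bookkeeping --- in particular verifying the identity that equalizes the two ratios and confirming that the domination $\beta_1^2 \le \alpha_1^2$ holds under \emph{exactly} the stated condition on $n$ --- rather than any additional analytic difficulty beyond Theorem~\ref{th:convergencerateofLpMKL}.
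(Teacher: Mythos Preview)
Your proposal is correct and follows essentially the same approach as the paper's proof: both restrict to the uniform choice $\cm=r$, exploit the identity that makes $(\alpha_2/\alpha_1)^2=(\beta_2/\beta_1)^2=s^2r^2\|\boldone\|_{\psi^*}^2/M$, pick $r$ to balance $\alpha_1^2$ against the bias term (equivalently, to make $\lambdaone$ meet its lower bound), and then use the sample-size condition $n\geq(\|\boldone\|_{\psi^*}\|\fstar\|_\psi/M)^{4s/(1-s)}$ to absorb $\beta_1^2$. The only cosmetic difference is that the paper separately balances $\beta_1^2$ against the bias term with a second value of $r$, obtains a second candidate rate, and then shows that this second rate is dominated by the first under the sample-size condition---whereas you more directly verify $\beta_1^2\leq\alpha_1^2$ at the single chosen $r$; the two arguments are equivalent and yield identical constants up to factors depending only on $s$.
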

The proof is given in Appendix \ref{sec:smcmuniformboundproof}.
Lemma \ref{lem:smcmuniformbound} is derived by assuming $\cm = \cmm{}~(\forall m)$, which might make the bound loose.
However, when the norm $\|\cdot\|_{\psi}$ is {\it isotropic} (whose definition will appear later), 
that restriction ($\cm = \cmm{}~(\forall m)$) does not make the bound loose,
that is, the upper bound obtained in Lemma \ref{lem:smcmuniformbound} is tight and achieves the minimax optimal rate 
(the minimax optimal rate is the one that cannot be improved by any estimator).
In the following, we investigate the general result of Lemma \ref{lem:smcmuniformbound} through some important examples.

\subsection{Convergence Rate of Some Examples} % in Homogeneous Settings}
\label{sec:RatesOfExamples}
\subsubsection{Convergence Rate of $\ell_p$-MKL}
Here we derive the convergence rate of $\ell_p$-MKL ($1 \leq p \leq \infty$)
where 
$
\|f\|_{\psi} = \sum_{m=1}^M (\|f_m \hnorm{m}^p)^{\frac{1}{p}}
$
(for $p=\infty$, it is defined as $\max_m  \|f_m \hnorm{m}$).
It is well known that the dual norm of $\ell_p$-norm is given as $\ell_q$-norm where $q$ is the real satisfying $\frac{1}{p} + \frac{1}{q}=1$.
%This is Orlicz norm with respect to the convex function $\Psi(v) = \frac{1}{p} v^p$.
%Then by simple calculation we can see that $\Phi(u) = \frac{1}{q} u^q$ where $q$ is the {\it conjugate} of $p$ such that $1/q + 1/p=1$.
%%Note that by the definition of $\Phi$, we obtain the standard Young's inequality $uv \leq \Psi(v) + \Phi(u) = \frac{1}{p} v^p + \frac{1}{q} v^q$.
%By the definition of Orlicz norm, we obtain that 
%\begin{align}
%\textstyle \|(\|f_m \hnorm{m})_{m=1}^M\|_{\psi} = q^{\frac{1}{q}}(\sum_{m=1}^M \|f_m \hnorm{m}^p)^{\frac{1}{p}},
%\label{eq:OrliczLpEquivalence}
%\end{align}
%(the proof of which is given in Appendix \ref{sec:OrliczLpEquiv}).
%%$\|(\|f_m \hnorm{m})_{m=1}^M\|_{\psi} = \sup_{b_m \geq 0}\{ \sum_{m=1}^M b_m \|f_m \hnorm{m} \mid \frac{1}{q}\sum_{m=1}^M b_m^q \leq 1\}
%%=\sup_{b_m \geq 0}\{ (\sum_{m=1}^M b_m^q)^{\frac{1}{q}} (\sum_{m=1}^M \|f_m \hnorm{m}^p)^{\frac{1}{p}} \mid \frac{1}{q}\sum_{m=1}^M b_m^q \leq 1\} 
%%= q^{\frac{1}{q}}(\sum_{m=1}^M \|f_m \hnorm{m}^p)^{\frac{1}{p}}$ where we used H{\"o}lder's inequality in the second equality.
%Therefore Orlicz norm with respect to $\Phi(u) = \frac{1}{q} u^q$ is equivalent to $\ell_p$-norm except the constant $q^{\frac{1}{q}}$
%\footnote{One can show that $1\leq q^{\frac{1}{q}} \leq 2$.}.
For notational simplicity, let $R_p := \left( \sum_{m=1}^M \|\fstar_m \hnorm{m}^p \right)^{\frac{1}{p}}$.
Then substituting $\|\fstar\|_{\psi} = R_p$ and 
$\|\boldone\|_{\psi^*} = \|\boldone\|_{\ell_q} = M^{\frac{1}{q}} = M^{1 - \frac{1}{p}}$
%$M \Phi^{-1}(1/M) = M \left(\frac{q}{M} \right)^{\frac{1}{q}} \simeq M^{1-\frac{1}{q}} = M^{\frac{1}{p}}$
into the bound \eqref{eq:convergenceRateSimple},
%Suppose that $n$ is sufficiently large compared with $M$ and $R_p$ 
%($n\geq M^{\frac{2}{p}} R_p^{-2} (\log M)^{\frac{1+s}{s}}$ and $n \geq (R_p/M^{\frac{1}{p}})^{\frac{4s}{1-s}}$). 
%Then the regularization parameter $\lambdaone$ that achieves the minimum of the RHS of the bound \eqref{eq:theMainBound} is given by 
%$$
%\lambdaone = n^{-\frac{1}{1+s}} M^{1-\frac{2s}{p(1+s)}} R_p^{-\frac{2}{1+s}},
%$$
%up to constant.
%Then the convergence rate of $\|\fhat - \fstar\|_{\LPi}^2 $ becomes 
the learning rate of $\ell_p$-MKL is given as 
\begin{align}
\|\fhat - \fstar\|_{\LPi}^2 = &\mathcal{O}_p\Big( n^{-\frac{1}{1+s}} M^{1-\frac{2s}{p(1+s)}} R_p^{\frac{2s}{1+s}} + \frac{M\log(M)}{n} %\notag \\ &
%+ n^{-\frac{1}{1+s} - \frac{(s-1)^2}{(1+s)^2}} M^{1- \frac{2s(3-s)}{p(1+s)^2}} R_p^{\frac{2s(3-s)}{(1+s)^2}} 
\Big). 
\label{eq:convergenceRateComplex}
\end{align}
%Since  
%$n^{-\frac{1}{1+s} - \frac{(s-1)^2}{(1+s)^2}} < n^{-\frac{1}{1+s}}$,
If we further assume $n$ is sufficiently large such that 
\begin{equation}
\label{eq:nboundInOurs}
n\geq M^{\frac{2}{p}} R_p^{-2} (\log M)^{\frac{1+s}{s}},
\end{equation} 
%Under the condition $n\geq M^{\frac{2}{p}} R_p^{-2} (\log M)^{\frac{1+s}{s}}\vee (R_p/M^{\frac{1}{p}})^{\frac{4s}{1-s}}$, 
then the leading term is the first term,
and thus we have %if $n$ is sufficiently large compared with $M$, we have  %and the second terms:
\begin{equation}
\label{eq:convergenceRateSimple_Simple}
\textstyle
\|\fhat - \fstar\|_{\LPi}^2 = \mathcal{O}_p \left(n^{-\frac{1}{1+s}} M^{1-\frac{2s}{p(1+s)}} R_p^{\frac{2s}{1+s}}\right).
\end{equation}
Note that as the complexity $s$ of RKHSs becomes small the convergence rate becomes fast.  
It is known that $n^{-\frac{1}{1+s}}$ is the minimax optimal learning rate for single kernel learning.
The derived rate of \ellp-MKL is obtained by multiplying a coefficient depending on $M$ and $R_p$ to the optimal rate of single kernel learning.
%This is intuitive. 
To investigate the dependency of $R_p$ to the learning rate, let us consider two extreme settings, i.e., 
sparse setting $(\|\fstar_m\hnorm{m})_{m=1}^M = (1,0,\dots,0)$ and dense setting $(\|\fstar_m\hnorm{m})_{m=1}^M = (1,\dots,1)$
as in \cite{JMLR:lp:Marius+etal:2011}.
\begin{itemize}
\item $(\|\fstar_m\hnorm{m})_{m=1}^M = (1,0,\dots,0)$: $R_p=1$ for all $p$. 
Therefore the convergence rate $n^{-\frac{1}{1+s}} M^{1-\frac{2s}{p(1+s)}}$ is fast for small $p$ and the minimum is achieved at $p=1$.
This means that $\ell_1$ regularization is preferred for sparse truth.  
\item $(\|\fstar_m\hnorm{m})_{m=1}^M = (1,\dots,1)$: $R_p = M^{\frac{1}{p}}$, thus the convergence rate is $M n^{-\frac{1}{1+s}}$ for all $p$. 
Interestingly for dense ground truth, there is no dependency of the convergence rate on the parameter $p$
(later we will show that this is not the case in inhomogeneous setting (Sec.\ref{sec:inhomogeneous})). 
That is, the convergence rate is $M$ times the optimal learning rate of single kernel learning ($n^{-\frac{1}{1+s}}$) for all $p$.
This means that for the dense settings, the complexity of solving MKL problem is equivalent to that of solving $M$ single kernel learning problems.
\end{itemize}

% Here the relation between the convergence rate and the parameter $p$ becomes clear. 
% One important observation is that 
% as $p$ increases, $M^{1-\frac{2s}{p(1+s)}}$ increaes and $R_p^{\frac{2s}{1+s}}$ decreases,
% and on the other hand as $p$ decreases, $M^{1-\frac{2s}{p(1+s)}}$ decreases and $R_p^{\frac{2s}{1+s}}$ increases.
% Thus the optimal regularization parameter $p$ is determined by the trade-off between 
% $M^{1-\frac{2s}{p(1+s)}}$ and $R_p^{\frac{2s}{1+s}}$.
% In particular, if $\{\|\fstar_m\hnorm{m}\}_{m=1}^M$ is sparse (in a sence that many components $\|\fstar_m\hnorm{m}$ is close to 0), 
% the difference of $R_p$ for large $p$ and small $p$ is milder with respect to $p$ than that of a dense one.
% This implies that if the truth $\fstar$ is sparse, a regularization with small $p$ is prefered, and 
% on the other hand, if that is dense, large $p$ is prefered.
% This observation seems quite natural since smaller $p$ leads the estimator $\fhat$ to be sparser.

\paragraph{Comparison with Existing Bounds}
%\subsubsubsection{Comparison with Existing Bounds}

Here we compare the bound for $\ell_p$-MKL we derived above with the existing bounds. 
Let $\calHlp(R_p)$ be the $\ell_p$-mixed norm ball with radius $R_p$:
$
\textstyle \calHlp(R_p)  := \{f = \sum_{m=1}^M f_m \mid ( \sum_{m=1}^M \|f_m\hnorm{m}^p )^{\frac{1}{p}} \leq R_p \}.
$
There are two types of convergence rates: global bound and localized bound.
%The bound by \cite{ICML:Cortes+etal:gbound:2010} is given with respect to hinge loss.
%Roughly speaking, 

~

\noindent 
{\bf (comparison with existing global bound)}
\cite{ICML:Cortes+etal:gbound:2010,ECML:Marius+etal:2010,JMLR:lp:Marius+etal:2011} gave ``global'' type bounds for $\ell_p$-MKL as %their bounds are given as
\begin{align}
\textstyle R(f) \leq \widehat{R}(f) + C 
\begin{cases} \sqrt{\frac{\log(M)}{n}}R_p  &(p=1),\\ 
\frac{M^{1-\frac{1}{p}}}{\sqrt{n}}R_p& (p>1), \end{cases}~~~
(\text{for all $f \in \calH_{\ell_p}(R_p)$}),  
\label{eq:CortesBound}
\end{align}
where $R(f)$ and $\widehat{R}(f)$ is the population risk and the empirical risk.
The bounds by \cite{ICML:Cortes+etal:gbound:2010} and \cite{JMLR:lp:Marius+etal:2011} are restricted to the situation $1\leq p \leq 2$.
%because their analysis is based on the kernel weight constraint formulation.  %\eqref{eq:constraintLp}.
On the other hand, our analysis and that of \cite{ECML:Marius+etal:2010} covers all $p \geq 1$.

Since our bound is specialized to the regularized risk minimizer $\hat{f}$ defined at \Eqref{eq:primalLp}
while the existing bound \eqref{eq:CortesBound} is applicable to all $f \in \calH_{\ell_p}(R_p)$,
our bound is sharper than theirs for sufficiently large $n$. 
To see this, suppose that 
\begin{equation}
\label{eq:nboundInGlobalVSLocal}
n \geq
\begin{cases}  M^{2} R_1^{-2} (\log M)^{-\frac{1+s}{1-s}}  &(p=1),\\ 
 M^{\frac{2}{p}} R_p^{-2} & (p>1), \end{cases}~~~
\end{equation}
then we have $n^{-\frac{1}{1+s}} M^{1-\frac{2s}{p(1+s)}} R_p^{\frac{2s}{1+s}} \leq n^{-\frac{1}{2}} (M^{1-\frac{1}{p}} \vee \log(M))  R_p$ 
and hence our localized bound is sharper than the global one.
Interestingly, the range of $n$ presented in \Eqref{eq:nboundInGlobalVSLocal} where the localized bound exceeds the global bound 
is same (up to $\log M$ term) 
as the range presented in \Eqref{eq:nboundInOurs} ($n\geq M^{\frac{2}{p}} R_p^{-2} (\log M)^{\frac{1+s}{s}}$)
where the first term in our bound \eqref{eq:convergenceRateComplex} dominates its second term
so that the simplified bound \eqref{eq:convergenceRateSimple_Simple} holds. 
That means that, at the ``phase transition point'' from global to localized bound, 
the first informative term in our bound becomes the leading term. 
%Actually we will show that this term is minimax optimal.
%our bound also changes to the simple bound \eqref{eq:convergenceRateSimple}.

% $1\leq p \leq 2$ and $n^{-\frac{1}{2}} M^{1-\frac{1}{p}} \leq 1$ (which means the bound \eqref{eq:CortesBound} makes sense), 
%then we have $n^{-\frac{1}{1+s}} M^{1-\frac{2s}{p(1+s)}} \leq n^{-\frac{1}{2}} M^{1-\frac{1}{p}}$.  %for $1\leq p \leq 2$. 
%For the situation of $p \geq 2$, we have also $n^{-\frac{1}{1+s}} M^{1-\frac{2s}{p(1+s)}} \leq n^{-\frac{1}{2}} M^{1-\frac{1}{p}}$ for $n\geq %M^{\frac{2}{p}}$. 

Finally we note that, since $s$ can be large as long as Spectral Assumption (A3) is satisfied,
the bound \eqref{eq:CortesBound} is recovered by our analysis by approaching $s$ to 1.

~

\noindent
{\bf (comparison with existing localized bound)}
Recently \cite{arXiv:Kloft+Gilles:2011} gave a tighter convergence rate utilizing the localization technique as 
\begin{equation}
\textstyle \|\fhat - \fstar\|_{\LPi}^2 = \mathcal{O}_p \Big(\min_{p'\geq p} \Big\{ 
\frac{p'}{p'-1}
n^{-\frac{1}{1+s}} M^{1-\frac{2s}{p'(1+s)}} R_{p'}^{\frac{2s}{1+s}}
\Big\}\Big),
\label{eq:MariusLpBound}
\end{equation}
under a strong condition $\kminrho=1$ that imposes all RKHSs are completely uncorrelated to each other. 
Comparing our bound with their result, there is $\min_{p'\geq p}$ and $\frac{p'}{p'-1}$ in their bound
(if there is not the term  $\frac{p'}{p'-1}$, then the minimum of $\min_{p'\geq p}$ is attained at $p'=p$, thus our bound is tighter).
% moreover our analysis doesn't need the strong assumption $\kminrho=1$.
Due to this, we obtain a quite different consequence from theirs.
According to our bound \eqref{eq:convergenceRateSimple_Simple}, the optimal regularization among all \ellp-norm 
that gives the smallest generalization error is $\ell_1$-regularization (this will be discussed later in Sec.\ref{sec:OptimalReg})
while their consequence says that the optimal $p$ changes depending on the ``sparsity'' of the true function $\fstar$.
Moreover we will observe that $\ell_1$-regularization is optimal among \emph{all} isotropic mixed-norm-type regularization. 
The details of the optimality will be discussed in Sec.\ref{sec:OptimalReg}.

%The results by \cite{COLT:Koltchinskii:2008,AS:Meier+Geer+Buhlmann:2009,AS:Koltchinskii+Yuan:2010} are also related to ours 
%in terms of the proof techniques.
%%Their analyses and ours utilize the localization techniques to obtain {\it fast localized learning rate}, 
%%in contrast to the global bound of \cite{ICML:Cortes+etal:gbound:2010,ECML:Marius+etal:2010,JMLR:lp:Marius+etal:2011}. 
%However all these works are considered on a sparse learning settings such as $\ell_1$ and elasticnet regularizations.
%Hence their frameworks are rather different from ours. 

\subsubsection{Convergence Rate of Elasticnet-MKL}
Elasticnet-MKL employs a mixture of $\ell_1$ and $\ell_2$ norm as the regularizer:
$$
\|f\|_{\psi} = \tau \|f\|_{\ell_1} + (1-\tau) \|f\|_{\ell_2}$$ 
where $\tau \in [0,1]$.

Then its dual norm is given by
%\begin{equation*}
$
\| \boldb \|_{\psi*} = \min_{\bolda \in \Real^M}\left\{ \max\left(\frac{\|\bolda\|_{\ell_{\infty}}}{\tau},\frac{\|\bolda - \boldb\|_{\ell_2}}{1-\tau}\right) \right\}.
$
%\end{equation*}
Therefore by a simple calculation, we have 
$
\| \boldone \|_{\psi*} = \frac{\sqrt{M}}{1-\tau + \tau \sqrt{M}}.
$
Hence \Eqref{eq:convergenceRateSimple} gives the convergence rate of elasticnet-MKL as 
\begin{align*}
%\textstyle
\|\fhat - \fstar\|_{\LPi}^2 = \mathcal{O}_p\Bigg(
n^{-\frac{1}{1+s}} \frac{M^{1-\frac{s}{1+s}}}{(1-\tau + \tau \sqrt{M})^{\frac{2s}{1+s}}}  (\tau \|\fstar\|_{\ell_1} + (1-\tau) \|\fstar\|_{\ell_2})^{\frac{2s}{1+s}}
+ \frac{M\log(M)}{n}\Bigg).
\end{align*}
Note that, when $\tau= 0$ or $\tau= 1$, this rate is identical to that of $\ell_2$-MKL or $\ell_1$-MKL obtained in \Eqref{eq:convergenceRateComplex} respectively.

\subsubsection{Convergence Rate of VSKL}
Variable Sparsity Kernel Learning (VSKL) employs a mixed norm regularization  
defined by 
$$
\textstyle \|f\|_{\psi} = \|f\|_{(p,q)} = \left\{ \sum_{j=1}^{M'} \left(\sum_{k=1}^{M_j} \|f_{j,k}\hnorm{j,k}^p\right)^{\frac{q}{p}} \right\}^{\frac{1}{q}},
$$
where RKHSs are divided into $M'$ groups $\{\calH_{j,k}\}_{k=1}^{M_j},~(j=1,\dots,M')$
and $1\leq p,~1\leq q$.

\begin{Lemma}
\label{lemm:dualmixednorm}
The dual of the mixed norm is given by 
$$
\textstyle  
\|\boldb\|_{\psi^*} = \left\{ \sum_{j=1}^{M'} \left(\sum_{k=1}^{M_j} |b_{j,k}|^{p^*}\right)^{\frac{q^*}{p^*}} \right\}^{\frac{1}{q^*}},
$$
for $b_{j,k} \in \Real~(k=1,\dots,M_j,~j=1,\dots,M')$.
\end{Lemma}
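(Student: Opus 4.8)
The plan is to compute the dual norm directly from its definition, $\|\boldb\|_{\psi^*} = \sup_{\bolda}\{ \boldb^\top \bolda \mid \|\bolda\|_{(p,q)} \le 1\}$, and to show that it equals the $(p^*,q^*)$-mixed norm by matching an upper and a lower bound, where the conjugate exponents are fixed by $\frac{1}{p}+\frac{1}{p^*}=1$ and $\frac{1}{q}+\frac{1}{q^*}=1$. The upper bound is the inequality $\boldb^\top\bolda \le \|\bolda\|_{(p,q)}\,\|\boldb\|_{(p^*,q^*)}$, and I would obtain it by a nested application of H\"older's inequality: first summing over the index $k$ inside each group, then over the group index $j$.

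Concretely, for each fixed $j$ I would abbreviate $A_j := (\sum_{k} |a_{j,k}|^p)^{1/p}$ and $B_j := (\sum_k |b_{j,k}|^{p^*})^{1/p^*}$, and apply H\"older over $k$ with exponents $(p,p^*)$ to get $\sum_{k=1}^{M_j} b_{j,k} a_{j,k} \le A_j B_j$. Summing over $j$ and applying H\"older a second time, now over $j$ with exponents $(q,q^*)$, gives $\sum_{j,k} b_{j,k} a_{j,k} \le (\sum_j A_j^q)^{1/q}(\sum_j B_j^{q^*})^{1/q^*} = \|\bolda\|_{(p,q)}\,\|\boldb\|_{(p^*,q^*)}$, and hence $\|\boldb\|_{\psi^*} \le \|\boldb\|_{(p^*,q^*)}$.

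For the reverse inequality I would exhibit an explicit maximizer built from the equality conditions of the two H\"older steps. Choosing $a_{j,k} = \sgn(b_{j,k})\,|b_{j,k}|^{p^*-1}\,B_j^{q^*-p^*}$ makes the inner step tight for each $j$ (because $p(p^*-1)=p^*$), while the group weight $B_j^{q^*-p^*}$ is tuned so that $A_j = B_j^{q^*-1}$, forcing $A_j^q \propto B_j^{q^*}$ and making the outer step tight as well (because $(q^*-1)q = q^*$). After rescaling this $\bolda$ so that $\|\bolda\|_{(p,q)}=1$, a direct computation yields $\boldb^\top\bolda = (\sum_j B_j^{q^*})^{1/q^*} = \|\boldb\|_{(p^*,q^*)}$, so the supremum is attained and the two bounds coincide.

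The genuinely routine part is the exponent bookkeeping sketched above; the only care is needed at the boundary exponents $p=1$ or $q=1$, where $p^*$ or $q^*$ equals $\infty$ and the corresponding inner or outer sum must be read as a maximum, and at degenerate coordinates where some $B_j=0$ or $\boldb=\bzero$. I expect the main obstacle to be nothing more than handling these limiting and degenerate cases cleanly---either by interpreting the $\ell_\infty$ factors directly inside H\"older's inequality or by a continuity argument letting $p,q$ approach the boundary---since the strict exponent identities used for the maximizer degenerate precisely there.
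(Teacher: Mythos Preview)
Your proposal is correct and follows essentially the same route as the paper: a nested H\"older inequality (inner sum over $k$, outer sum over $j$) gives the upper bound, and the explicit maximizer $a_{j,k}\propto \sgn(b_{j,k})\,|b_{j,k}|^{p^*-1}B_j^{q^*-p^*}$ gives the matching lower bound, which is exactly the paper's choice $a_{j,k}=b_{j,k}^{1/(p-1)}(\sum_k b_{j,k}^{p^*})^{q^*/p^*-1}/\text{(normalization)}$ since $p^*-1=1/(p-1)$. The paper likewise restricts to $1<p,q<\infty$ and declares the boundary cases straightforward, so your handling of them is, if anything, slightly more careful.
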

The proof will be given in Appendix \ref{sec:dualofnorms}.
Therefore the dual norm of the vector $\boldone$ is given by 
$\|\boldone\|_{\psi^*} = \left(\sum_{j=1}^{M'} M_j^{\frac{q^*}{p^*}}\right)^{\frac{1}{q^*}}$.
Hence, by \Eqref{eq:convergenceRateSimple}, the convergence rate of VSKL is given as 
\begin{align*}
&\|\fhat - \fstar\|_{\LPi}^2  \\
= 
&\mathcal{O}_p\Bigg(
n^{-\frac{1}{1+s}} \left(\sum_{j=1}^{M'}M_j\right)^{1-\frac{2s}{1+s}} 
\left[\left(\sum_{j=1}^{M'} M_j^{\frac{q^*}{p^*}}\right)^{\frac{1}{q^*}} \left\{ \sum_{j=1}^{M'} (\sum_{k=1}^{M_j} \|f^*_{j,k}\hnorm{j,k}^p)^{\frac{q}{p}} \right\}^{\frac{1}{q}} \right]^{\frac{2s}{1+s}}
+ \frac{M\log(M)}{n}\Bigg).
\end{align*}
One can check that this convergence rate coincides with that of \ellp-MKL when $M'=1$. 

\subsection{Minimax Lower Bound} % in Homogeneous Settings} % of learning rate}
\label{sec:MinimaxRate}
In this section, we show that the derived learning rate \eqref{eq:convergenceRateSimple}
achieves the minimax-learning rate on 
the $\psi$-norm ball
$$
\textstyle \calHpsi(R)  := \left\{f = \sum_{m=1}^M f_m \bmid \| f\|_{\psi} \leq R \right\},
$$
when the norm is {\it isotropic}.
\begin{definition}
We say that $\psi$-norm $\|\cdot\|_{\psi}$ is isotropic when there exits a universal constant $\bar{c}$ such that 
\begin{equation}
\bar{c} M = \bar{c} \|\boldone\|_{\ell_1} \geq \|\boldone\|_{\psi^*}\|\boldone\|_{\psi},
~~~~~~~~\|\boldb \|_{\psi} \leq \|\boldb'\|_{\psi}~~(\text{if}~0 \leq b_m \leq b_m'~(\forall m)),
\label{eq:defistropic}
\end{equation}
(note that the inverse inequality $M\leq \|\boldone\|_{\psi^*}\|\boldone\|_{\psi}$ of the first condition always holds by the definition of the dual norm).
\end{definition}
Practically used regularizations usually satisfy the isotropic property. 
In fact, $\ell_p$-MKL, elasticnet-MKL and VSKL satisfy the isotropic property with $\bar{c} = 1$.

We derive the minimax learning rate in a simpler situation.
First we assume that each RKHS is same as others.
That is, the input vector is decomposed into $M$ components like $x=(x^{(1)},\dots,x^{(M)})$ where $\{x^{(m)}\}_{m=1}^M$ are $M$ i.i.d. copies of a random variable $\tilde{X}$,
and $\calH_m = \{f_m \mid f_m(x) = f_m(x^{(1)},\dots,x^{(M)}) = \tilde{f}_m(x^{(m)}),~\tilde{f}_m \in \repH \}$ where $\repH$ is an RKHS shared by all $\calH_m$.
Thus $f\in \calHtot$ is decomposed as  
$f(x) = f(x^{(1)},\dots,x^{(M)}) = \sum_{m=1}^M \tilde{f}_m(x^{(m)})$ where each $\tilde{f}_m$ is a member of the common RKHS $\repH$. 
We denote by $\repk$ the kernel associated with the RKHS $\repH$.

In addition to the condition about the upper bound of spectrum (Spectral Assumption (A3)), 
we assume that the spectrum of all the RKHSs $\calH_m$ have the same lower bound of polynomial rate. 
\begin{Assumption}{\bf (Strong Spectral Assumption)}
\label{eq:specass2}
There exist $0 < s < 1$ and $0<c,c'$ such that %the spectrum $\mu_{k,m}$ has the following decreasing exponent
\begin{flalign*}
%\label{eq:spetrumassump2}
\text{\rm \Assump{}} &&
c' \ell^{-\frac{1}{s}} \leq \tilde{\mu}_{\ell} \leq c \ell^{-\frac{1}{s}},~~~(1\leq \forall \ell),&&
\end{flalign*}
where $\{\tilde{\mu}_{\ell}\}_{\ell=1}^\infty$ is the spectrum of the integral operator $T_{\tilde{k}}$ corresponding to the kernel $\tilde{k}$. 
In particular, the spectrum of $T_{k_m}$ also satisfies $\mu_{\ell,m} \sim \ell^{-\frac{1}{s}}~(\forall \ell,m)$.
\end{Assumption}
%%As discussed just after Assumption \ref{eq:specass2}, this means that the covering number of $\repH$ satisfies
%%\begin{align*}
%%\label{eq:coveringcondition2}
%%\calN(\varepsilon,\mathcal{B}_{\repH},\LPi) \sim  \varepsilon^{-2 s},
%%\end{align*}
%%where $\mathcal{B}_{\repH}$ is the unit ball of $\repH$ (see \cite[Theorem 15]{COLT:Steinwart+etal:2009} and \cite{Book:Steinwart:2008} for details).
Without loss of generality, we may assume that 
%\begin{align*}
$
\EE[f(\tilde{X})] = 0~(\forall f \in \repH).
$
%\end{align*}
Since each $f_m$ receives i.i.d. copy of $\tilde{X}$, $\calH_m$s are orthogonal to each other:
\begin{align*}
&\EE[f_m(X) f_{m'}(X)] = \EE[\tilde{f}_m(X^{(m)}) \tilde{f}_{m'}(X^{(m')})] =0 \\
&~~~(\forall f_m \in \calH_m,~\forall f_{m'} \in \calH_{m'},~1\leq \forall m \neq m' \leq M).
\end{align*}
%This assumption only makes the lower bound loser.  
We also assume that the noise $\{\epsilon_i\}_{i=1}^n$ is an i.i.d. normal sequence with standard deviation $\sigma>0$.  

Under the assumptions described above, we have the following minimax $\LPi$-error.
\begin{Theorem}
\label{th:LowerBounds}
Suppose $R>0$ is given and $n > \frac{\bar{c}^2  M^2}{R^2 \|\boldone\|_{\psi^*}^2}$ is satisfied.
Then the minimax-learning rate on $\calHpsi(R)$ for isotropic norm $\|\cdot\|_{\psi}$ is lower bounded as 
\begin{align}
\min_{\hat{f}} \max_{f^* \in \calHpsi(R)} \EE\left[ \|\hat{f} - f^* \|_{\LPi}^2\right] \geq %C M n^{-\frac{1}{1+s}} \left(\frac{\|\fstar\|_{\psi}}{M\Phi^{-1}(1/M)} \right)^{\frac{2s}{1+s}},
C M^{1-\frac{2s}{1+s}} n^{-\frac{1}{1+s}} (\|\boldone\|_{\psi^*} R)^{\frac{2s}{1+s}},
%n^{-\frac{1}{1+s}} M^{1-\frac{2s}{p(1+s)}} R^{\frac{2s}{1+s}},
\label{eq:PsinormBallMiniMaxRate}
\end{align}
where $\inf$ is taken over all measurable functions of $n$ samples $\{(x_i,y_i)\}_{i=1}^n$.
\end{Theorem}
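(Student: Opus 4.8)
The plan is to prove the lower bound by the standard reduction of estimation to a multiple testing problem, combined with the Varshamov--Gilbert packing and Fano's inequality. Since the argument only needs a sufficiently rich, well-separated yet statistically indistinguishable finite family inside $\calHpsi(R)$, I would build this family explicitly from the common eigenbasis and then optimize its two free parameters (a spectral cutoff and an amplitude).

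First I would set up the candidate family. Let $\{\varphi_\ell\}_{\ell\geq 1}$ be the $\LPi$-orthonormal eigenfunctions of the common kernel $\repk$ with eigenvalues $\tilde\mu_\ell$, so that $\varphi_\ell(x^{(m)})$ is an eigenfunction of $\calH_m$ with the same eigenvalue. For a cutoff $d$ and an amplitude $\gamma>0$ to be chosen, consider
\[
f_\omega = \gamma \sum_{m=1}^M \sum_{\ell=1}^d \omega_{m,\ell}\, \varphi_\ell(x^{(m)}), \qquad \omega \in \{-1,+1\}^{Md}.
\]
Because the $\{x^{(m)}\}$ are i.i.d.\ copies and (using $\EE[f(\tilde X)]=0$) the $\calH_m$ are mutually orthogonal in $\LPi$, the family $\{\varphi_\ell(x^{(m)})\}_{m,\ell}$ is $\LPi$-orthonormal, so $\|f_\omega - f_{\omega'}\|_{\LPi}^2 = \gamma^2\sum_{m,\ell}(\omega_{m,\ell}-\omega'_{m,\ell})^2$; the $\LPi$-geometry is exactly that of a scaled hypercube. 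Moreover $\|f_{\omega,m}\hnorm{m}^2 = \gamma^2\sum_{\ell=1}^d \tilde\mu_\ell^{-1}$ independently of the sign pattern (since $\omega_{m,\ell}^2=1$), so $(\|f_{\omega,m}\hnorm{m})_{m=1}^M = v\,\boldone$ with $v^2 = \gamma^2\sum_{\ell\leq d}\tilde\mu_\ell^{-1}$, whence $\|f_\omega\|_\psi = v\,\|\boldone\|_\psi$ exactly.

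Next I would control the two budgets. The lower spectral bound $\tilde\mu_\ell \geq c'\ell^{-1/s}$ of the Strong Spectral Assumption (A6) gives $\sum_{\ell\leq d}\tilde\mu_\ell^{-1} \leq (c')^{-1}\sum_{\ell\leq d}\ell^{1/s} \lesssim d^{1+1/s}$, so membership $f_\omega\in\calHpsi(R)$ is ensured by $\gamma\, d^{(1+s)/(2s)}\|\boldone\|_\psi \lesssim R$. Applying Varshamov--Gilbert to $\{-1,+1\}^{Md}$ yields a subfamily of size $N \geq 2^{Md/8}$ with pairwise Hamming distance at least $Md/8$, hence $\|f_\omega - f_{\omega'}\|_{\LPi}^2 \geq \tfrac12 \gamma^2 Md =: 4\delta^2$. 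For Gaussian noise the per-pair Kullback--Leibler divergence equals $\tfrac{n}{2\sigma^2}\|f_\omega-f_{\omega'}\|_{\LPi}^2 \leq \tfrac{2n}{\sigma^2}\gamma^2 Md$, so choosing $\gamma^2 \asymp \sigma^2/n$ keeps the mutual information a small fraction of $\log N \asymp Md$, and Fano's inequality then lower-bounds the minimax risk by a constant multiple of $\delta^2 \asymp \gamma^2 Md$. With $\gamma^2\asymp \sigma^2/n$ the budget constraint forces the largest admissible cutoff $d \asymp (R^2 n/\|\boldone\|_\psi^2)^{s/(1+s)}$, giving
\[
\delta^2 \asymp \tfrac{1}{n}\,M\,\Big(\tfrac{R^2 n}{\|\boldone\|_\psi^2}\Big)^{\frac{s}{1+s}} = M\, n^{-\frac{1}{1+s}} R^{\frac{2s}{1+s}}\|\boldone\|_\psi^{-\frac{2s}{1+s}}.
\]
The requirement $d\geq 1$ is exactly the hypothesis $n > \bar c^2 M^2/(R^2\|\boldone\|_{\psi^*}^2)$ once isotropy is invoked, and substituting the isotropic relation $\|\boldone\|_\psi \asymp M/\|\boldone\|_{\psi^*}$ (the bound $M\leq\|\boldone\|_{\psi^*}\|\boldone\|_\psi$ always holds, and the reverse is the first isotropy condition) converts this into the claimed rate $M^{1-\frac{2s}{1+s}} n^{-\frac{1}{1+s}}(\|\boldone\|_{\psi^*}R)^{\frac{2s}{1+s}}$.

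I expect the main obstacle to be the bookkeeping of the two coupled budgets — the $\psi$-norm (RKHS) constraint and the Fano/KL constraint — and in particular translating the per-kernel RKHS budget into the $\psi$-norm via the dual norm and then matching exponents through the isotropy relation; this is precisely where the factor $\|\boldone\|_{\psi^*}$ and the threshold on $n$ emerge. A secondary technical point is justifying that it is the spectral \emph{lower} bound in (A6), rather than the upper bound (A3), that is actually used, since the construction consumes RKHS budget in proportion to $\sum_{\ell\leq d}\tilde\mu_\ell^{-1}$.
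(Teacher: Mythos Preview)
Your argument is correct and delivers the stated rate, but it follows a genuinely different route from the paper. The paper proceeds in two stages: it first proves the lower bound for the $\ell_\infty$-ball $\calHl{\infty}(R)$ via the Yang--Barron/Raskutti--Wainwright inequality applied to abstract packing and covering numbers (using a separate product-packing lemma to show $\log Q(\delta,\calHl{\infty}(R)) \gtrsim M\log Q(\delta/\sqrt{M},\repH(R))$), and only then reduces the general isotropic $\psi$-ball to the $\ell_\infty$-case through the inclusion $\calHl{\infty}(R\|\boldone\|_{\psi^*}/(\bar c M)) \subset \calHpsi(R)$. You instead build an explicit hypercube family from the common eigenbasis directly inside $\calHpsi(R)$, exploit the fact that every member has RKHS-norm vector exactly $v\boldone$ so that $\|f_\omega\|_\psi = v\|\boldone\|_\psi$ with no slack, and apply Varshamov--Gilbert and Fano by hand. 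Your construction is more elementary and self-contained---it bypasses both the abstract metric-entropy calculus and the separate $\ell_\infty$ reduction, and it makes transparent that only the \emph{lower} spectral bound in (A6) is consumed (to control $\sum_{\ell\le d}\tilde\mu_\ell^{-1}$). The paper's approach, in exchange, is more modular: the $\ell_\infty$ lower bound and the inclusion step are reusable pieces. Both routes invoke isotropy in the same place and for the same purpose: your conversion $\|\boldone\|_\psi^{-1}\asymp \|\boldone\|_{\psi^*}/M$ is exactly the paper's substitution of $R\|\boldone\|_{\psi^*}/(\bar c M)$ for $R$ in the $\ell_\infty$ bound.
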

The proof will be given in Appendix \ref{sec:proofOfMinimax}.
One can see that the convergence rate derived in \Eqref{eq:convergenceRateSimple}
%Theorem \ref{th:convergencerateofLpMKL} and \Eqref{eq:convergenceRateSimple} 
achieves the minimax rate on the $\psi$-norm ball (Theorem \ref{th:LowerBounds})
up to $\frac{M\log(M)}{n}$ that is negligible when the number of samples is large.
Indeed if 
\begin{equation}
n \geq \frac{M^2\log(M)^{\frac{1+s}{s}}}{\|\boldone\|_{\psi^*}^2\|\fstar\|_{\psi}^2},
\label{eq:nboundGeneralHomogeneuos}
\end{equation} 
then the first term in \Eqref{eq:convergenceRateSimple} dominates the second term $\frac{M\log(M)}{n}$
and the upper bound coincides with the minimax optima rate. 
Note that the condition \eqref{eq:nboundGeneralHomogeneuos} for the sample size $n$
is equivalent to the condition for $n$ assumed in Theorem \ref{th:LowerBounds} up to factors of $\log(M)^{\frac{1+s}{s}}$ and a constant.

The fact that $\psi$-norm MKL achieves the minimax optimal rate \eqref{eq:PsinormBallMiniMaxRate} 
indicates that the $\psi$-norm regularization is well suited to make the estimator included in the $\psi$-norm ball. %\ellp-mixed-norm ball.

\subsection{Optimal Regularization Strategy} % in Homogeneous Setting}
\label{sec:OptimalReg}
Here we discuss which regularization gives the best performance
based on the generalization error bound given by Lemma \ref{lem:smcmuniformbound}.
Surprisingly the best regularization that gives the optimal performance among \emph{all isotropic $\psi$-norm} regularizations
is $\ell_1$-norm regularization.
This can be seen as follows.
According to \Eqref{eq:convergenceRateSimple}, we have seen that the convergence rate of $\psi$-norm MKL
is upper bounded as 
\begin{align*}
\|\fhat - \fstar\|_{\LPi}^2 = \mathcal{O}_p\Bigg\{&
M^{1-\frac{2s}{1+s}} n^{-\frac{1}{1+s}} (\|\boldone\|_{\psi^*} \|\fstar\|_{\psi})^{\frac{2s}{1+s}}
+ \frac{M\log(M)}{n}\Bigg\},
\end{align*}
and this is mini-max optimal on $\psi$-norm ball if $\psi$-norm is isotropic.
Here by the definition of the dual norm $\|\cdot\|_{\psi^*}$, we always have 
\begin{equation}
\|\fstar\|_{\ell_1} = \sum_{m=1}^M \|\fstar_m\hnorm{m} = \sum_{m=1}^M 1\times \|\fstar_m \hnorm{m} \leq \|\boldone\|_{\psi^*} \|\fstar\|_{\psi}.
\end{equation}
Therefore the leading term of the convergence rate for $\ell_1$-norm regularization is 
upper bounded by that for other arbitrary $\psi$-norm regularization as %follows
$$
M^{1-\frac{2s}{1+s}} n^{-\frac{1}{1+s}} \|\fstar\|_{\ell_1}^{\frac{2s}{1+s}}
\leq
M^{1-\frac{2s}{1+s}} n^{-\frac{1}{1+s}} (\|\boldone\|_{\psi^*} \|\fstar\|_{\psi})^{\frac{2s}{1+s}},
$$
(here it should be noticed that the dual norm of $\ell_1$-norm is $\ell_{\infty}$-norm and $\|\boldone\|_{\ell_{\infty}}=1$).
This shows that the upper bound \eqref{eq:convergenceRateSimple} is minimized by $\ell_1$-norm regularization.
In other words, $\ell_1$-regularization is optimal among \emph{all} (isotropic) $\psi$-norm regularization in homogeneous settings.

This consequence is different from that of \cite{arXiv:Kloft+Gilles:2011} where
the optimal regularization among $\ell_p$-MKL is discussed. 
Their consequence says that the best performance is achieved at $p \gneq 1$ and 
the best $p$ depends on the variation of the RKHS norms of $\{\fstar_m\}_{m=1}^M$:
if $\fstar$ is close to sparse (i.e., $\|\fstar_m\hnorm{m}$ decays rapidly), small $p$ is preferred, on the other hand
if $\fstar$ is dense (i.e., $\{\|\fstar_m\hnorm{m}\}_{m=1}^M$ is uniform), then large $p$ is preferred.
This consequence seems reasonable, but
our consequence is different: $\ell_1$-norm regularization is always optimal in $\ell_p$-regularizations.
The antinomy of the two consequences comes from the additional terms $\min_{p'\geq p}$ and $\frac{p'}{p'-1}$
in their bound \eqref{eq:MariusLpBound} (there are no such terms in our bound). 
This difference makes our bound tighter than their bound but simultaneously 
leads to a somewhat counter-intuitive consequence
that is contrastive against the some experiment results supporting dense type regularization. 
%As we have described 
%some experimental results show disgusting performances of $\ell_1$-norm regularization.
%somewhat counter-intuitive and contrastive against the some experiment results 
%that show disgusting performances of $\ell_1$-norm regularization.
%Our consequence is somewhat counter-intuitive and contrastive against the some experiment results 
%that show disgusting performances of $\ell_1$-norm regularization.
%In that sense, our consequence is surprising. 
However such experimental observations are justified by 
considering \emph{inhomogeneous settings}.
%Our theoretical justification for such experimental observations   
%is that the inhomogeneity of the complexities of RKHSs leads to a better performance of dense regularizations.
Here we should notice that the homogeneous setting is quite restrictive and unrealistic because
it is required that the complexities of all RKHSs are uniformly same. 
In real settings, it is natural to assume the complexities varies depending on RKHS (inhomogeneous).
In the next section, we discuss how dense type regularizations %other than $\ell_1$-regularization 
outperform the $\ell_1$-regularization. 

%However on inhomogeneous settings, we obtain a different consequence as shown in the following section.

\section{Analysis on Inhomogeneous Settings}
\label{sec:inhomogeneous}
In the previous sections (analysis on homogeneous settings), we have seen $\ell_1$-MKL shows the best performance 
among isotropic $\psi$-norm
and 
have not observed any theoretical justification supporting the fact that dense MKL methods like $\ell_{\frac{4}{3}}$-MKL can 
outperform the sparse $\ell_1$-MKL \citep{ICML:Cortes+etal:gbound:2010}.
In this section, we show %how the convergence rate of $\psi$-norm MKL behaves under the setting of 
dense type regularizations can outperform the sparse regularization in inhomogeneous settings 
(where there exists $m,m'$ such that $\smm{m} \neq \smm{m'}$).
For simplicity, we focus on $\ell_p$-MKL, and discuss the relation between the learning rate and the norm parameter $p$.

%\subsection{Theoretical Investigation of Inhomogeneous Setting}
%\label{sec:theoinhomogeneous}
Let us consider an extreme situation where $\smm{1} = s$ for some $0<s<1$ and $\sm = 0~(m>1)$\footnote{
In our assumption $\sm$ should be greater than 0. However we formally put $\sm = 0$ ($m>1$) for simplicity of discussion.
For rigorous discussion, one might consider arbitrary small $\sm \ll s$.}.
In this situation, we have 
$$
%\textstyle
\alpha_1 = 3 \left(\frac{r_1^{-2s} + M-1}{n}\right)^{\frac{1}{2}},\alpha_2 = 3 \frac{s r_1^{1-s}}{\sqrt{n}},
\beta_1 = 3 \bigg(\frac{r_1^{-\frac{2s(3-s)}{1+s}} + M-1}{n^{\frac{2}{1+s}}}\bigg)^{\frac{1}{2}},
\beta_2 = 3 \frac{s r_1^{\frac{(1-s)^2}{1+s}}}{n^{\frac{1}{1+s}}}.
$$
for all $p$. Note that these $\alpha_1$, $\alpha_2$, $\beta_1$ and $\beta_2$ have no dependency on $p$.
Therefore the learning bound 
\eqref{eq:simpleConveRatemin} is smallest when $p=\infty$ 
because $\|\fstar \|_{\ell_{\infty}} \leq \|\fstar \|_{\ell_{p}}$ for all $1\leq p < \infty$.
In particular, when $(\|\fstar_m\hnorm{m})_{m=1}^M = \boldone$, 
we have $\|\fstar\|_{\ell_1} = M\|\fstar\|_{\ell_{\infty}}$ and thus 
obviously the learning rate of $\ell_\infty$-MKL given by \Eqref{eq:simpleConveRatemin} is faster than that of $\ell_1$-MKL.
In fact, through a bit cumbersome calculation, one can check that $\ell_\infty$-MKL can be 
at least $M^{\frac{2s}{1+s}}$ times
%$M^{\frac{2s}{1+s}\times\min\big(1,\frac{3-s}{1+4s-s^2}\big)}$ times 
faster (up to constants) than $\ell_1$-MKL in a worst case.
Indeed we have the following learning rate of $\ell_1$-MKL and $\ell_\infty$-MKL (say $\fhat^{(1)}$ and $\fhat^{(\infty)}$).
\begin{Lemma}
\label{lem:inhomogeneousConvRate}
Suppose $\smm{1} = s$ for $0<s<1$ and $\smm{m}=0~(m>1)$ and $\|\fstar_m\hnorm{m}=1~(\forall m)$.  
If $n \geq M^{\frac{4s}{1-s}} \vee (M\log(M))^{\frac{1+s}{s}}$, then   
the bound \eqref{eq:simpleConveRatemin} implies 
\begin{align*}
&
\|\fhat^{(1)} - \fstar\|_{\LPi}^2
=\mathcal{O}_p\left( 
n^{-\frac{1}{1+s}} M^{\frac{2\smm{}}{1+\smm{}}}
%\max\left\{
%n^{-\frac{1}{1+s}} M^{\frac{2\smm{}}{1+\smm{}}},  
%n^{-\frac{2}{1+4\smm{} - \smm{}^2}} 
%M^{\frac{2\smm{}(3-\smm{})}{(1+\smm{})(1+4\smm{} - \smm{}^2)}} \right\} 
%+ \frac{M\log(M)}{n}
\right), \\
&
\|\fhat^{(\infty)} - \fstar\|_{\LPi}^2
=\mathcal{O}_p\left(
n^{-\frac{1}{1+s}} 
%\max\left\{n^{-\frac{1}{1+s}}, 
%n^{-\frac{2}{1+4\smm{} - \smm{}^2}} 
%\right\}
%+ \frac{M\log(M)}{n}
 \right).
\end{align*}
%In particular, when $n >\max\big\{ M^{\frac{2\smm{}(\smm{}^2-5\smm{}+2)}{(1-\smm{})^2}} , (M\log(M))^{\frac{1+\smm{}}{\smm{}}}\big\}$,
%we have 
%\begin{align*}
%&
%\|\fhat^{(1)} - \fstar\|_{\LPi}^2
%=\mathcal{O}_p\left( 
%n^{-\frac{1}{1+s}} M^{\frac{2\smm{}}{1+\smm{}}} \right),~~
%\|\fhat^{(\infty)} - \fstar\|_{\LPi}^2=\mathcal{O}_p\left( n^{-\frac{1}{1+s}}  \right).
%\end{align*}

\end{Lemma}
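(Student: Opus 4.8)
The plan is to specialize the minimized bound \eqref{eq:simpleConveRatemin} to the two regularizers, using the explicit expressions for $\alpha_1,\alpha_2,\beta_1,\beta_2$ recorded just above the statement (obtained by substituting $\smm{1}=s$ and $\smm{m}=0~(m>1)$ into \eqref{eq:defalphabeta}). The first simplification I would emphasize is that, because $\smm{m}=0$ for $m>1$, the reals $\{\cmm{m}\}_{m>1}$ drop out of all four quantities, so the minimization in \eqref{eq:simpleConveRatemin} reduces to a one-dimensional problem over $\cmm{1}>0$ alone. The regularizer enters only through $\|\fstar\|_{\psi}$: since $\|\fstar_m\hnorm{m}=1$ for all $m$, we have $\|\fstar\|_{\ell_1}=M$ and $\|\fstar\|_{\ell_{\infty}}=1$, so the term $(b)$ of \eqref{eq:simpleConveRatemin} carries $\|\fstar\|_{\psi}^2=M^2$ for $\ell_1$-MKL and $\|\fstar\|_{\psi}^2=1$ for $\ell_\infty$-MKL, while the term $(a)$ and $\tfrac{M\log M}{n}$ are common to both.

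Writing $V:=\|\fstar\|_{\psi}^2$, I would then isolate the dominant contributions as $\cmm{1}$ varies. In the small-$\cmm{1}$ regime $\cmm{1}^{-2s}\gg M$, the $\alpha$-part of $(a)$ behaves like $\cmm{1}^{-2s}/n$, while both $(\alpha_2/\alpha_1)^2$ and $(\beta_2/\beta_1)^2$ reduce to order $s^2\cmm{1}^2$, so $(b)\asymp s^2\cmm{1}^2 V$. Balancing $\cmm{1}^{-2s}/n$ against $\cmm{1}^2 V$ gives the choice $\cmm{1}\asymp (nV)^{-1/(2(1+s))}$, at which both terms equal $n^{-1/(1+s)}V^{s/(1+s)}$ up to an $s$-dependent constant; one checks that this $\cmm{1}$ indeed satisfies $\cmm{1}^{-2s}\gg M$ under the assumed lower bound on $n$. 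Substituting $V=M^2$ yields the claimed rate $n^{-1/(1+s)}M^{2s/(1+s)}$ for $\fhat^{(1)}$, and $V=1$ yields $n^{-1/(1+s)}$ for $\fhat^{(\infty)}$.

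It then remains to verify that every other term in \eqref{eq:simpleConveRatemin} is of lower order than this leading term, and this is where the two hypotheses on $n$ are consumed. The parametric pieces $\tfrac{M-1}{n}$ in $\alpha_1^2$ and $\tfrac{M-1}{n^{2/(1+s)}}$ in $\beta_1^2$ are both dominated by $\tfrac{M\log M}{n}$, which is in turn dominated by $n^{-1/(1+s)}V^{s/(1+s)}$ exactly when $n\geq (M\log M)^{(1+s)/s}$ (using $V\geq 1$). The main obstacle is the $\cmm{1}$-dependent part of $\beta_1^2$: a direct computation at the chosen $\cmm{1}$ shows that its ratio to the leading term equals $n^{-(1-s)^2/(1+s)^2}V^{2s(1-s)/(1+s)^2}$, which is at most $1$ precisely when $V^{2s/(1-s)}\leq n$; for $\ell_1$-MKL ($V=M^2$) this is exactly the hypothesis $n\geq M^{4s/(1-s)}$, and for $\ell_\infty$-MKL it holds automatically. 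Collecting these estimates gives the two stated rates. The delicate bookkeeping of the $M$- and $V$-powers carried in the $\beta$ terms, rather than any conceptual difficulty, is the part that must be executed carefully.
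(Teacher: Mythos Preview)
Your proposal is correct and follows essentially the same route as the paper's own proof: reduce the minimization to the single variable $\cmm{1}$, balance $\cmm{1}^{-2s}/n$ against $\cmm{1}^{2}\|\fstar\|_{\psi}^{2}$ to obtain $\cmm{1}\asymp (n\|\fstar\|_{\psi}^{2})^{-1/(2(1+s))}$ and leading rate $n^{-1/(1+s)}\|\fstar\|_{\psi}^{2s/(1+s)}$, then use $n\geq \|\fstar\|_{\psi}^{4s/(1-s)}$ to absorb the $\beta$-contribution and $n\geq (M\log M)^{(1+s)/s}$ to absorb the remaining $M$-terms and validate the regime $\cmm{1}^{-2s}\gtrsim M$. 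The only cosmetic difference is that the paper phrases the $\beta$-check as choosing between two candidate balance points and taking the larger, whereas you compute the ratio $n^{-(1-s)^2/(1+s)^2}V^{2s(1-s)/(1+s)^2}$ directly; the content is the same.
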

% This is because $\|\fstar\|_{\ell_1} = M\|\fstar\|_{\ell_{\infty}}$ and 
% \Eqref{eq:simpleConveRatemin} indicates that the learning rate of $\ell_1$-MKL and $\ell_\infty$-MKL (say $\fhat^{(1)}$ and $\fhat^{(\infty)}$) 
% are given as
% \begin{align}
% \|\fhat^{(1)} - \fstar\|_{\LPi}^2 
% &\textstyle = \calO_p\Bigg(
% \min_{\cmm{1} > 0 }\Bigg\{
% \alpha_1^2 + \beta_1^2 + 
% \left[ \left(\frac{\alpha_2}{\alpha_1}\right)^2 + \left(\frac{\beta_2}{\beta_1}\right)^2\right]  \underbrace{\|\fstar \|_{\ell_1}^2}_{M^2\|\fstar \|_{\ell_\infty}^2}
% +
% \frac{M\log(M)}{n}\Bigg\} \Bigg), \\
% \|\fhat^{(\infty)} - \fstar\|_{\LPi}^2 
% &\textstyle = \calO_p\Bigg(
% \min_{\cmm{1} > 0 }\Bigg\{
% \alpha_1^2 + \beta_1^2 + 
% \left[ \left(\frac{\alpha_2}{\alpha_1}\right)^2 + \left(\frac{\beta_2}{\beta_1}\right)^2\right]  \|\fstar \|_{\ell_{\infty}}^2
% +
% \frac{M\log(M)}{n}\Bigg\} \Bigg).
%%
%%&\geq 
%%\calO_p\Bigg(
%%%M^2 \left[ \left(\frac{\alpha_2^{(\infty)}}{\alpha_1^{(\infty)}}\right)^2 + \left(\frac{\beta_2^{(\infty)}}{\beta_1^{(\infty)}}\right)^2\right]  \|\fstar \|_{\ell_{\infty}}^2
%%%+
%%%\frac{M\log(M)}{n}\Bigg\} \Bigg) \\
%%%&\geq M^2 \|\fhat^{(\infty)} - \fstar\|_{\LPi}^2.
% \end{align}
This indicates that when the complexities of RKHSs are inhomogeneous, the generalization ability of {\it dense} type regularization (e.g., $\ell_\infty$-MKL)
can be better than {\it sparse} type regularization ($\ell_1$-MKL).

Next we numerically calculate the convergence rate: 
\begin{align}
\min_{\substack{ \{\cm\}_{m=1}^M: \\ \cm > 0 }}\! \Bigg\{
\alpha_1^2 + \beta_1^2 + 
\left[ \! \left(\frac{\alpha_2}{\alpha_1}\right)^2 \! + \left(\frac{\beta_2}{\beta_1}\right)^2\right]  \|\fstar \|_{\psi}^2
\Bigg\}. 
%%%%%%%%%%%%%%%%%%%%%%%%%%%%%%%%%%%%%%%%%%%%%
%\min_{\substack{ \{\cm\}_{m=1}^M: \\ \cm > 0 }}\! \Bigg\{
%\alpha_1^2  + 
% \left(\frac{\alpha_2}{\alpha_1}\right)^2   \|\fstar \|_{\psi}^2
%\Bigg\}. 
\label{eq:convrateLeadingTermWithmin}
\end{align}
Here we randomly generated $\sm$ from the uniform distribution on $[0,1/3]$ and 
$\|\fstar_m\hnorm{m}$ from the uniform distribution on $[0,1]$ with $n=100$ and $M=10$.
Then calculated the minimum of \Eqref{eq:convrateLeadingTermWithmin} using a numerical optimization solver
where $\ell_p$-norm is employed as the regularizer (\ellp-MKL).
We used {\it Differential Evolution} technique\footnote{
We used the Matlab$^{\scriptsize \textcircled{\tiny{R}}}$ code available in \cite{Book:DE-Chakraborty:2008}.} \citep{Book:DE-Price+etal:2005,Book:DE-Chakraborty:2008}
%{\tt fmincon} function in Matlab$^{\scriptsize \textcircled{\tiny{R}}}$ 
to obtain the minimum value.
Figure \ref{fig:genlpSim} plots the minimum value of \Eqref{eq:convrateLeadingTermWithmin} against 
the parameter $p$ of $\ell_p$-norm. %\ellp-MKL.
We can see that the generalization error once goes down and then goes up as $p$ gets large.
The optimal $p$ is attained around $p=1.4$ in this example.

\begin{figure}
\begin{center}
\includegraphics[width=9cm,clip]{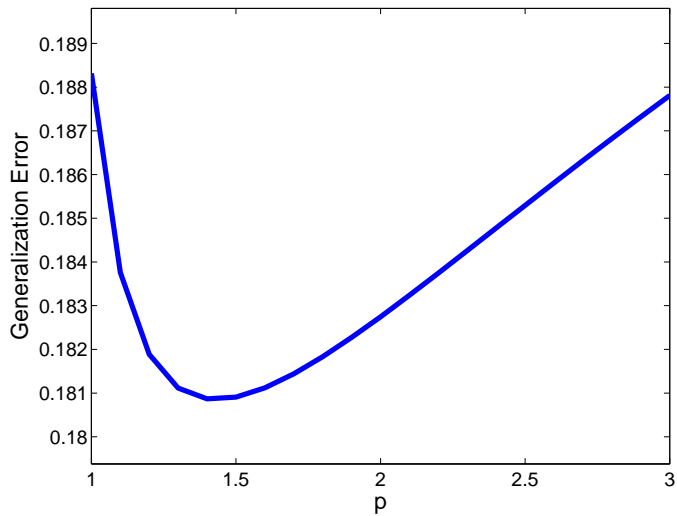}
\caption{The generalization error bound \eqref{eq:convrateLeadingTermWithmin} of \ellp-MKL with respect to $p$.}
\label{fig:genlpSim}
\end{center}
\end{figure}

In real settings, it is likely that one uses various types of kernels and the complexities of RKHSs become inhomogeneous. 
As mentioned above, it has been often reported that $\ell_1$-MKL is outperformed by dense type MKL such as $\ell_{\frac{4}{3}}$-MKL 
in numerical experiments \citep{ICML:Cortes+etal:gbound:2010}.
Our theoretical analysis in this section well support these experimental results. % obtained in numerical experiments.

\section{Numerical Comparison between Homogeneous and Inhomogeneous Settings}
Here we investigate numerically how the inhomogeneity of the complexities affects the performances
using synthetic data.
In particular, we numerically compare two situations: (a) all complexities of RKHSs are same (homogeneous situation)
and (b) one RKHS is complex and other RKHSs are evenly simple (inhomogeneous situation).

The experimental settings are as follows. 
The input random variable is 20 dimensional vector $x=(x^{(1)},\dots,x^{(20)})$ 
where each element $x^{(m)}$ is independently identically distributed from the uniform distribution on $[0,1]$:
$$
x^{(m)} \sim \mathrm{Unif}([0,1])~~~(m=1,\dots,20).
$$
For each coordinate $m = 1,\dots,20$, we put one Gaussian RKHS $\calH_m$ with a Gaussian width $\sigma_m$:
the number of kernels is 20 ($M=20$) and 
$$
k_m(x,x') = \exp\left(-\frac{(x^{(m)} - {x'^{(m)}})^2}{2\sigma_m^2}\right)~~~(m=1,\dots,20),
$$
for $x=(x^{(1)},\dots,x^{(20)})$ and $x'=(x'^{(1)},\dots,x'^{(20)})$.
To generate the ground truth $\fstar$, 
we randomly generated 5 center points $\mu_{i,m}~(i=1,\dots,5)$ for each coordinate $m = 1,\dots,20$ where
$\mu_{i,m}$ is independently generated by the uniform distribution on $[0,1]$.
Then we obtain the following form of the true function:
\begin{align*}
\fstar(x) &= \sum_{m=1}^{20} \fstar_m(x), \\ %\sum_{i=1}^5 \alpha_{i,m} \exp\left(-\frac{(x^{(m)} - \mu_{i,m})^2}{2\sigma_m^2}\right), \\
\text{where}~~\fstar_m(x) &= \sum_{i=1}^5 \alpha_{i,m} \exp\left(-\frac{(x^{(m)} - \mu_{i,m})^2}{2\sigma_m^2}\right) 
%= \sum_{i=1}^5 k_m(x,\mu_{i,m}) 
\in \calH_m,
\end{align*}
for $x=(x_1,\dots,x_m)$.
Each coefficient $\alpha_{i,m}$ is independently identically distributed from the standard normal distribution. 
The output $y$ is contaminated by a noise $\epsilon$
where the noise $\epsilon$ is distributed from the Gaussian distribution 
with mean 0 and standard deviation 0.1:
\begin{align*}
&y = \fstar_m(x) + \epsilon,  \\
&\epsilon \sim \calN(0,0.1).
\end{align*}

We generated 200 or 400 realizations $\{(x_i,y_i)\}_{i=1}^{n}$ ($n=200$ or $n=400$), 
and estimated $\fstar$ using \ellp-MKL with $p = 1,1.1,1.2,\dots,3$
\footnote{We included a bias term in this experiment, that is, we fitted $\fhat(x) + b$ to the data:
$\min_{f_m,b}\frac{1}{n}\sum_{i=1}^n (y_i - \sum_{m=1}^M f_m(x_i) - b)^2 + \lambdaone \|f\|_{\ell_p}^2$.}.
The estimator is computed with various regularization parameters $\lambdaone$.
The generalization error $\|\fhat - \fstar\|_{\LPi}^2$ was numerically calculated.
We repeated the experiments for 100 times, 
averaged the generalization errors over 100 repetitions for each $p$ and each regularization parameter,
and obtained the optimal average generalization error among all regularization parameters for each $p$.
The true function was randomly generated for each repetition.
We investigated the generalization errors in the following homogeneous and inhomogeneous settings:
\begin{enumerate}
\item (homogeneous) $\sigma_m = 0.5$ for $m=1,\dots,20$.
\item (inhomogeneous) $\sigma_1 = 0.01$ and $\sigma_m = 0.5$ for $m=2,\dots,20$. 
\end{enumerate}
The difference between the above homogeneous and inhomogeneous settings is the value of $\sigma_1$;
whether $\sigma_1=0.5$ or $\sigma_1 = 0.01$.
The inhomogeneous situation is analogous to that investigated in Sec.\ref{sec:inhomogeneous}
where we assumed one RKHS is complex and the other RKHSs are evenly simple 
(small $\sigma_1$ corresponds to a complex RKHS).

\begin{figure}
\begin{center}
\subfigure[Homogeneous Setting ($n=200$)]{
\includegraphics[width=7cm,clip]{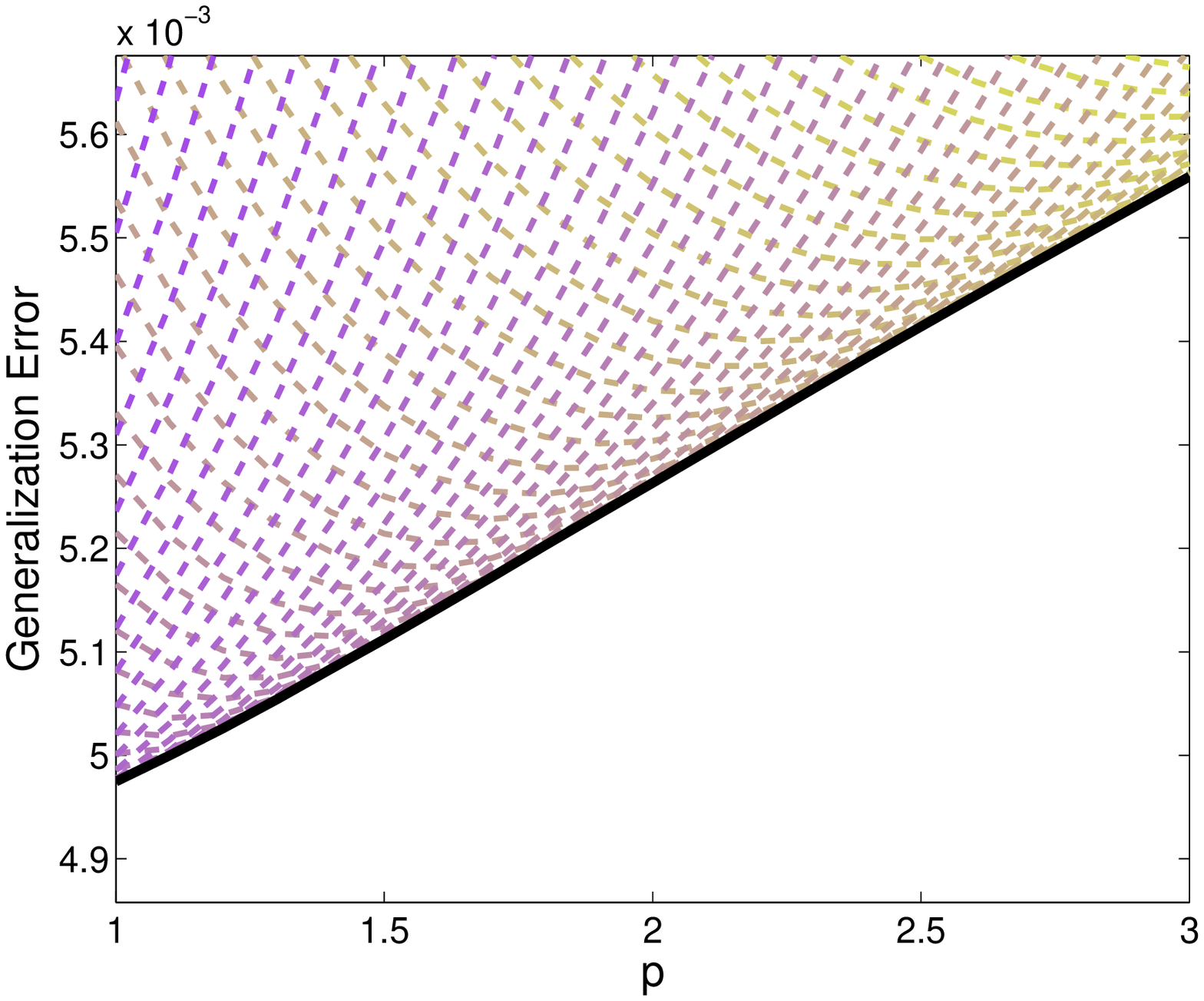}
}
%\hspace{0.5cm}
\subfigure[Inhomogeneous Setting ($n=200$)]{
\includegraphics[width=7cm,clip]{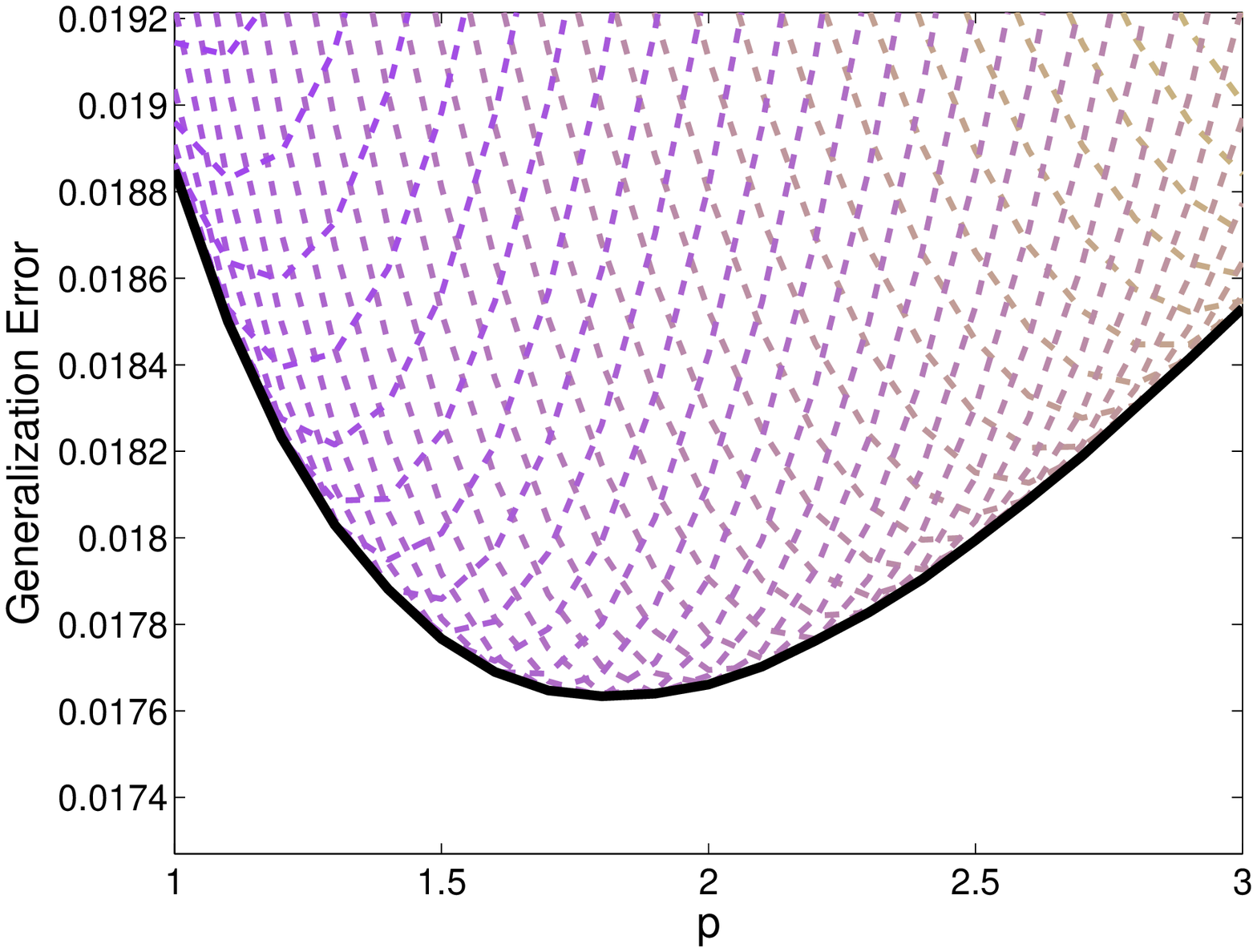}
}
\subfigure[Homogeneous Setting ($n=400$)]{
\includegraphics[width=7cm,clip]{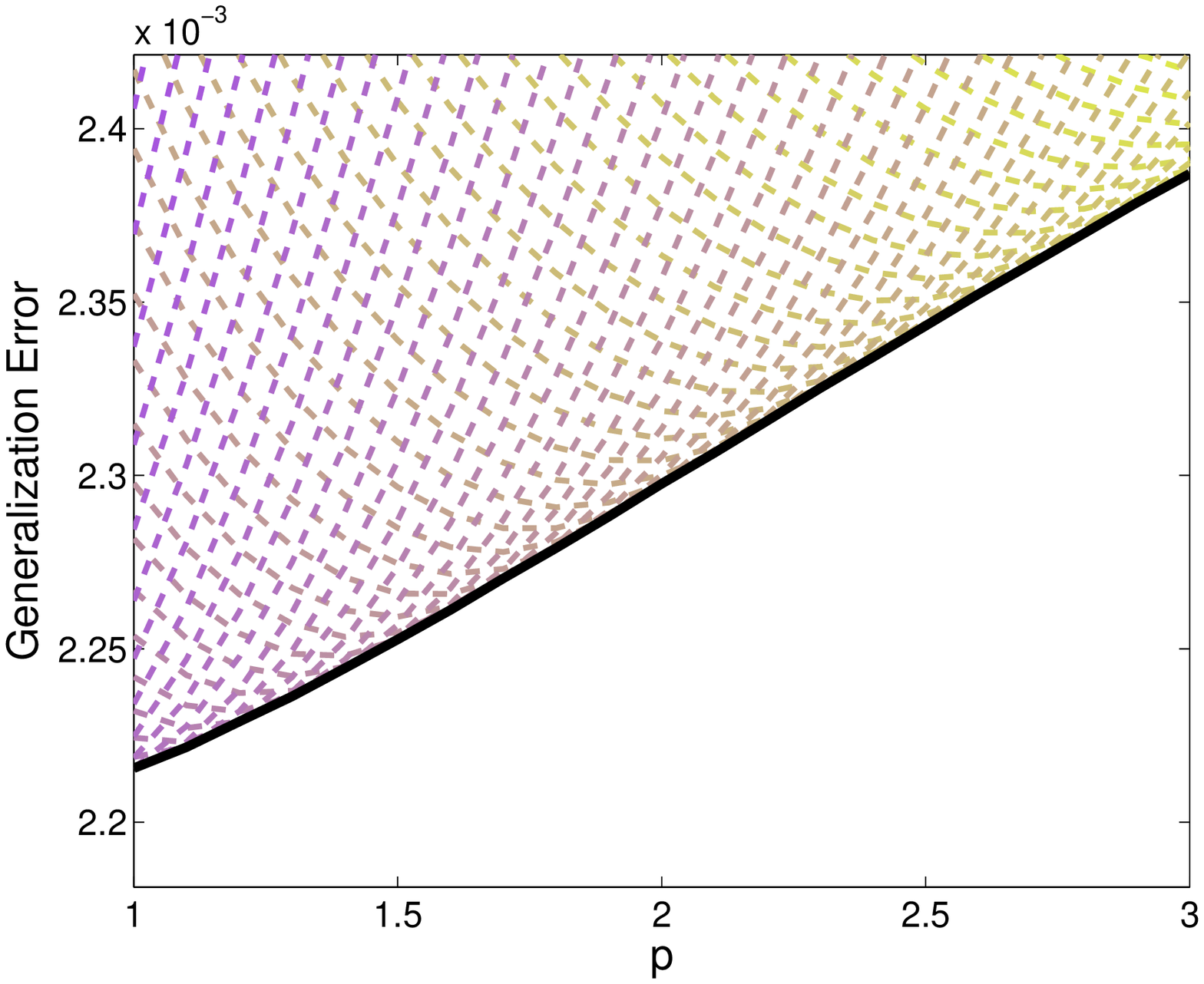}
}
%\hspace{0.5cm}
\subfigure[Inhomogeneous Setting ($n=400$)]{
\includegraphics[width=7cm,clip]{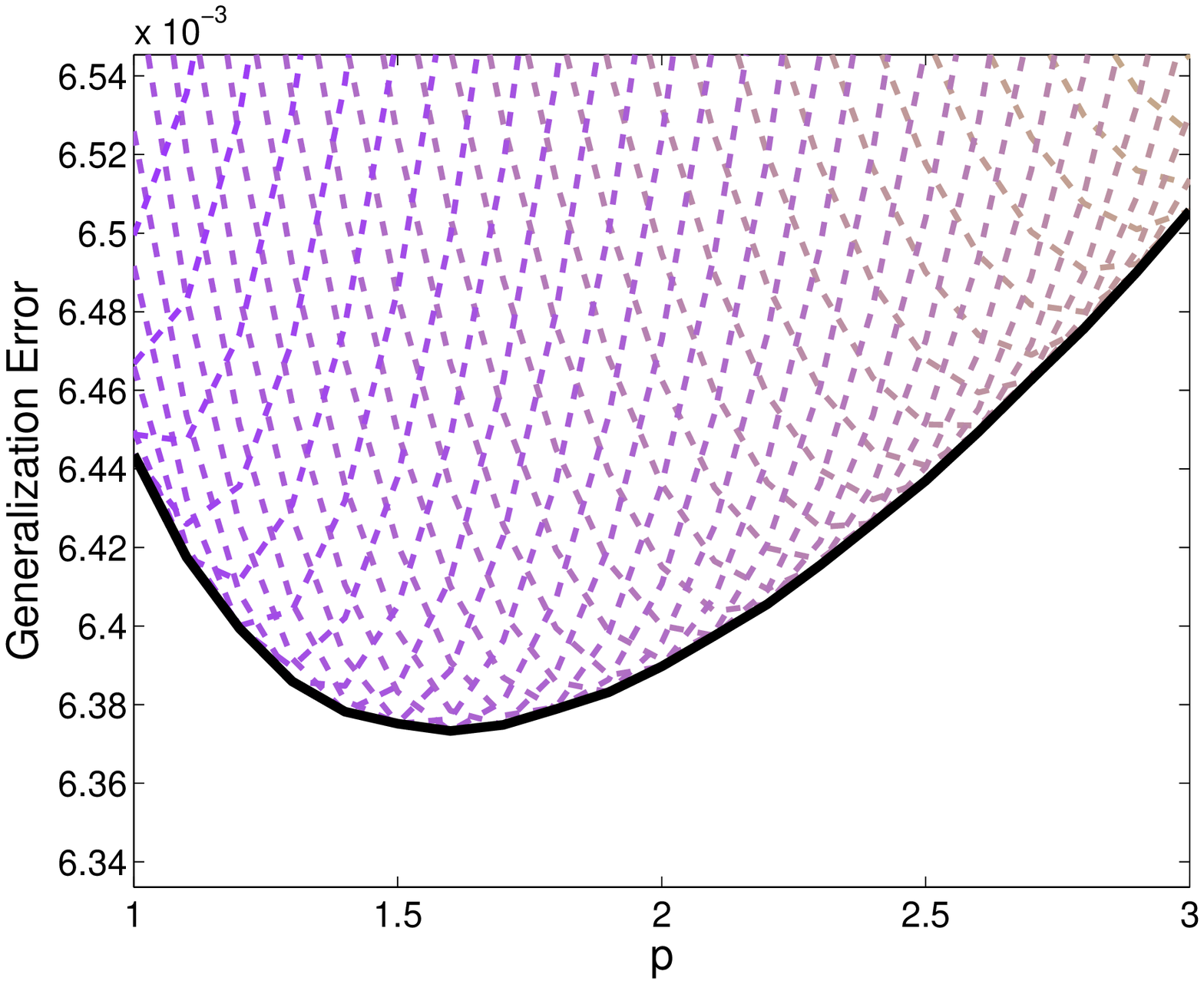}
}

\caption{The expected generalization error $\EE[\|\fhat - \fstar\|_{\LPi}^2]$ against the parameter $p$ for \ellp-MKL.
Each broken line corresponds to one regularization parameter. 
The bold solid line shows the best generalization error among all the regularization parameters.}
\label{fig:HomInhomComparison}
\end{center}
\end{figure}

Figure \ref{fig:HomInhomComparison} shows the average generalization errors 
in 
the homogeneous setting with (a) $n=200$ and (c) $n=400$, 
and
the inhomogeneous setting with (b) $n=200$ and (d) $n=400$.
Each broken line corresponds to one regularization parameter. 
The bold solid line shows the best (average) generalization error among all the regularization parameters.
We can see that in the homogeneous setting $\ell_1$-regularization shows the best performance,
on the other hand, in the inhomogeneous setting the best performance is achieved at $p>1$ for both $n=200$ and $400$. %(around $p=1.8$).
This experimental results beautifully matches the theoretical investigations.

\section{Generalization of loss function}
\label{sec:GeneralLoss}
Here we discuss how a general loss function other than squared loss can be involved into our analysis.
As in the standard local Rademacher complexity argument \citep{LocalRademacher}, 
we consider a class of loss functions that are Lipschitz continuous and strongly convex.
Suppose that the loss function $\psiloss:\Real \times \Real \to \Real$ satisfies Lipschitz continuity:
for all $R > 0$, there exists a constant $T(R)$ such that 
\begin{align} 
|\psiloss(y,f_1) - \psiloss(y,f_2)| \leq T(R)|f_1 - f_2|~~~~(\forall f_1,f_2 \in \Real~\text{such that $|f_1|,|f_2|\leq R$},~\forall y\in \Real).
\label{eq:psylossLipschitz}
\end{align} 
Moreover, suppose that, for all $y \in \Real$, $\psiloss(y,f)$ is a strongly convex with a modulus $\rho(R)>0$:
\begin{align} 
&\frac{\psiloss(y,f_1) + \psiloss(y,f_2)}{2} \geq \psiloss\left(y,\frac{f_1 + f_2}{2}\right) + \frac{\rho(R)}{2}|f_1 - f_2|^2 \notag \\
&~~(\forall f_1,f_2 \in \Real~\text{such that $|f_1|,|f_2|\leq R$}).
\label{eq:psylossStrongConvex}
\end{align} 
Some detailed discussions about these conditions and examples can be found in \cite{BartlettConvexity}.
Under the loss functions satisfying these properties, we obtain simplified bound
where some conditions can be omitted as follows:
\begin{itemize}
\item
We can remove 
the condition $\frac{4\phi \sqrt{n}}{\kminrho} \max\{\alpha_1^2,\beta_1^2,\frac{M\log(M)}{n} \}\eta(t') \leq \frac{1}{12}$,
\item 
The term $\exp(-t')$ is not needed in the tail probability.
%\item
%Assumption \ref{ass:linfbound} is no longer required.
\end{itemize}

To obtain a fast convergence rate on a general loss functions $\psiloss$, 
we move the regularization term in \Eqref{eq:primalLp} into a constraint,
and then consider the following optimization problem:
\begin{align}
\fhat = \sum_{m=1}^M \fhat_m = &
\mathop{\arg \min}_{\begin{subarray}{c}~f_m \in \calH_m~(m=1,\dots,M), \\ 
\|f \|_{\psi} \leq \Rn
\end{subarray}}
\frac{1}{n}\sum_{i=1}^N \psiloss \left(y_i, \sum_{m=1}^M f_m(x_i) \right),
\label{eq:primalLp_Lf}
\end{align}
where $\Rn$ is a regularization parameter.
The above optimization problem is essentially equivalent to the original formulation \eqref{eq:primalLp},
but 
by considering the constraint type regularization instead of the penalty type regularization
the theoretical analysis of statistical performance can be simplified.
%is just for theoretical simplicity. 

We define $Pg$ as the expectation of a function $g:\Real \times \Real \to \Real$:
$$
Pg := \EE_{(X,Y)\sim P}[g(X,Y)].
$$
For notational simplicity, we write $P\psiloss(f) = P\psiloss(Y,f) = \EE_{(X,Y)\sim P}[\psiloss(Y,f(X))]$
for a function $f$.
We suppose there exists a minimizer for $P\psiloss(f)$ as follows.
\begin{Assumption}{\bf(Minimizer Existence Assumption)}\\ 
\label{ass:BasicAss_2}
%\item[{\rm \Assump{\label{ass:truenoise}}}]
%\noindent {\rm \Assump{\label{ass:truenoise_2}}}
There exists unique $\fstar = (\fstar_{1},\dots,\fstar_{M}) \in \calHtot$
such that 
\begin{flalign*}
%\label{eq:spetrumassump2}
\text{\rm \Assump{}} &&
\fstar = \sum_{m=1}^M \fstar_{m} = 
\mathop{\arg \min}_{
%\begin{subarray}{c}
f_m \in \calH_m~(m=1,\dots,M)%, \\
%\|f\|_{\psi} \leq \Rn
%\end{subarray}
}
P \psiloss \left(\sum_{m=1}^M f_m(X) \right).
&&
\end{flalign*}
\end{Assumption}
Note that, due to 
the incoherence assumption (Assumption \ref{ass:incoherence})
and 
the strong convexity \eqref{eq:psylossStrongConvex} of the loss function,
if there exists a minimizer, then that is automatically unique. 

To bound the convergence rate on a general loss function, 
it is convenient to utilize \emph{local Rademacher complexity} on $\psi$-norm ball.
Let $\calH_{\psi}^{(r)}(R) := \{f\in \calHtot \mid \|f\|_{\LPi}\leq r ,~\|f\|_{\psi} \leq R \}$.
Then the local Rademacher complexity of $\calH_{\psi}^{(r)}(R)$ is defined as
$$
R_n(\calH_{\psi}^{(r)}(R)) := \EE_{\{\sigma_i,x_i\}_{i=1}^n}\left[ \sup_{f \in \calH_{\psi}^{(r)}(R)} \frac{1}{n} \sum_{i=1}^n \sigma_i f(x_i) \right],
$$
where $\sigma_i \in \{\pm 1\}$ is the i.i.d. Rademacher random variable with $P(\sigma_i = 1) = P(\sigma_i = -1) = \frac{1}{2}$.
Evaluating the local Rademacher complexity is a key ingredient to show a fast convergence rate on a general loss function.
We obtain the following estimation of the local Rademacher complexity 
(the proof will be given in Appendix \ref{sec:DerivationLocalRademacher}).

\begin{Lemma}
\label{lemm:localRademacherComplexity}
Let $\{\cm\}_{m=1}^M$ be arbitrary positive reals.
Under Assumptions 2-5,
there exists a constant $\tilde{\phi}$ depending on $\{\sm\}_{m=1}^M,c,C_1$ such that 
for all $n$ satisfying  $\frac{\log(M)}{\sqrt{n}}\leq 1$ we have 
\begin{align*}
R_n(\calH_{\psi}^{(r)}(R)) \leq  
\tilde{\phi}
\left(\alpha_1 \frac{r}{\sqrt{\kminrho}} +  
\alpha_2 R 
+ \beta_1 \frac{r}{\sqrt{\kminrho}} +  
\beta_2 R 
+\sqrt{\frac{M\log(M)}{n}} \frac{r}{\sqrt{\kminrho}}\right).
\end{align*}
\end{Lemma}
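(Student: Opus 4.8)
The plan is to reduce the mixed-norm Rademacher complexity to a sum of per-kernel contributions and to control each through the spectral decay (A3), the incoherence bound (A4), and the sup-norm embedding (A5). First I would use Assumption (A4): every $f=\sum_m f_m$ in $\calH_{\psi}^{(r)}(R)$ satisfies $\kminrho\sum_m\|f_m\|_{\LPi}^2\le\|f\|_{\LPi}^2\le r^2$, so $\sum_m\|f_m\|_{\LPi}^2\le r^2/\kminrho$, while the constraint gives $\|(\|f_m\hnorm{m})_m\|_{\psi}\le R$. Writing $\frac1n\sum_i\sigma_i f(x_i)=\sum_m\frac1n\sum_i\sigma_i f_m(x_i)$ and expanding each $f_m=\sum_\ell a_{\ell,m}\phi_{\ell,m}$ in the Mercer basis of $T_{k_m}$, I would split each kernel's sum at a truncation level $d_m$, chosen as $d_m\asymp r_m^{-2\sm}$.

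The head $\ell\le d_m$ is bounded by Cauchy--Schwarz against the $L_2$ budget, $\sum_{\ell\le d_m}a_{\ell,m}h_{\ell,m}\le\|f_m\|_{\LPi}\,\big(\sum_{\ell\le d_m}h_{\ell,m}^2\big)^{1/2}$ with $h_{\ell,m}=\frac1n\sum_i\sigma_i\phi_{\ell,m}(x_i)$; since $\EE h_{\ell,m}^2=1/n$, Jensen and a second Cauchy--Schwarz across $m$ give an expected head bound $\frac{r}{\sqrt{\kminrho}}(\sum_m d_m/n)^{1/2}$, which is exactly $\alpha_1\,r/\sqrt{\kminrho}$ for $d_m\asymp r_m^{-2\sm}$. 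The tail $\ell>d_m$ is bounded using the RKHS budget, $\sum_{\ell>d_m}a_{\ell,m}h_{\ell,m}\le\|f_m\hnorm{m}\big(\sum_{\ell>d_m}\mu_{\ell,m}h_{\ell,m}^2\big)^{1/2}$, where (A3) yields $\sum_{\ell>d_m}\mu_{\ell,m}\le\frac{c\sm}{1-\sm}d_m^{-(1-\sm)/\sm}\asymp r_m^{2(1-\sm)}$; applying H\"older's inequality with the dual norm $\|\cdot\|_{\psi^*}$ to the vector of tail contributions converts this into $R\,\|(\sm r_m^{1-\sm}/\sqrt n)_m\|_{\psi^*}=\alpha_2 R$, the extra $\sm,c$-dependent factors being absorbed into $\tilde{\phi}$ by monotonicity of the dual norm.

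The remaining terms $\beta_1 r/\sqrt{\kminrho}+\beta_2 R+\sqrt{M\log(M)/n}\,r/\sqrt{\kminrho}$ arise because $R_n$ is taken over the data as well: the clean computation above uses the population moments $\EE h_{\ell,m}^2=1/n$ and the population eigenvalues, whereas the supremum couples these to the empirical spectral quantities. I would control this by replacing the population eigen-structure with the empirical one and bounding the expected fluctuation of the empirical eigenfunction moments $\frac1n\sum_i\phi_{\ell,m}(x_i)^2$ and of the empirical tail eigenvalue sums. Here the embedding (A5) is essential: $\|f_m\|_\infty\le C_1\|f_m\hnorm{m}^{1-\sm}\|f_m\|_{\LPi}^{\sm}$ supplies the sup-norm/variance control needed for a Bernstein-type estimate on each empirical moment, and a maximal inequality over the $M$ kernels produces the $\sqrt{M\log(M)/n}$ factor. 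Propagating these fluctuations through the same head/tail split and rebalancing at $d_m$ yields precisely the modified exponents of $\beta_1,\beta_2$, namely the $n^{-2/(1+\sm)}$ and $r_m^{-2\sm(3-\sm)/(1+\sm)}$ powers. Collecting every contribution and absorbing all constants depending only on $\{\sm\},c,C_1$ into $\tilde{\phi}$ gives the stated bound.

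The hard part will be this last step: obtaining the exact $\beta_1,\beta_2$ exponents requires the concentration of the empirical spectral quantities to be carried out with the correct truncation $d_m$ and with the embedding (A5) inserted at the right moment, all while keeping the mixed-norm and dual-norm bookkeeping consistent simultaneously across the $M$ kernels. The $\alpha$-terms are essentially a two-line Cauchy--Schwarz/H\"older argument; it is the empirical-to-population correction that carries the real difficulty.
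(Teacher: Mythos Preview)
Your Mercer-truncation argument for the $\alpha_1,\alpha_2$ contributions is essentially the Mendelson/Kloft--Blanchard route and is sound in spirit, but it is \emph{not} how the paper proceeds, and your account of the remaining terms misidentifies their origin. The paper does not use an eigen-decomposition at all. Instead it (i) invokes Steinwart's entropy-integral bound (the paper's Proposition~\ref{prop:localFmsBound}), which for a ball of $L_2$-radius $\sigma$, RKHS-radius $a$ and sup-radius $b$ already produces \emph{two} regimes,
\[
\frac{\sigma^{1-s_m}a^{s_m}}{\sqrt n}\ \vee\ a^{\frac{2s_m}{1+s_m}}\,b^{\frac{1-s_m}{1+s_m}}\,n^{-\frac{1}{1+s_m}},
\]
(ii) peels over $L_2$-shells to turn this into a ratio bound $\sup_{f_m}|n^{-1}\sum_i\sigma_i f_m(x_i)|/\Uns(f_m)$, (iii) applies Talagrand's inequality plus a union bound over the $M$ kernels, and finally (iv) integrates the resulting tail to obtain $R_n$. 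The $\beta_1,\beta_2$ exponents are exactly the second regime above after inserting $b\le C_1 a^{1-s_m}\sigma^{s_m}$ from (A5); the $\sqrt{M\log(M)/n}$ term is the cost of the union bound in step (iii), built into the $\sqrt{\log(M)/n}\,\|f_m\|_{\LPi}$ piece of $\Uns$.

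So the $\beta$-terms are \emph{not} an empirical-versus-population correction of spectral moments; they are the small-variance/large-sup-norm branch of the chaining bound, and the $\log M$ factor is a concentration artefact, not a maximal inequality over eigenfunction fluctuations. If you try to extract the specific powers $n^{-2/(1+s_m)}$ and $r_m^{-2s_m(3-s_m)/(1+s_m)}$ from Bernstein-type control of $\frac1n\sum_i\phi_{\ell,m}(x_i)^2$ you will not get them. Conversely, if your Mercer argument really delivered the clean bound $\alpha_1 r/\sqrt{\kminrho}+\alpha_2 R$, the lemma would follow trivially since the other terms are nonnegative; but note the tail step has a gap: after H\"older you need $\EE\|(W_m)_m\|_{\psi^*}$ with $W_m=(\sum_{\ell>d_m}\mu_{\ell,m}h_{\ell,m}^2)^{1/2}$, and Jensen goes the wrong way for pulling the expectation inside a norm. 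The paper sidesteps all of this by working with the deterministic envelope $\Uns(f_m)$ and concentrating the ratio, which is precisely why the $\beta$ and $\log M$ terms appear in the final statement.
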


Finally note that the supremum norm of $f$ with $\|f\|_{\psi} \leq \Rn$ can be bounded as 
$$
\|f\|_{\infty} \leq \sum_{m=1}^M \|f_m\|_{\infty}
\leq \sum_{m=1}^M \|f_m\hnorm{m}
\leq \|\boldone\|_{\psi^*} \|f \|_{\psi} \leq \|\boldone\|_{\psi^*} \Rn.
$$ 
Then, we obtain the {\it excess risk} bound as in the following theorem.
\begin{Theorem}
\label{th:convergencerateofLpMKL_Lipschitz}
Suppose Assumptions 2-5 and 7 are satisfied and 
the loss function $\psiloss$ satisfies 
the conditions \eqref{eq:psylossLipschitz} and \eqref{eq:psylossStrongConvex}. 
Let $\{\cm\}_{m=1}^M$ be arbitrary positive reals that can depend on $n$ and 
let $\bar{T} = T(\|\boldone\|_{\psi^*} \Rn)$ and $\bar{\rho} = \rho(\|\boldone\|_{\psi^*} \Rn)$.
Set $\Rn = \|\fstar\|_{\psi}$.
Then there exists a constant $\tilde{\phi}'$ depending on $\{\sm\}_{m=1}^M,c,C_1$
such that for all $n$ satisfying  $\frac{\log(M)}{\sqrt{n}}\leq 1$, % and $\Rn > 0$,
we have 
\begin{align}
&
P(\psiloss(\fhat) - \psiloss(\fstar_{\Rn})) \notag \\
\leq 
&
\frac{\tilde{\phi}' \bar{\rho}}{\kminrho}
\left(\alpha_1^2 + \beta_1^2 + \frac{M\log(M)}{n} \right)
+
\tilde{\phi}' \frac{\bar{T}^2}{\bar{\rho}} %\left(
\left[
\left(\frac{\alpha_2}{\alpha_1}\right)^2+ 
\left(\frac{\beta_2}{\beta_1}\right)^2
\right]
\|\fstar\|_{\psi}^2
%\right) 
\notag \\
&+ \frac{\{22 \bar{T} \|\boldone\|_{\psi^*} \Rn + 27 \bar{\rho}\}t}{n},
\label{eq:ConvRateInMainTh_Lipschitz}
\end{align}
%\begin{align}
%&
%P(\psiloss(\fhat) - \psiloss(\fstar_{\Rn})) \notag \\
%\leq 
%&
%\frac{\eta(t)^2 \phi'^2 T(\|\boldone\|_{\psi^*} \Rn)^2}{\kminrho \rho(\|\boldone\|_{\psi^*} \Rn)} 
%\left(\alpha_1^2 + \beta_1^2 + \frac{M\log(M)}{n} \right)
%+
%2 \left[
%\left(\frac{\alpha_2}{\alpha_1}\right)^2+ 
%\left(\frac{\beta_2}{\beta_1}\right)^2
%\right]
%\|\fstar\|_{\psi}^2,
%\label{eq:ConvRateInMainTh_Lipschitz}
%\end{align}
with probability $1- \exp(- t)$.
\end{Theorem}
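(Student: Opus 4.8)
The plan is to follow the standard local-Rademacher-complexity argument for excess risk of empirical risk minimizers under Lipschitz and strongly convex losses, as in \cite{LocalRademacher}, with the key analytic input being the local Rademacher complexity bound already established in Lemma~\ref{lemm:localRademacherComplexity}. First I would set up the excess-loss class. Since $\fhat$ minimizes the empirical risk over the ball $\{\|f\|_{\psi}\leq \Rn\}$ and $\fstar$ (equal to $\fstar_{\Rn}$ since $\Rn=\|\fstar\|_{\psi}$) is the population minimizer lying in that ball, we have the basic inequality $P\psiloss(\fhat)-P\psiloss(\fstar) \leq (P-P_n)(\psiloss(\fhat)-\psiloss(\fstar))$, where $P_n$ denotes the empirical measure. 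The strong convexity \eqref{eq:psylossStrongConvex} with modulus $\bar\rho$ gives the crucial quadratic lower bound $P\psiloss(f)-P\psiloss(\fstar) \geq \frac{\bar\rho}{2}\|f-\fstar\|_{\LPi}^2$ (using that $\fstar$ is the minimizer so the linear term vanishes), which lets us translate an excess-risk bound into control of $\|\fhat-\fstar\|_{\LPi}$ and thereby restrict attention to a band of small $\LPi$-radius.

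Next I would apply the peeling device of \cite{Book:VanDeGeer:EmpiricalProcess} together with Talagrand's concentration inequality to bound $\sup_{f}(P-P_n)(\psiloss(f)-\psiloss(\fstar))$ over the shell $\{\|f-\fstar\|_{\LPi}\leq r,\ \|f\|_{\psi}\leq\Rn\}$. By Lipschitz continuity \eqref{eq:psylossLipschitz} with constant $\bar T$ and the contraction (Ledoux--Talagrand) inequality, the Rademacher complexity of the loss-composed class is controlled by $\bar T$ times the local Rademacher complexity $R_n(\calH_{\psi}^{(r)}(\Rn))$ of the function class itself, for which Lemma~\ref{lemm:localRademacherComplexity} gives
\begin{align*}
R_n(\calH_{\psi}^{(r)}(\Rn)) \leq
\tilde{\phi}\left((\alpha_1+\beta_1)\frac{r}{\sqrt{\kminrho}} + (\alpha_2+\beta_2)\Rn + \sqrt{\frac{M\log(M)}{n}}\frac{r}{\sqrt{\kminrho}}\right).
\end{align*}
The $\infty$-norm bound $\|f\|_{\infty}\leq\|\boldone\|_{\psi^*}\Rn$ displayed just before the theorem supplies the uniform envelope needed for Talagrand's inequality, producing the additive term $\frac{\{22\bar T\|\boldone\|_{\psi^*}\Rn+27\bar\rho\}t}{n}$ with probability $1-\exp(-t)$.

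The heart of the argument is then the sub-root fixed-point analysis. Combining the quadratic lower bound from strong convexity with the complexity estimate, the excess risk is bounded by a function of $r$ that is sub-root in $r$; I would solve for the fixed point $r_*$ where the complexity term balances $\bar\rho r^2$. The terms linear in $r$ (namely $(\alpha_1+\beta_1)r/\sqrt{\kminrho}$ and $\sqrt{M\log(M)/n}\,r/\sqrt{\kminrho}$) contribute, after the standard $ab\leq\frac{\bar\rho}{4}a^2+\frac{1}{\bar\rho}b^2$ splitting against $\bar\rho r^2$, the terms $\frac{\bar\rho}{\kminrho}(\alpha_1^2+\beta_1^2+\frac{M\log(M)}{n})$, while the $r$-independent term $(\alpha_2+\beta_2)\Rn$ multiplied by $\bar T$ and rescaled yields $\frac{\bar T^2}{\bar\rho}[(\alpha_2/\alpha_1)^2+(\beta_2/\beta_1)^2]\|\fstar\|_{\psi}^2$ after substituting $\Rn=\|\fstar\|_{\psi}$ and absorbing $\alpha_1,\beta_1$ appropriately. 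I expect the main obstacle to be precisely this bookkeeping: correctly tracking how the Lipschitz constant $\bar T$ and modulus $\bar\rho$ enter each term so that the $\bar\rho$ and $\bar T^2/\bar\rho$ prefactors land on the right pieces, and verifying that the cross term $(\alpha_2/\alpha_1)^2$ emerges from balancing $\alpha_2\Rn$ against the $\alpha_1$-scale of the fixed point rather than appearing spuriously elsewhere. The reason the conditions involving $\eta(t')$ and the extra $\exp(-t')$ tail can be dropped here (unlike in Theorem~\ref{th:convergencerateofLpMKL}) is that strong convexity provides the variance-to-mean relationship directly, obviating the separate empirical-process bound on the squared-loss multiplier term that forced those conditions in the least-squares case.
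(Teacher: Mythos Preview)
Your proposal is correct and takes essentially the same approach as the paper: the paper's proof is a one-liner that applies Corollary~5.3 of \cite{LocalRademacher} (adapted to the range $[-a,a]$ with $a=\|\boldone\|_{\psi^*}\Rn$ as in Lemma~9 of \cite{arXiv:Kloft+Gilles:2011}) to the local Rademacher complexity bound of Lemma~\ref{lemm:localRademacherComplexity}. What you have written simply unpacks the machinery behind that corollary---the basic inequality, the Bernstein-type variance--mean relation from strong convexity, contraction for the Lipschitz loss, Talagrand's inequality with the $\|\boldone\|_{\psi^*}\Rn$ envelope, and the sub-root fixed-point computation---so the two arguments coincide.
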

This can be shown by applying the bound of the local Rademacher complexity (Lemma \ref{lemm:localRademacherComplexity})
to Corollary 5.3 of \cite{LocalRademacher}\footnote{
In Corollary 5.3 of \cite{LocalRademacher}, the range of the function class is assumed to be included in the interval $[-1,1]$.
Here we utilize more general settings where the interval is $[-a,a]$ and $\|\boldone\|_{\psi^*} \Rn$ is substituted to $a$. 
See Lemma 9 of \cite{arXiv:Kloft+Gilles:2011}. }. %,BartlettConvexity}.
% We omit the proof.
% Theorem \ref{th:convergencerateofLpMKL} and the technique developed 
%we notice that there is no term related to $\beta_1$ and $\beta_2$ in \Eqref{eq:ConvRateInMainTh_Lipschitz}.
Compared with the bound in \Eqref{eq:ConvRateInMainTh}, 
we notice that there is no $\exp(-t')$ term in the tail probability bound, and thus
we don't need the condition $\frac{4\phi \sqrt{n}}{\kminrho} \max\{\alpha_1^2,\beta_1^2,\frac{M\log(M)}{n} \}\eta(t') \leq \frac{1}{12}$.
Because of this, the range of $n$ where the error bound holds is relaxed compared with that in Theorem \ref{th:convergencerateofLpMKL}.
%Moreover we also notice that Assumption \ref{ass:linfbound} is not assumed in Theorem \ref{th:convergencerateofLpMKL_Lipschitz}.
These simplifications are due to the Lipschitz continuity of the loss function.
In Theorem \ref{th:convergencerateofLpMKL}, 
we should have bounded the discrepancy between the empirical and population means of the squared loss:
$\frac{1}{n} \sum_{i=1}^n (\fhat(x_i) - \fstar(x_i))^2 - P (\fhat - \fstar)^2$.
Since the squared loss is not Lipschitz continuous,
we required an additional bound for that discrepancy using Assumption \ref{ass:linfbound} for the supremum norm,
and it was shown that that discrepancy is negligible at the cost of $\exp(-t')$ in the tail probability.
%we required an additional assumption (Assumption \ref{ass:linfbound}) for the supremum norm to obtain the fast learning rate.
%Under Assumption \ref{ass:linfbound}, the discrepancy between the empirical and population means of the squared loss 
%becomes negligible at the cost of $\exp(-t')$ in the tail probability.
On the other hand, for Lipschitz continuous losses, we no longer need to bound such a quantity. 
Thus %Assumption \ref{ass:linfbound} and 
the tail probability loss $\exp(-t')$ is not induced.

Since the bound \eqref{eq:ConvRateInMainTh_Lipschitz} is basically same as Eq.\eqref{eq:ConvRateInMainTh},
we obtain the same discussions as in the previous sections.
For example, in the homogeneous setting, we obtain the following convergence bound.
\begin{Lemma}
\label{lem:smcmuniformbound_Lipschitz}
When $\sm = s~(\forall m)$ with some $0<s<1$,  
if we set
$\Rn = \|\fstar\|_{\psi}$,
then 
for all $n$ satisfying  $\frac{\log(M)}{\sqrt{n}} \leq 1$ and 
$n \geq  (\|\boldone\|_{\psi^*} \|\fstar\|_{\psi}/M)^{\frac{4s}{1-s}}$,
and for all $t \geq 1$,
we have 
\begin{align}
%\|\fhat - \fstar\|_{\LPi}^2 
P(\psiloss(\fhat) - \psiloss(\fstar))
\leq 
C %\eta(t)^2  
\left\{
M^{1-\frac{2\smm{}}{1+\smm{}}} n^{-\frac{1}{1+\smm{}}}(\|\boldone\|_{\psi^*} \|f^*\|_{\psi})^{\frac{2\smm{}}{1+\smm{}}}
 + \frac{M\log(M)}{n} 
+ \frac{t}{n}\right\}, \notag
\end{align}
with probability $1- \exp(- t)$ where $C$ is a constant depending on $\tilde{\phi}'$, $\kminrho$, $\rho(\|\boldone\|_{\psi^*} \Rn)$, and
$T(\|\boldone\|_{\psi^*} \Rn)$.
\end{Lemma}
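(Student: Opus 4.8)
The plan is to obtain the lemma as a direct specialization of Theorem~\ref{th:convergencerateofLpMKL_Lipschitz} to the homogeneous case, mirroring exactly the computation that turns Theorem~\ref{th:convergencerateofLpMKL} into Lemma~\ref{lem:smcmuniformbound}. First I would invoke Theorem~\ref{th:convergencerateofLpMKL_Lipschitz} with the free reals $\{r_m\}_{m=1}^M$ all set to a common value $r>0$ (to be fixed later); this is legitimate because the theorem holds for arbitrary positive $\{r_m\}$. Since $\Rn = \|\fstar\|_{\psi}$ is a fixed radius, the quantities $\bar\rho = \rho(\|\boldone\|_{\psi^*}\Rn)$ and $\bar T = T(\|\boldone\|_{\psi^*}\Rn)$, together with the entire coefficient $22\bar T\|\boldone\|_{\psi^*}\Rn + 27\bar\rho$ multiplying $t/n$, are absorbed into the constant $C$ asserted in the lemma. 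This immediately isolates the $t/n$ contribution, so it remains to simplify the terms built from $\alpha_1,\alpha_2,\beta_1,\beta_2$.

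Next I would carry out the homogeneous simplification with $\sm=s$ and $r_m=r$. A short computation gives $\alpha_1^2 = 9\,r^{-2s}M/n$ and $(\alpha_2/\alpha_1)^2 = s^2 r^2\|\boldone\|_{\psi^*}^2/M$; using the identity $(1-s)^2+s(3-s)=1+s$, the corresponding $\beta$-expressions satisfy $(\beta_2/\beta_1)^2 = (\alpha_2/\alpha_1)^2$ and $\beta_1^2 = 9\,r^{-\frac{2s(3-s)}{1+s}}M\,n^{-\frac{2}{1+s}}$. Thus the excess-risk bound of Theorem~\ref{th:convergencerateofLpMKL_Lipschitz} reduces, up to constants absorbed in $C$, to $\alpha_1^2 + \beta_1^2 + \tfrac{M\log M}{n} + \tfrac{s^2 r^2\|\boldone\|_{\psi^*}^2\|\fstar\|_{\psi}^2}{M} + \tfrac{t}{n}$, which is precisely the quantity minimized in the proof of Lemma~\ref{lem:smcmuniformbound}. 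In particular the $\alpha_2$ and $\beta_2$ contributions coincide, so no separate treatment of $\beta_2$ is needed.

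I would then fix $r$ by balancing $\alpha_1^2$ against $(\alpha_2/\alpha_1)^2\|\fstar\|_{\psi}^2$, i.e.\ take $r$ of order $\big(M^2/(n\|\boldone\|_{\psi^*}^2\|\fstar\|_{\psi}^2)\big)^{1/(2(1+s))}$, which is exactly the scaling implicitly used via $\lambdaone$ in Lemma~\ref{lem:smcmuniformbound}. Substituting back collapses both balanced terms to $M^{1-\frac{2s}{1+s}}n^{-\frac{1}{1+s}}(\|\boldone\|_{\psi^*}\|\fstar\|_{\psi})^{\frac{2s}{1+s}}$, the announced leading term. Finally I must verify that $\beta_1^2$ does not dominate: writing $\beta_1^2/\alpha_1^2 = (r^{4s}n)^{\frac{s-1}{1+s}}$ and inserting the chosen $r$ shows this ratio is $\le 1$ precisely when $n \ge (\|\boldone\|_{\psi^*}\|\fstar\|_{\psi}/M)^{\frac{4s}{1-s}}$, which is the sample-size hypothesis of the lemma. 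Collecting the leading term, the $\tfrac{M\log M}{n}$ term, and the $\tfrac{t}{n}$ term yields the claimed bound with probability $1-\exp(-t)$.

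The argument is essentially bookkeeping once Theorem~\ref{th:convergencerateofLpMKL_Lipschitz} is in hand; the only genuinely delicate point is the verification that the sample-size condition $n \ge (\|\boldone\|_{\psi^*}\|\fstar\|_{\psi}/M)^{\frac{4s}{1-s}}$ is exactly what guarantees $\beta_1^2 \lesssim \alpha_1^2$, so that the $\beta$-contributions fold into the leading term instead of worsening the rate. Everything else, including the Lipschitz and strong-convexity moduli $\bar T,\bar\rho$, is harmless because $\Rn$ is held fixed and these quantities enter only through the constant $C$.
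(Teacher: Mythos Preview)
Your proposal is correct and follows exactly the approach the paper intends: the paper does not give a separate proof of this lemma but simply remarks that since the bound of Theorem~\ref{th:convergencerateofLpMKL_Lipschitz} is the same as that of Theorem~\ref{th:convergencerateofLpMKL} (with the extra $t/n$ term), the homogeneous simplification proceeds exactly as in the proof of Lemma~\ref{lem:smcmuniformbound}. Your computations (in particular the verification that $(\beta_2/\beta_1)^2=(\alpha_2/\alpha_1)^2$ via $(1-s)^2+s(3-s)=1+s$, the balancing choice of $r$, and the check that $\beta_1^2\le\alpha_1^2$ under the sample-size hypothesis) match the paper's Appendix~\ref{sec:smcmuniformboundproof} line for line.
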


\section{Conclusion and Future Work} % and Discussion}
We have shown a unifying framework to derive the learning rate of MKL with arbitrary mixed-norm-type regularization.
To analyze the general result, we considered two situations: homogeneous settings and inhomogeneous settings.
We have seen that the convergence rate of $\ell_p$-MKL obtained in homogeneous settings is tighter and 
requires less restrictive condition than existing results.
We have also shown convergence rates of some examples (elasticnet-MKL and VSKL), and 
proved the derived learning rate is minimax optimal when $\psi$-norm is isotropic.
An interesting consequence was that $\ell_1$-regularization is optimal among all isotropic $\psi$-norm regularization in homogeneous settings.
In the analysis of inhomogeneous settings, 
we have shown that the dense type regularization can outperform the sparse $\ell_1$-regularization
using analytically obtained bounds and numerically computed bounds.
We observed that our bound well explains the experimental results favorable for dense type MKL.
Finally we numerically investigated the generalization errors of \ellp-MKL in a homogeneous setting and an inhomogeneous setting. 
The numerical experiments supported the theoretical findings that 
$\ell_1$-regularization is optimal in homogeneous settings 
but, on the other hand, dense type regularizations are preferred in inhomogeneous settings. 
This is the first result that suggests that 
the inhomogeneity of the complexities of RKHSs well justifies the favorable performances for dense type MKL. 
%strongly justify the effectiveness of dense type regularization in MKL.

An interesting future work is about the $\frac{M \log(M)}{n}$ term appeared in the bound \Eqref{eq:convergenceRateSimple}. 
Because of this term, our bound is $O(M\log(M))$ with respect to $M$ while in the existing work that is 
$O(\sqrt{\log(M)} \vee M^{1-\frac{1}{p}})$ for \ellp-MKL. 
Therefore our bound is not tight in the global bound regime ($n\leq M^{\frac{2}{p}} R_p^{-2}\log(M)^{\frac{1+s}{s}}$
for \ellp-MKL).
It is an interesting issue to clarify whether the term $\frac{M \log(M)}{n}$ can be replaced by other tighter bounds or not.
To do so, it might be helpful to combine our technique developed in this paper and 
that developed by \cite{arXiv:Kloft+Gilles:2011} where the local Rademacher complexity for $\ell_p$-MKL is derived.  
%We conjecture that the term $\frac{M \log(M)}{n}$ can be replaced by other tighter bounds. 
%%Hence our bound might not be reliable for large $M$.
%To do so, it might be useful to precisely estimate the covering number of $\calH_{\ell_p}(R)$.

\section*{Acknowledgement}
We would like to thank Marius Kloft, Gilles Blanchard, Ryota Tomioka and Masashi Sugiyama 
for suggestive discussions. 
This work was partially supported by MEXT Kakenhi 22700289 and 
the Aihara Project, the FIRST program from JSPS, initiated by CSTP.

\appendix 
\section{Relation between Entropy Number and Spectral Condition}
Associated with the $\epsilon$-covering number, 
the {\it $i$-th entropy number} $e_i(\calH_m \to \LPi) $ is defined as the infimum over all $\varepsilon>0$ for which $N(\varepsilon,\mathcal{B}_{\calH_m},\LPi) \leq 2^{i-1}$.
If the spectral assumption (A3) and the boundedness assumption (A\ref{ass:kernelbound}) hold, 
the relation \eqref{eq:coveringcondition} implies that the $i$-th entropy number is bounded as 
\begin{align}
\label{eq:entropycondition}
e_i(\calH_m \to \LPi) \leq C i^{- \frac{1}{2s}},
\end{align}
where $C$ is a constant.
To bound empirical process a bound of the entropy number with respect to the empirical distribution is needed.
The following proposition gives an upper bound of that (see Corollary 7.31 of \cite{Book:Steinwart:2008}, for example).
\begin{Proposition}
\label{prop:upperboundofe}
If there exists constants $0<s<1$ and $C \geq 1$ such that 
$e_i(\calH_m \to \LPi) \leq C i^{- \frac{1}{2s}}$, then  there exists a constant $c_s > 0$ only depending on $s$ such that 
\begin{align*}
\EE_{D_n \sim \Pi^n}[e_i(\calH_m \to L_2(D_n))] \leq c_s C (\min(i,n))^{\frac{1}{2s}} i^{-\frac{1}{s}},
\end{align*}
in particular 
$
\EE_{D_n \sim \Pi^n}[e_i(\calH_m \to L_2(D_n))] \leq c_s C  i^{-\frac{1}{2s}}.
$
\end{Proposition}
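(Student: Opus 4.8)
The statement restates Corollary~7.31 of \cite{Book:Steinwart:2008}, and the plan is to reconstruct it by passing from the population metric $\LPi$ to the empirical metric $L_2(D_n)$ through the geometry of the finite-dimensional range. First note that the ``in particular'' clause is immediate from the main inequality, since $\min(i,n)\le i$ gives $(\min(i,n))^{\frac{1}{2s}} i^{-\frac{1}{s}} \le i^{\frac{1}{2s}} i^{-\frac{1}{s}} = i^{-\frac{1}{2s}}$; so it suffices to prove the sharper $\min(i,n)$ form, and I would split into the regimes $i\le n$ and $i>n$, meeting continuously at $i=n$.

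The key deterministic observation is that, for a fixed sample $D_n$, the space $L_2(D_n)$ is Euclidean of dimension at most $n$, so the image of the unit ball $\calB_{\calH_m}$ under the embedding $\mathrm{id}\colon\calH_m\to L_2(D_n)$ is an ellipsoid whose semi-axes are the singular values $\hat\sigma_1\ge\cdots\ge\hat\sigma_n\ge 0$ of the embedding, i.e.\ $\hat\sigma_\ell^2=\hat\mu_\ell$, the eigenvalues of the normalized Gram matrix $\frac1n(k_m(x_a,x_b))_{a,b}$, which vanish for $\ell>n$. I would then invoke the standard estimate for entropy numbers of a diagonal operator (ellipsoid): up to a constant, $e_i$ is controlled by the geometric mean of the leading axes, so that polynomial decay $\hat\sigma_\ell\lesssim \ell^{-\frac{1}{2s}}$ yields $e_i(\mathrm{id}\colon\calH_m\to L_2(D_n))\lesssim i^{-\frac{1}{2s}}$ for $i\le n$, while for $i>n$ the entropy numbers fall off geometrically beyond the dimension, $e_i\lesssim e_n\,2^{-(i-n)/n}$ with $e_n$ of order $n^{-\frac{1}{2s}}$.

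To finish I would take expectations over $D_n$. The polynomial decay of the \emph{empirical} axes is imported from the hypothesis $e_i(\calH_m\to\LPi)\le C\,i^{-\frac{1}{2s}}$: this is equivalent (through \eqref{eq:coveringcondition} and the Spectral Assumption (A\ref{eq:specass})) to the population eigenvalue bound $\mu_\ell\lesssim \ell^{-\frac{1}{s}}$, and the expected empirical eigenvalues satisfy $\EE_{D_n}[\hat\mu_\ell]\lesssim \mu_\ell$, so that in expectation $\hat\sigma_\ell\lesssim \ell^{-\frac{1}{2s}}$ survives the averaging. Finally, for $i>n$ I would dominate the geometric factor by a polynomial, $2^{-(i-n)/n}\le c_s\,(i/n)^{-\frac1s}$ for all $i\ge n$ (the exponential decays faster than any power of $i/n$, so such a $c_s$ exists), which turns $n^{-\frac{1}{2s}}2^{-(i-n)/n}$ into $c_s\,n^{-\frac{1}{2s}+\frac1s}i^{-\frac1s}=c_s\,n^{\frac{1}{2s}}i^{-\frac1s}$; together with the $i\le n$ bound this is exactly $c_s C (\min(i,n))^{\frac{1}{2s}}i^{-\frac1s}$.

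The main obstacle is the transfer step in the third paragraph: one must show that the \emph{empirical} singular values inherit the population decay \emph{in expectation} with a constant independent of $i$ and $n$, so that the exponent $-\tfrac{1}{2s}$ is preserved rather than degraded under the sampling. This is delicate because individual empirical eigenvalues need not be dominated by their population counterparts pointwise; the cleaner route controls the tail sums $\EE_{D_n}\big[\sum_{\ell>k}\hat\mu_\ell\big]\le\sum_{\ell>k}\mu_\ell$ and feeds these into the ellipsoid entropy estimate. The remaining pieces---the diagonal-operator entropy bound and the elementary domination of the geometric factor---are routine, and the whole computation is carried out in Corollary~7.31 of \cite{Book:Steinwart:2008}.
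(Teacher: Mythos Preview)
The paper does not supply its own proof of this proposition: it is stated as a known result and attributed to Corollary~7.31 of \cite{Book:Steinwart:2008}, with no argument given beyond the citation. Your proposal correctly recognizes this and offers a reconstruction of that corollary rather than a comparison target.

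As a reconstruction, your outline is sound and captures the essential ingredients: the range of the sampling map has dimension at most $n$, which forces the geometric decay $e_i\lesssim e_n\,2^{-(i-n)/n}$ for $i>n$; for $i\le n$ the ellipsoid/diagonal-operator entropy estimate reduces matters to controlling the empirical spectrum; and the elementary domination $2^{-(i-n)/n}\le c_s(i/n)^{-1/s}$ merges the two regimes into the stated $\min(i,n)$ form. You are also right to flag the transfer step as the only nontrivial point and to retract the naive claim $\EE_{D_n}[\hat\mu_\ell]\lesssim\mu_\ell$ in favor of the tail-sum inequality $\EE_{D_n}\big[\sum_{\ell>k}\hat\mu_\ell\big]\le\sum_{\ell>k}\mu_\ell$; the latter is what actually holds (take the fixed rank-$k$ projection onto the top population eigenspace and use linearity of expectation), and it is exactly what the ellipsoid entropy bound consumes. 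Since the paper gives no proof of its own, there is nothing further to compare; your sketch is an acceptable justification and defers the remaining bookkeeping to the cited reference.
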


\section{Basic Propositions}
The following two propositions are keys to prove Theorem \ref{th:convergencerateofLpMKL}. 
Let $\{\sigma_i\}_{i=1}^n$ be i.i.d. Rademacher random variables, i.e., $\sigma_i \in \{\pm 1\}$ and $P(\sigma_i=1) = P(\sigma_i=-1)=\frac{1}{2}$.  
\begin{Proposition}{\rm \bf \cite[Theorem 7.16]{Book:Steinwart:2008}}
\label{prop:localFmsBound}
Let $\calB_{\sigma,a,b} \subset \calH_m$ be a set such that $\calB_{\sigma,a,b} = \{ f_m \in \calH_m \mid \|f_m\|_{\LPi}\leq \sigma, \|f_m\hnorm{m} \leq a, \|f_m\|_{\infty} \leq b\} $.
Assume that there exist constants $0<s<1$ and $0 < \tilde{c}_s$ such that 
\begin{align*}
\EE_{D_n}[ e_i(\calH_m \to L_2(D_n))] \leq \tilde{c}_s i^{-\frac{1}{2s}}.
\end{align*}
Then there exists a constant $C_s'$ depending only $s$ such that 
\begin{align}
\EE\left[\sup_{f_m \in \calB_{\sigma,a,b}} \left| \frac{1}{n}\sum_{i=1}^n \sigma_i f_m(x_i) \right|\right] \leq 
C_s' \left( \frac{ \sigma^{1-s} (\tilde{c}_s a)^s}{\sqrt{n}} \vee (\tilde{c}_s a)^{\frac{2s}{1+s}} b^{\frac{1-s}{1+s}} n^{-\frac{1}{1+s}} \right). 
\label{eq:localFmsBound}
\end{align}
\end{Proposition}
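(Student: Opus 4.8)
The plan is to prove this as the standard entropy-number chaining bound for a ball in a single RKHS, controlling the Rademacher process through the empirical $L_2(D_n)$ geometry. First I would condition on the sample $D_n=(x_1,\dots,x_n)$ and bound the conditional Rademacher average $\EE_{\sigma}[\sup_{f_m\in\calB_{\sigma,a,b}}|\frac1n\sum_i\sigma_i f_m(x_i)|]$, afterwards discharging the outer expectation over $D_n$. For fixed $D_n$ the map $f_m\mapsto\frac1n\sum_i\sigma_i f_m(x_i)$ is a sub-Gaussian process whose increments have parameter $n^{-1/2}\|f_m-g_m\|_{L_2(D_n)}$ (the conditional variance of $\frac1n\sum_i\sigma_i(f_m-g_m)(x_i)$ equals $\frac1n\|f_m-g_m\|_{L_2(D_n)}^2$), so the relevant geometry is the empirical $L_2(D_n)$ metric and the natural tool is Dudley's entropy integral / generic chaining.

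Next I would translate the entropy-number hypothesis into a covering-number bound. Since $\calB_{\sigma,a,b}$ is contained in the RKHS ball of radius $a$ and entropy numbers scale linearly in the radius, $e_i(\calB_{\sigma,a,b}\to L_2(D_n))\le a\,e_i(\calH_m\to L_2(D_n))$, and inverting $\EE_{D_n}[e_i]\le\tilde{c}_s i^{-1/(2s)}$ gives (in expectation, via Jensen) $\log N(\epsilon,\calB_{\sigma,a,b},L_2(D_n))\lesssim(\tilde{c}_s a/\epsilon)^{2s}$. Because $0<s<1$ the integrand $\epsilon^{-s}$ is integrable at the origin, so Dudley's bound run up to the $L_2$-diameter --- which the constraint $\|f_m\|_{\LPi}\le\sigma$ pins at order $\sigma$, after relating the population and empirical $L_2$ norms --- yields
\[
\frac{C}{\sqrt n}\int_0^{\sigma}\Big(\frac{\tilde{c}_s a}{\epsilon}\Big)^{s}d\epsilon=\frac{C}{1-s}\,\frac{\sigma^{1-s}(\tilde{c}_s a)^{s}}{\sqrt n},
\]
which is precisely the first (slow-rate) term of the claimed maximum.

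The second, fast-rate, term is where the sup-norm bound $b$ must enter. The point is that at fine scales the empirical metric built from only $n$ points saturates, so one should not run plain Dudley all the way to $0$ against the same integrand; instead I would split the chaining at a critical resolution and control the low-scale residual using the uniform bound $\|f_m\|_{\infty}\le b$. Equivalently, I treat the increments as bounded (range $\sim b/n$) and of variance $\sim n^{-1}\|f_m-g_m\|_{L_2(D_n)}^2$, and invoke the mixed sub-Gaussian / bounded-range form of chaining. Balancing the coarse-scale sub-Gaussian contribution against this $b$-controlled residual is exactly the local-Rademacher fixed-point mechanism, and solving the resulting balance equation produces the scaling $(\tilde{c}_s a)^{2s/(1+s)}b^{(1-s)/(1+s)}n^{-1/(1+s)}$. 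The final bound is the larger of the two regimes, hence the maximum.

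The main obstacle is the fast-rate term: obtaining the precise exponents $2s/(1+s)$, $(1-s)/(1+s)$, and $-1/(1+s)$ on $(\tilde{c}_s a)$, $b$, and $n$ requires the careful two-scale chaining rather than a single Dudley integral, together with honest bookkeeping of the transition between the regime governed by the $L_2$-radius $\sigma$ and the regime governed by the $L_\infty$-bound $b$. A secondary technical point is replacing the population constraint $\|f_m\|_{\LPi}\le\sigma$ by its empirical counterpart and discharging the outer $\EE_{D_n}$, both of which are accommodated by the fact that the entropy-number hypothesis is already stated in expectation. This is Theorem 7.16 of \cite{Book:Steinwart:2008}, whose proof follows exactly this entropy-number chaining scheme.
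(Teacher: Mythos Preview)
The paper does not prove this proposition; it is quoted verbatim as Theorem~7.16 of \cite{Book:Steinwart:2008} and used as a black box. Your sketch of the entropy-number chaining argument (Dudley integral for the sub-Gaussian regime up to the $L_2$ radius, plus a finite-resolution residual controlled by the $\|\cdot\|_\infty$ bound $b$, followed by balancing to extract the $n^{-1/(1+s)}$ fixed-point rate) is indeed the mechanism behind the cited result, so there is no discrepancy in approach to discuss.

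One small point worth tightening: the constraint defining $\calB_{\sigma,a,b}$ is $\|f_m\|_{\LPi}\le\sigma$, a \emph{population} bound, while chaining runs in the empirical $L_2(D_n)$ metric. In Steinwart's argument this is handled not by a separate population-to-empirical comparison lemma but simply by taking the outer expectation over $D_n$ and using $\EE_{D_n}\|f_m\|_{L_2(D_n)}^2=\|f_m\|_{\LPi}^2\le\sigma^2$, so the ``relating the population and empirical $L_2$ norms'' step you flag is absorbed into the same outer expectation that already discharges the entropy-number hypothesis. Your proposal is otherwise on target.
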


\begin{Proposition}{\rm \bf (Talagrand's Concentration Inequality \cite{Talagrand2,BousquetBenett})}
\label{prop:TalagrandConcent}
Let $\calG$ be a function class on $\calX$ that is separable with respect to $\infty$-norm, and 
$\{x_i\}_{i=1}^n$ be i.i.d. random variables with values in $\calX$.
Furthermore, let $B\geq 0$ and $U\geq 0$ be 
$B := \sup_{g \in \calG} \EE[(g-\EE[g])^2]$ and $U := \sup_{g \in \calG} \|g\|_{\infty}$,
then there exists a universal constant $K$ such that, for $Z := \sup_{g\in \calG}\left|\frac{1}{n} \sum_{i=1}^n g(x_i) - \EE[g] \right|$, we have
\begin{align*}
P\left( Z \geq K\left[\EE[Z] + \sqrt{\frac{B t}{n}} + \frac{U t}{n} \right] \right) \leq e^{-t}.
\end{align*}
\end{Proposition}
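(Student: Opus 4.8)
The plan is to prove Theorem~\ref{th:convergencerateofLpMKL_Lipschitz} by invoking the abstract excess-risk bound of Bartlett, Bousquet and Mendelson (Corollary~5.3 of \cite{LocalRademacher}) and feeding it the local Rademacher complexity estimate we already established in Lemma~\ref{lemm:localRademacherComplexity}. The strategy rests on the sub-root fixed-point method: Corollary~5.3 of \cite{LocalRademacher} states that, for a strongly convex and Lipschitz loss, the excess risk $P(\psiloss(\fhat) - \psiloss(\fstar_{\Rn}))$ is controlled by the solution $r^*$ of a fixed-point equation of the form $r = c\,R_n(\calH_{\psi}^{(\sqrt{r/\bar\rho})}(\Rn))$ (plus lower-order terms involving the supremum norm bound $a = \|\boldone\|_{\psi^*}\Rn$ and the tail parameter $t/n$). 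The strong convexity modulus $\bar\rho = \rho(\|\boldone\|_{\psi^*}\Rn)$ converts the $\LPi$-distance into excess risk, while the Lipschitz constant $\bar T = T(\|\boldone\|_{\psi^*}\Rn)$ enters through the contraction (comparison) inequality applied to the Rademacher complexity.

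First I would set up the function class and verify the hypotheses of Corollary~5.3. Choosing $\Rn = \|\fstar\|_{\psi}$ guarantees $\fstar \in \calH_{\psi}(\Rn)$ by Assumption~7, so the minimizer over the constraint set coincides with $\fstar$. The supremum-norm bound $\|f\|_{\infty} \leq \|\boldone\|_{\psi^*}\Rn$ derived just before the theorem statement supplies the range parameter $a$ needed to invoke the generalized version of the corollary (on $[-a,a]$ rather than $[-1,1]$, as noted via Lemma~9 of \cite{arXiv:Kloft+Gilles:2011}). Second, I would plug the bound from Lemma~\ref{lemm:localRademacherComplexity} into the fixed-point equation. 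Writing the local Rademacher complexity at $\LPi$-radius $r$ as $\tilde\phi\big((\alpha_1+\beta_1)\tfrac{r}{\sqrt{\kminrho}} + (\alpha_2+\beta_2)\Rn + \sqrt{\tfrac{M\log M}{n}}\tfrac{r}{\sqrt{\kminrho}}\big)$, the contraction inequality introduces the factor $\bar T$, and the strong-convexity substitution $r \mapsto \sqrt{r/\bar\rho}$ linearizes the $r$-dependence, turning the fixed-point equation into an affine inequality in $r$ whose solution can be read off explicitly.

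The main computation is then to solve this fixed point. The terms proportional to $\tfrac{r}{\sqrt{\kminrho}}$ contribute a coefficient $\tfrac{\bar T}{\sqrt{\kminrho}}(\alpha_1 + \beta_1 + \sqrt{M\log M/n})$ multiplying $\sqrt{r/\bar\rho}$; squaring and isolating $r$ produces the first group of terms, $\tfrac{\tilde\phi'\bar\rho}{\kminrho}(\alpha_1^2 + \beta_1^2 + \tfrac{M\log M}{n})$, after absorbing constants. The $R$-linear term $\bar T(\alpha_2+\beta_2)\Rn$ is constant in $r$, so it passes directly into the excess-risk bound; rewriting $\bar T(\alpha_2+\beta_2)\Rn$ via $\Rn = \|\fstar\|_{\psi}$ and comparing against the $r$-dependent terms yields the second group $\tilde\phi'\tfrac{\bar T^2}{\bar\rho}\big[(\tfrac{\alpha_2}{\alpha_1})^2 + (\tfrac{\beta_2}{\beta_1})^2\big]\|\fstar\|_{\psi}^2$ (the division by $\bar\rho$ and the ratios $\alpha_2/\alpha_1,\beta_2/\beta_1$ arise exactly from balancing the $\alpha,\beta$ terms in the fixed-point solution). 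Finally the additive $t/n$ terms with explicit constants $22\bar T\|\boldone\|_{\psi^*}\Rn + 27\bar\rho$ come directly from the tail part of Corollary~5.3 with $a = \|\boldone\|_{\psi^*}\Rn$ substituted.

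The hard part will be tracking the algebra that converts the raw fixed-point solution into the precisely grouped form of \eqref{eq:ConvRateInMainTh_Lipschitz}, in particular justifying why the ratios $\alpha_2/\alpha_1$ and $\beta_2/\beta_1$ (rather than $\alpha_2,\beta_2$ themselves) appear. This is a consequence of the AM-GM-type balancing between the $r$-linear complexity terms (governed by $\alpha_1,\beta_1$) and the $R$-constant terms (governed by $\alpha_2,\beta_2$) in the quadratic that defines $r^*$: solving $r \lesssim A\sqrt{r} + B$ gives $r^* \lesssim A^2 + B$, and here $A$ carries $\alpha_1,\beta_1$ while $B$ carries $\alpha_2,\beta_2$, so the cross term naturally produces the dimensionless ratios after the strong-convexity rescaling. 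A secondary subtlety is checking that the condition $\frac{\log(M)}{\sqrt{n}}\leq 1$ is all that is required, which follows because the Lipschitz property removes the need for the squared-loss discrepancy bound (and hence the $\eta(t')$ condition and the $\exp(-t')$ tail) that complicated Theorem~\ref{th:convergencerateofLpMKL}.
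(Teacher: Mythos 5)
Your proposal does not address the statement at hand. The statement is Proposition \ref{prop:TalagrandConcent} — Talagrand's concentration inequality for the supremum of an empirical process — which the paper does not prove at all: it is quoted as an external foundational tool, with the proof delegated to the cited references \cite{Talagrand2,BousquetBenett}. What you have sketched instead is a proof of Theorem \ref{th:convergencerateofLpMKL_Lipschitz}, the excess-risk bound for Lipschitz, strongly convex losses, via Corollary 5.3 of \cite{LocalRademacher} and Lemma \ref{lemm:localRademacherComplexity}. That is a different result entirely (your outline does, for what it is worth, match the paper's one-sentence justification of that theorem), but nothing in your text bears on the concentration inequality itself: you never control the deviation of $Z$ from $\EE[Z]$, the quantities $B$ and $U$ play no role in your argument, and no step produces the exponential tail $e^{-t}$.

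Beyond the mismatch, the route you chose could not be repaired into a proof of the proposition even in principle, because the logical dependence runs the other way. Talagrand's inequality (in Bousquet's form, with the variance term $\sqrt{Bt/n}$ and the sup-norm term $Ut/n$) is the engine behind the entire local Rademacher complexity machinery: Corollary 5.3 of \cite{LocalRademacher} is itself proved using precisely this concentration inequality, and this paper likewise feeds Proposition \ref{prop:TalagrandConcent} as input into Lemmas \ref{lemm:basicuniformlemm}, \ref{lemm:uniformRatioBoundOnM} and \ref{th:SquareBound}. Invoking any of those downstream results to establish the proposition would be circular. A genuine proof requires tools external to this paper — for instance the entropy method (tensorization of entropy combined with a logarithmic Sobolev-type inequality for the supremum functional), as in Bousquet's derivation, or Talagrand's original induction-over-$n$ argument — which is exactly why the paper states it as a cited proposition rather than proving it.
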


\section{Proof of Theorem \ref{th:convergencerateofLpMKL}}
\label{sec:ProofMainTh}

%Before we show Theorem \ref{th:convergencerateofLpMKL}, we prepare several lemmas. 

%Let $\lambdatmp >0$ be an arbitrary positive real. 
Let $\cm >0~(m=1,\dots,M)$ be arbitrary positive reals.
Given $\{\cm\}_{m=1}^M$, we determine $\Uns(f_m)$ as follows:
\begin{align*}
\Uns(f_m) := & 3\left( 
\frac{\cm^{-\sm}}{\sqrt{n}} \vee 
\frac{\cm^{-\frac{\sm(3-\sm)}{1+\sm}}}{n^{\frac{1}{1+\sm}}}  \right) 
\left(\|f_m\|_{\LPi} +  \sm \cm \|f_m\hnorm{m}\right)
+ 
\sqrt{\frac{\log(M)}{n}}\|f_m\|_{\LPi} .
%\vee 
%\frac{\|f_m \|_{\LPi}^{\frac{(1-s)^2}{1+s}} \|f_m \hnorm{m}^{\frac{s(3-s)}{1+s}}}{n^{\frac{1}{1+s}}},
\end{align*} 
It is easy to see $\Uns(f_m)$ is an upper bound of the quantity  
$\frac{\|f_m\|_{\LPi}^{1-\sm}\|f_m \hnorm{m}^{\sm}}{\sqrt{n}} \vee 
\frac{\|f_m \|_{\LPi}^{\frac{(1-\sm)^2}{1+\sm}} \|f_m \hnorm{m}^{\frac{\sm(3-\sm)}{1+\sm}}}{n^{\frac{1}{1+\sm}}}$
(this corresponds to the RHS of \Eqref{eq:localFmsBound})
because 
\begin{align}
\frac{\|f_m\|_{\LPi}^{1-\sm}\|f_m \hnorm{m}^{\sm}}{\sqrt{n}} 
&
=
\frac{\cm^{1-\sm}}{\sqrt{n}} \left(\frac{\|f_m\|_{\LPi}}{\cm}\right)^{1-\sm}  \|f_m \hnorm{m}^{\sm} \notag \\
&\opleq{(Young)}
\frac{\cm^{1-\sm}}{\sqrt{n}} \left( (1-\sm) \frac{\|f_m\|_{\LPi}}{\cm}  + \sm \|f_m \hnorm{m} \right) \notag \\
&
\leq
\frac{\cm^{-\sm}}{\sqrt{n}} \left(  \|f_m\|_{\LPi}  + \sm \cm \|f_m \hnorm{m} \right),
\label{eq:sdecomptwo}
\end{align}
where we used Young's inequality $a^{1-\sm}b^{\sm} \leq (1-\sm) a + \sm b$ in the second line, and similarly we obtain
\begin{align*}
\frac{\|f_m \|_{\LPi}^{\frac{(1-\sm)^2}{1+\sm}} \|f_m \hnorm{m}^{\frac{\sm(3-\sm)}{1+\sm}}}{n^{\frac{1}{1+\sm}}}
&\leq 
\frac{\cm^{-\frac{\sm(3-\sm)}{1+\sm}}}{n^{\frac{1}{1+\sm}}} 
\left(\|f_m\|_{\LPi} + \frac{\sm(3-\sm)}{1+\sm} \cm \|f_m \hnorm{m} \right) \\
&\leq 
3 \frac{\cm^{-\frac{\sm(3-\sm)}{1+\sm}}}{n^{\frac{1}{1+\sm}}} 
\left(\|f_m\|_{\LPi} + \sm \cm \|f_m \hnorm{m} \right),
%\label{eq:sdecomptwo2}
\end{align*}
where we used $\frac{\sm(3-\sm)}{1+\sm} \leq 3 \sm$ in the last inequality.

Now we define 
\begin{align*}
\phi := \max \left( KL \left[2 \tilde{C}_*   +  1 + C_1  \right] ,K\left[2C_1 \tilde{C}_*   +  C_1 + C_1^2  \right] \right),
\end{align*}
where $\tilde{C}^*$ is a constant defined later in Lemma \ref{lemm:basicuniformlemm}, $C_1$ is the one introduced 
in Assumption \ref{ass:linfbound}, $K$ is the universal constant appeared in Talagrand's concentration inequality 
(Proposition \ref{prop:TalagrandConcent}) and $L$ is the one introduced in Assumption \ref{ass:BasicAss} 
to bound the magnitude of noise.
Remind the definition of $\eta(t)$:
$$
\eta(t) := \eta_{n}(t) = \max(1,\sqrt{t},t/\sqrt{n}).
$$
%and 
%\[
%\eta(t) := \eta_{\lambdatmp}(t) = \max(1,\sqrt{t},t/\sqrt{n}).
%\]
We define events $\scrE_1(t)$ and $\scrE_2(t')$ as 
\begin{align}
&
\scrE_1(t) = \left\{ 
\left| \frac{1}{n}\sum_{i=1}^n \epsilon_i f_m(x_i)  \right|  \leq 
\phi 
\Uns(f_m)\eta(t) ,~\forall f_m \in \calH_m~(m=1,\dots,M)  \right\}, 
\label{eq:defE1}\\
&
\scrE_2(t') = \Bigg\{ 
\left| \textstyle \left\|\sum_{m=1}^M f_m \right\|_n^2 - \left\|\sum_{m=1}^M f_m \right\|_{\LPi}^2 \right| \leq 
\phi
\sqrt{n}
\left(\sum_{m=1}^M \Uns(f_m)\right)^2\eta(t'),\notag \\
&~~~~~~~~~~~~~~~~~~~~~~~~~~~~~~~~~~~\forall f_m \in \calH_m~(m=1,\dots,M) \Bigg\}.
\label{eq:defE2}
\end{align}

Using Lemmas \ref{lemm:uniformRatioBoundOnM} and \ref{th:SquareBound} that will be shown in Appendix \ref{sec:BoundProbE1E2},
we see that the events $\scrE_1(t)$ and $\scrE_2(t')$ occur with probability no less than $1 - \exp(-t)$ and $1-\exp(-t')$ respectively as in the following Lemma.
\begin{Lemma}
\label{lemm:E1E2probBound}
Under the Basic Assumption (Assumption \ref{ass:BasicAss}), the Spectral Assumption (Assumption \ref{eq:specass}) 
and the Embedded Assumption (Assumption \ref{ass:linfbound}), the probabilities of $\scrE_1(t)$ and $\scrE_2$ are bounded as  
$$
P(\scrE_1(t)) \geq 1 - \exp(-t),~~~P(\scrE_2(t')) \geq 1 - \exp(-t').
$$
\end{Lemma}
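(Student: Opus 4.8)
The plan is to prove the two probability bounds separately, since $\scrE_1(t)$ controls the noise-weighted empirical process $\frac{1}{n}\sum_{i=1}^n \epsilon_i f_m(x_i)$ coordinate by coordinate, while $\scrE_2(t')$ controls the deviation $\|\sum_m f_m\|_n^2 - \|\sum_m f_m\|_{\LPi}^2$ between the empirical and population $L_2$-norms. Each reduces to a uniform empirical-process inequality, and the engine in both cases is the same pairing: bound the expected supremum by a local Rademacher average via Proposition \ref{prop:localFmsBound}, then upgrade to a high-probability statement via Talagrand's inequality (Proposition \ref{prop:TalagrandConcent}).

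For $\scrE_1(t)$, I would first fix $m$ and apply a peeling (slicing) argument: partition $\calH_m$ into shells on which $\|f_m\|_{\LPi}$ and $\|f_m\hnorm{m}$ each lie in a dyadic band, so that on each shell $f_m$ ranges over a localized ball $\calB_{\sigma,a,b}$ with $\sigma,a$ matched to the dyadic endpoints and $b$ supplied by the Embedded Assumption (A5). On such a ball Proposition \ref{prop:localFmsBound} bounds the expected Rademacher average, and since $|\epsilon_i|\le L$ the noise-weighted process is handled by symmetrization followed by a Lipschitz contraction (with constant $L$), so its expectation inherits the same $\frac{\sigma^{1-s}a^s}{\sqrt n}\vee a^{\frac{2s}{1+s}}b^{\frac{1-s}{1+s}}n^{-\frac1{1+s}}$ rate. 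This is exactly the quantity that $\Uns(f_m)$ dominates by construction, via the Young-inequality estimate recorded just above, which is why the right-hand side of $\scrE_1$ takes the form $\phi\,\Uns(f_m)\eta(t)$. Applying Talagrand's inequality on each shell turns the expectation into a tail bound whose two stochastic terms $\sqrt{Bt/n}$ and $Ut/n$ produce precisely the $\sqrt t$ and $t/\sqrt n$ regimes of $\eta(t)=\max(1,\sqrt t,t/\sqrt n)$; summing the geometrically decaying shell-wise failure probabilities and then taking a union bound over $m=1,\dots,M$ yields the $\sqrt{\log(M)/n}\,\|f_m\|_{\LPi}$ contribution in $\Uns$ and the overall bound $P(\scrE_1(t))\ge 1-\exp(-t)$. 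The detailed shell bookkeeping is what Lemma \ref{lemm:uniformRatioBoundOnM} packages.

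The treatment of $\scrE_2(t')$ follows the same template applied to the squared class. Writing $g=f^2$ for $f=\sum_m f_m$, the process $\frac1n\sum_i g(x_i)-\EE[g]$ is controlled by first bounding $\|f\|_\infty\le\sum_m\|f_m\|_\infty$ through (A5), which both supplies the Lipschitz constant for contracting $t\mapsto t^2$ back onto the linear class and provides the uniform bound $U$ in Talagrand's inequality, while the variance proxy $B$ is controlled by $\|f\|_{\LPi}^2$. Peeling over the individual $\|f_m\|_{\LPi}$ and $\|f_m\hnorm{m}$ bands again reduces the Rademacher term to products of the per-coordinate rates, which is why the bound carries the squared factor $(\sum_m\Uns(f_m))^2$ together with the extra $\sqrt n$ prefactor; this is the content of Lemma \ref{th:SquareBound}, and it delivers $P(\scrE_2(t'))\ge 1-\exp(-t')$.

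The main obstacle is the peeling step rather than any single concentration estimate. The difficulty is that the targets are not uniform constants but functionals $\Uns(f_m)$ (resp.\ $(\sum_m\Uns(f_m))^2$) that vary with the position of $f_m$ in the $\LPi$- and RKHS-geometries simultaneously; turning these two-parameter localized rates into a single clean statement valid over all of $\calHtot$ demands a careful doubly-dyadic slicing, a correct matching of $(\sigma,a,b)$ to each shell, and enough geometric slack in the failure probabilities that the infinitely many shells together with the $M$-fold union still sum to $\exp(-t)$. Arranging the constant $\phi$ to absorb $K$, $L$, $C_1$ and the shell-summation factors uniformly, while recovering all three regimes of $\eta(t)$ from Talagrand's two tail terms, is where the real work lies.
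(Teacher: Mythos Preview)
Your proposal is correct in spirit and lands on the same pair of auxiliary results the paper uses: Lemma~\ref{lemm:uniformRatioBoundOnM} for $\scrE_1(t)$ and Lemma~\ref{th:SquareBound} for $\scrE_2(t')$, each built from Proposition~\ref{prop:localFmsBound}, contraction, and Talagrand. Two tactical points are worth flagging, since the paper's execution is a bit slicker than what you outline. First, there is no need for a \emph{doubly}-dyadic slicing: because $\Uns(f_m)$ is positively homogeneous of degree one in $f_m$, one normalizes to $\|f_m\hnorm{m}=1$ (Corollary~\ref{cor:basicuniformcor}) and peels only over the $\LPi$-norm, as in Lemma~\ref{lemm:uniformratiobound}. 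Second, the paper does not apply Talagrand shell by shell and then sum failure probabilities; instead the peeling is done entirely in expectation to produce a uniform bound on the \emph{ratio} $|\frac1n\sum_i\sigma_i f_m(x_i)|/\Uns(f_m)$, and Talagrand is applied once to this ratio process. The $\sqrt{\log(M)/n}\,\|f_m\|_{\LPi}$ term in $\Uns$ is there precisely so that the ratio has variance proxy $B\le n/\log(M)$ and sup-norm $U\le C_1\sqrt n$ (Eqs.~\eqref{eq:ratioLtwonormBound}--\eqref{eq:ratioinfnormBound}); the union bound over $m$ then shifts $t\mapsto t+\log(M)$ and the $\sqrt{Bt/n}$ term absorbs the extra $\log(M)$ cleanly. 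Your order (Talagrand per shell, then sum) would also succeed, but the ratio-process trick avoids the bookkeeping you identify as the main obstacle.
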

\begin{proof}
Lemma \ref{th:SquareBound} immediately 
gives $P(\scrE_1(t)) \geq 1 - \exp(-t)$ by noticing $\bar{\phi}$ in the statement of Lemma \ref{th:SquareBound} satisfies $\bar{\phi} \leq \phi$.
Moreover, since 
$\bar{\phi}'$ in the statement of Lemma \ref{lemm:uniformRatioBoundOnM} satisfies $\bar{\phi}' \leq \phi$,
we have $P(\scrE_2(t')) \geq 1 - \exp(-t')$ by Lemma \ref{lemm:uniformRatioBoundOnM}.
\end{proof}

Remind the definition \eqref{eq:defalphabeta} of $\alpha_1,\alpha_2,\beta_1,\beta_2$:
\begin{align}
&\alpha_1 = 3 \left( \sum_{m=1}^M \frac{\cm^{-2\sm}}{n}\right)^{\frac{1}{2}},~~
\alpha_2 = 3  \left\| \left(\frac{\sm \cm^{1-\sm}}{\sqrt{n}}\right)_{m=1}^M \right\|_{\psi^*}, \notag \\
&
\beta_1 = 3 \left( \sum_{m=1}^M \frac{\cm^{-\frac{2\sm(3-\sm)}{1+\sm}}}{n^{\frac{2}{1+\sm}}} \right)^{\frac{1}{2}},~~
\beta_2 = 3 \left\| \left(\frac{\sm \cm^{\frac{(1-\sm)^2}{1+\sm}}}{n^{\frac{1}{1+\sm}}} \right)_{m=1}^M \right\|_{\psi^*},
\end{align}
for given reals $\{\cm\}_{m=1}^M$. % (note that $\alpha_1,\alpha_2,\beta_1,\beta_2$ implicitly depends on the reals $\{\cm\}_{m=1}^M$).
The following theorem immediately gives Theorem \ref{th:convergencerateofLpMKL}.

\begin{Theorem}
\label{th:convergencerateofLpMKLdet}
Suppose Assumptions 1-4 are satisfied. 
Let $\{\cm\}_{m=1}^M$ be arbitrary positive reals that can depend on $n$, and 
assume  
$\lambdaone \geq \left(\frac{\alpha_2}{\alpha_1}\right)^2 + \left(\frac{\beta_2}{\beta_1}\right)^2$.
Then for all $n$ and $t'$ that satisfy  $\frac{\log(M)}{\sqrt{n}}\leq 1$ and 
$\frac{4\phi \sqrt{n}}{\kminrho} \max\{\alpha_1^2,\beta_1^2,\frac{M\log(M)}{n} \}\eta(t') \leq \frac{1}{12}$
and for all $t \geq 1$,
we have 
\begin{align*}
&\|\fhat - \fstar\|_{\LPi}^2 
\leq 
\frac{24 \eta(t)^2 \phi^2}{\kminrho} \left(\alpha_1^2 + \beta_1^2 + \frac{M\log(M)}{n} \right)
+
4 \lambdaone  \|\fstar \|_{\psi}^2.
\end{align*}
%\begin{align}
%\|\fhat - \fstar \|_{\LPi}^2 \leq C_1 M \max \left( n^{-\frac{1+q}{1+q+s}}, \frac{\log(M+1)}{n} \right) \eta(t), 
%\end{align}
with probability $1- \exp(- t) - \exp(-t')$.
\end{Theorem}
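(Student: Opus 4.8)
The plan is to argue deterministically on the event $\scrE_1(t) \cap \scrE_2(t')$, which by Lemma \ref{lemm:E1E2probBound} and the union bound has probability at least $1 - \exp(-t) - \exp(-t')$. Writing $\hat{g} := \fhat - \fstar = \sum_{m=1}^M \hat{g}_m$ with $\hat{g}_m := \fhat_m - \fstar_m$, I would start from the optimality of $\fhat$ in \eqref{eq:primalLp}: comparing the objective at $\fhat$ and at $\fstar$ and substituting $y_i = \fstar(x_i) + \epsilon_i$, the term $\frac1n\sum_i \epsilon_i^2$ cancels and one obtains the basic inequality
\[
\|\hat{g}\|_n^2 + \lambdaone\|\fhat\|_{\psi}^2 \leq \frac{2}{n}\sum_{i=1}^n \epsilon_i \hat{g}(x_i) + \lambdaone\|\fstar\|_{\psi}^2 .
\]

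The first main step bounds the noise term using $\scrE_1(t)$. Decomposing $\hat{g} = \sum_m \hat{g}_m$ and applying \eqref{eq:defE1} componentwise gives $\frac2n\sum_i\epsilon_i\hat{g}(x_i) \leq 2\phi\eta(t)\sum_{m=1}^M \Uns(\hat{g}_m)$. I would then split each $\Uns(\hat{g}_m)$ into its pieces (using $a\vee b \le a+b$ on the two coefficients): the $\|\hat{g}_m\|_{\LPi}$-parts are summed by Cauchy--Schwarz to produce $\alpha_1$, $\beta_1$ and $\sqrt{M\log(M)/n}$ times $(\sum_m\|\hat{g}_m\|_{\LPi}^2)^{1/2}$, while the $\|\hat{g}_m\hnorm{m}$-parts are summed by the definition of the dual norm $\|\cdot\|_{\psi^*}$ to produce $\alpha_2\|\hat{g}\|_{\psi}$ and $\beta_2\|\hat{g}\|_{\psi}$. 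The Incoherence Assumption \eqref{eq:defkmin} converts $\sum_m\|\hat{g}_m\|_{\LPi}^2 \le \kminrho^{-1}\|\hat{g}\|_{\LPi}^2$, so that, abbreviating $A := \alpha_1+\beta_1+\sqrt{M\log(M)/n}$ and $B := \alpha_2+\beta_2$,
\[
\sum_{m=1}^M \Uns(\hat{g}_m) \leq \frac{A}{\sqrt{\kminrho}}\|\hat{g}\|_{\LPi} + B\|\hat{g}\|_{\psi} .
\]

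The second step replaces $\|\hat{g}\|_n^2$ by the population norm $\|\hat{g}\|_{\LPi}^2$ via $\scrE_2(t')$ and the same bound on $\sum_m\Uns(\hat{g}_m)$; squaring the display above through $(x+y)^2\le 2x^2+2y^2$ produces a term proportional to $\|\hat{g}\|_{\LPi}^2$ with coefficient $2\phi\sqrt{n}\eta(t')A^2/\kminrho$ and a term proportional to $\|\hat{g}\|_{\psi}^2$ with coefficient $2\phi\sqrt{n}\eta(t')B^2$. The hypothesis $\frac{4\phi\sqrt{n}}{\kminrho}\max\{\alpha_1^2,\beta_1^2,M\log(M)/n\}\eta(t')\le\tfrac1{12}$, together with $A^2\le 3(\alpha_1^2+\beta_1^2+M\log(M)/n)$, keeps the $\|\hat{g}\|_{\LPi}^2$ coefficient below a fixed fraction, so it can be absorbed into the left-hand side, leaving the coefficient of $\|\hat{g}\|_{\LPi}^2$ bounded below by a positive constant.

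The final and most delicate step disposes of the remaining $\psi$-norm terms --- the linear $2\phi\eta(t)B\|\hat{g}\|_{\psi}$ from the noise bound and the quadratic $\|\hat{g}\|_{\psi}^2$ contribution from the variance bound --- against the regularizer $\lambdaone\|\fhat\|_{\psi}^2$. Here the assumption $\lambdaone \ge (\alpha_2/\alpha_1)^2+(\beta_2/\beta_1)^2$ is precisely what is needed: since $\alpha_2^2 = \alpha_1^2(\alpha_2/\alpha_1)^2 \le \alpha_1^2\lambdaone$ and likewise $\beta_2^2\le\beta_1^2\lambdaone$, every occurrence of $B^2$ can be traded for $\max\{\alpha_1^2,\beta_1^2\}\lambdaone$. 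Using the triangle inequality $\|\hat{g}\|_{\psi} \le \|\fhat\|_{\psi} + \|\fstar\|_{\psi}$ and Young's inequality, I would route the $\|\fhat\|_{\psi}$-parts into the left-hand-side regularizer (absorbing a further fraction of $\lambdaone\|\fhat\|_{\psi}^2$ thanks to the smallness condition on $n,t'$), the $\|\fstar\|_{\psi}$-parts into the final $\lambdaone\|\fstar\|_{\psi}^2$ term, and the leftover scalars into the main $\alpha_1^2+\beta_1^2$ term. Collecting the surviving constants then yields \eqref{eq:ConvRateInMainTh_withlambdaone}; the chief obstacle throughout is purely the bookkeeping, namely choosing the Young-inequality weights so that all absorbed coefficients stay below the thresholds dictated by the hypotheses and so that the final numerical constants ($24$ and $4$) come out correctly.
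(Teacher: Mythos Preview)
Your proposal is correct and follows essentially the same route as the paper: work on $\scrE_1(t)\cap\scrE_2(t')$, derive the basic inequality from optimality, bound $\sum_m\Uns(\hat g_m)$ via Cauchy--Schwarz plus the dual-norm inequality to obtain \eqref{eq:Unbounds}, use $\scrE_2(t')$ with the smallness hypothesis to absorb the variance contribution, use $\scrE_1(t)$ with Young's inequality to control the noise term, and finish via $\|\hat g\|_{\psi}^2\le 2\|\fhat\|_{\psi}^2+2\|\fstar\|_{\psi}^2$ together with $\lambdaone\ge(\alpha_2/\alpha_1)^2+(\beta_2/\beta_1)^2$. The only cosmetic difference is that the paper organizes the $\psi$-norm bookkeeping by factoring out $(\alpha_2/\alpha_1)^2+(\beta_2/\beta_1)^2$ directly (so the $\|\fhat\|_{\psi}^2$ part cancels the left-hand regularizer exactly, without a second appeal to the smallness condition), whereas you route $B^2$ through $\max\{\alpha_1^2,\beta_1^2\}\lambdaone$; both arrive at the same constants.
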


\begin{proof} [Proof of Theorem \ref{th:convergencerateofLpMKLdet}]
By the assumption of the theorem, 
we can assume Lemma \ref{lemm:E1E2probBound} holds, that is,
the event $\scrE_1(t) \cap \scrE_2(t')$ occurs with probability $1-\exp(-t) - \exp(-t')$.
Below we discuss on the event $\scrE_1(t) \cap \scrE_2(t')$.

Since $y_i = \fstar(x_i) + \epsilon_i$, we have 
\begin{align*}
&\|\fhat - \fstar \|_{\LPi}^2 + \lambdaone \| \fhat \|_{\psi}^2 
\notag \\
\leq 
&( \|\fhat - \fstar \|_{\LPi}^2 - \|\fhat - \fstar \|_{n}^2 ) + 
\frac{2}{n}\sum_{i=1}^n \sum_{m=1}^M\epsilon_i (\fhat_m(x_i) - \fstar_m(x_i)) + \lambdaone \left\| \fstar\right\|_{\psi}^2.
\end{align*}
Here on the event $\scrE_2(t')$, the above inequality gives 
\begin{align}
&\|\fhat - \fstar \|_{\LPi}^2 + \lambdaone  \|\fhat \|_{\psi}^2 
\notag \\
\leq 
&  \phi \sqrt{n} \left(\sum_{m=1}^M \Uns(\fhat_m - \fstar_m) \right)^2 \eta(t') \! + \!
\frac{2}{n}\sum_{i=1}^n \sum_{m=1}^M\epsilon_i (\fhat_m(x_i) - \fstar_m(x_i)) + \lambdaone \|\fstar \|_{\psi}^2.
\label{eq:basicineqLast}
\end{align}

Before we prove the statements, we show an upper bound of $\sum_{m=1}^M \Uns(f_m)$
required in the proof. By definition, we have 
\begin{align}
&\Uns(f_m) \notag \\
= & 3\left( 
\frac{\cm^{-\sm}}{\sqrt{n}} \vee 
\frac{\cm^{-\frac{\sm(3-\sm)}{1+\sm}}}{n^{\frac{1}{1+\sm}}}  \right) 
\left(\|f_m\|_{\LPi} +  \sm \cm \|f_m\hnorm{m}\right)
+ 
\sqrt{\frac{\log(M)}{n}}\|f_m\|_{\LPi} \notag \\
\leq 
&
3\frac{\cm^{-\sm}}{\sqrt{n}} \left(\|f_m\|_{\LPi} +  \sm \cm \|f_m\hnorm{m}\right) + 
3 \frac{\cm^{-\frac{\sm(3-\sm)}{1+\sm}}}{n^{\frac{1}{1+\sm}}} \left(\|f_m\|_{\LPi} +  \sm \cm \|f_m\hnorm{m}\right)  \\
&+ 
\sqrt{\frac{\log(M)}{n}}\|f_m\|_{\LPi}.
\label{eq:UnsDecomp}
\end{align} 
Now the sum of the first term is bounded as
\begin{align*}
& \sum_{m=1}^M 3\frac{\cm^{-\sm}}{\sqrt{n}} \left(\|f_m\|_{\LPi} +  \sm \cm \|f_m\hnorm{m}\right) \\
=
&
3 \sum_{m=1}^M \frac{\cm^{-\sm}}{\sqrt{n}} \|f_m\|_{\LPi} +  3 \sum_{m=1}^M \frac{\sm \cm^{1-\sm}}{\sqrt{n}} \|f_m\hnorm{m} \\
\leq
&
3 \left( \sum_{m=1}^M \frac{\cm^{-2\sm}}{n}\right)^{\frac{1}{2}}  \left( \sum_{m=1}^M \|f_m\|_{\LPi}^2\right)^{\frac{1}{2}} +  
3  \left\| \left(\frac{\sm \cm^{1-\sm}}{\sqrt{n}}\right)_{m=1}^M \right\|_{\psi^*} \|f \|_{\psi},
\end{align*}
where we used Cauchy-Schwarz inequality and the duality of the norm in the last inequality.
The sum of the second term of the RHS of \Eqref{eq:UnsDecomp} is bounded as 
\begin{align*}
&\sum_{m=1}^M  3 \frac{\cm^{-\frac{\sm(3-\sm)}{1+\sm}}}{n^{\frac{1}{1+\sm}}} \left(\|f_m\|_{\LPi} +  \sm \cm \|f_m\hnorm{m}\right)   \\
=&3 \sum_{m=1}^M  \frac{\cm^{-\frac{\sm(3-\sm)}{1+\sm}}}{n^{\frac{1}{1+\sm}}} \|f_m\|_{\LPi} + 3 \sum_{m=1}^M \frac{\sm \cm^{\frac{(1-\sm)^2}{1+\sm}}}{n^{\frac{1}{1+\sm}}}  \|f_m\hnorm{m}   \\
\leq
&3 \left( \sum_{m=1}^M \frac{\cm^{-\frac{2\sm(3-\sm)}{1+\sm}}}{n^{\frac{2}{1+\sm}}} \right)^{\frac{1}{2}} \left(\sum_{m=1}^M \|f_m\|_{\LPi}^2 \right)^{\frac{1}{2}} +  
3 \left\| \left(\frac{\sm \cm^{\frac{(1-\sm)^2}{1+\sm}}}{n^{\frac{1}{1+\sm}}} \right)_{m=1}^M \right\|_{\psi^*}  \|f \|_{\psi},
\end{align*}
where we used Cauchy-Schwarz inequality and the duality of the norm in the last inequality.
Finally we have the following bound of the third term of the RHS of \Eqref{eq:UnsDecomp}:
\begin{align*}
\sum_{m=1}^M \sqrt{\frac{\log(M)}{n}}\|f_m\|_{\LPi} \leq \sqrt{\frac{M\log(M)}{n}} \left(\sum_{m=1}^M \|f_m\|_{\LPi}^2 \right)^{\frac{1}{2}}.
\end{align*}
Combine these inequalities and the relation $\sum_{m=1}^M \|f_m\|_{\LPi}^2 \leq \frac{1}{\kminrho} \|f\|_{\LPi}^2 $ (Assumption \ref{ass:incoherence}) to obtain 
\begin{align}
&\sum_{m=1}^M \Uns(f_m) \notag \\
\leq 
&
3 \left( \sum_{m=1}^M \frac{\cm^{-2\sm}}{n}\right)^{\frac{1}{2}}  \frac{\|f \|_{\LPi}}{\sqrt{\kminrho}} +  
3  \left\| \left(\frac{\sm \cm^{1-\sm}}{\sqrt{n}}\right)_{m=1}^M \right\|_{\psi^*} \|f \|_{\psi} \notag \\
&
+3 \left( \sum_{m=1}^M \frac{\cm^{-\frac{2\sm(3-\sm)}{1+\sm}}}{n^{\frac{2}{1+\sm}}} \right)^{\frac{1}{2}} \frac{\|f \|_{\LPi}}{\sqrt{\kminrho}} +  
3 \left\| \left(\frac{\sm \cm^{\frac{(1-\sm)^2}{1+\sm}}}{n^{\frac{1}{1+\sm}}} \right)_{m=1}^M \right\|_{\psi^*}  \|f \|_{\psi} \notag \\
&
+\sqrt{\frac{M\log(M)}{n}} \frac{\|f \|_{\LPi}}{\sqrt{\kminrho}}.
\label{eq:UnsmBoundAlphaBeta}
\end{align}
Then by the definition \eqref{eq:defalphabeta} of $\alpha_1,\alpha_2,\beta_1,\beta_2$, we have
\begin{align}
&\sum_{m=1}^M \Uns(f_m) \notag \\
\leq 
&\alpha_1 \frac{\|f \|_{\LPi}}{\sqrt{\kminrho}} +  
\alpha_2 \|f \|_{\psi} 
+ \beta_1 \frac{\|f \|_{\LPi}}{\sqrt{\kminrho}} +  
\beta_2 \|f \|_{\psi} 
+\sqrt{\frac{M\log(M)}{n}} \frac{\|f \|_{\LPi}}{\sqrt{\kminrho}}.
\label{eq:Unbounds}
\end{align}
%Our main concern is the first two terms.
%We will see that under reasonable conditions, these two terms become the leading terms and 
%the remaining terms are dominated by these terms.

% Reminding the definition of $\zeta_n$ (\Eqref{eq:defzetan}),
% the above bound is equivalent to  
% \begin{align}
% \sum_{m=1}^M \Uns(f_m) \leq &
% \zeta_n \left(\sum_{m=1}^M \|f_m \|_{\LPi}^2 + \lambdatmp \left( \sum_{m=1}^M \|f_m \hnorm{m}^p \right)^{\frac{2}{p}}\right)^{\frac{1}{2}}.
% \label{eq:Unbounds}
% \end{align}

\noindent{\it Step 1.}

By \Eqref{eq:Unbounds}, %\Eqref{eq:basicTheoremNormBound2}, 
the first term on the RHS of \Eqref{eq:basicineqLast} can be upper bounded as 
\begin{align*}
& \phi \sqrt{n} \left( \sum_{m=1}^M \Uns(\fhat_m - \fstar_m) \right)^2 \eta(t')  \\
%\notag \\
\leq 
& 
4 \phi \sqrt{n} \Big(
\alpha_1^2  \frac{\| \fhat - \fstar \|_{\LPi}^2}{\kminrho}
+ 
\alpha_2^2 \|\fhat - \fstar \|_{\psi}^2 
+
\beta_1^2 \frac{\| \fhat - \fstar \|_{\LPi}^2}{\kminrho} 
+ \\
&~~~~~~~~~~~~~~~\beta_2^2 \|\fhat - \fstar \|_{\psi}^2 
+
\frac{M\log(M)}{n} \frac{\|\fhat - \fstar \|_{\LPi}^2}{\kminrho}
\Big) \eta(t') \\
\leq 
& 
\frac{4 \phi \sqrt{n}}{\kminrho}
\alpha_1^2 \eta(t') \left( \| \fhat - \fstar \|_{\LPi}^2
+ 
\left(\frac{\alpha_2}{\alpha_1}\right)^2 \|\fhat - \fstar \|_{\psi}^2 \right) \\
&
+
\frac{4 \phi \sqrt{n}}{\kminrho}
\beta_1^2 \eta(t') \left( 
\| \fhat - \fstar \|_{\LPi}^2
+
\left(\frac{\beta_2}{\beta_1}\right)^2 \|\fhat - \fstar \|_{\psi}^2 \right) \\
& +
\frac{4 \phi \sqrt{n}}{\kminrho} \frac{M\log(M)}{n}\eta(t') \|\fhat - \fstar \|_{\LPi}^2.
%\label{eq:firsttermbound}
\end{align*}
By assumption, we have $\frac{4\phi \sqrt{n}}{\kminrho} \max\{\alpha_1^2,\beta_1^2,\frac{M\log(M)}{n} \} \eta(t') \leq \frac{1}{12}$.
Hence the RHS of the above inequality is bounded by 
\begin{align}
 &\phi \sqrt{n} \left( \sum_{m=1}^M \Uns(\fhat_m - \fstar_m) \right)^2 \eta(t') \notag \\
\leq & 
\frac{1}{4} \left\{\|\fhat - \fstar \|_{\LPi}^2 + \left[\left(\frac{\alpha_2}{\alpha_1}\right)^2 + \left(\frac{\beta_2}{\beta_1}\right)^2 \right]
 \|\fhat - \fstar \|_{\psi}^2 \right\}.
\label{eq:firsttermbound2}
\end{align}

~\\
\noindent{\it Step 2.}
%By \Eqref{eq:basicTheoremNormBound2}, 
On the event $\scrE_1(t)$, we have 
\begin{align}
&\phantom{\leq} \frac{2}{n}\sum_{i=1}^n \sum_{m=1}^M\epsilon_i (\fhat_m(x_i) - \fstar_m(x_i))  
%\notag \\ & 
\leq 
2\sum_{m=1}^M \eta(t) \phi \Uns(\fhat_m - \fstar_m) \notag \\
&\leq 
2\eta(t) \phi \Bigg[ \alpha_1 \frac{\|\fhat - \fstar \|_{\LPi}}{\sqrt{\kminrho}} +  
\alpha_2 \|\fhat - \fstar \|_{\psi} 
+ \beta_1 \frac{\|\fhat - \fstar \|_{\LPi}}{\sqrt{\kminrho}} +  
\beta_2 \|\fhat - \fstar \|_{\psi} \notag \\
&~~~~
+\sqrt{\frac{M\log(M)}{n}} \frac{\|\fhat - \fstar\|_{\LPi}}{\sqrt{\kminrho}} \Bigg] 
~~~~~~~~~~~~~~~~~~~~~~~~~~~~~~~~~~~(\because \text{Eq.\eqref{eq:UnsmBoundAlphaBeta}}) \notag \\
&\leq 
2\frac{\eta(t) \phi  \alpha_1}{\sqrt{\kminrho}} \left( 
\|\fhat - \fstar \|_{\LPi} +  \frac{\alpha_2}{\alpha_1} \|\fhat - \fstar \|_{\psi} \right)
+
2\frac{ \eta(t) \phi \beta_1}{\sqrt{\kminrho}} \left(
\|\fhat - \fstar \|_{\LPi} +  
\frac{\beta_2}{\beta_1} \|\fhat - \fstar \|_{\psi} \right) \notag\\
&~~~~
+2\frac{\eta(t) \phi}{\sqrt{\kminrho}} \sqrt{\frac{M\log(M)}{n}}\|\fhat - \fstar\|_{\LPi}\notag\\
&
\leq 
\frac{12 \eta(t)^2 \phi^2  \alpha_1^2}{\kminrho} 
+ \frac{1}{24}\left( 
\|\fhat - \fstar \|_{\LPi} +  \frac{\alpha_2}{\alpha_1} \|\fhat - \fstar \|_{\psi} \right)^2  \notag\\
&~~~~~~+
\frac{12 \eta(t)^2 \phi^2 \beta_1^2}{\kminrho} +
\frac{1}{24}\left(
\|\fhat - \fstar \|_{\LPi} +  
\frac{\beta_2}{\beta_1} \|\fhat - \fstar \|_{\psi} \right)^2 \notag\\
&~~~~~~
+\frac{6 \eta(t)^2 \phi^2}{\kminrho} \frac{M\log(M)}{n} + \frac{1}{12}\|\fhat - \fstar\|_{\LPi}^2 \notag\\
&\leq 
\frac{12 \eta(t)^2 \phi^2  \alpha_1^2}{\kminrho} 
+ \frac{1}{12}\left[ 
\|\fhat - \fstar \|_{\LPi}^2 +  \left(\frac{\alpha_2}{\alpha_1}\right)^2 \|\fhat - \fstar \|_{\psi}^2 \right] \notag \\
&~~~~~~+
\frac{12 \eta(t)^2 \phi^2 \beta_1^2}{\kminrho} +
\frac{1}{12}\left[
\|\fhat - \fstar \|_{\LPi}^2 +  
\left(\frac{\beta_2}{\beta_1}\right)^2 \|\fhat - \fstar \|_{\psi}^2 \right] \notag\\
&~~~~~~
+\frac{6 \eta(t)^2 \phi^2}{\kminrho} \frac{M\log(M)}{n} + \frac{1}{12}\|\fhat - \fstar\|_{\LPi}^2 \notag \\
&\leq
\frac{12 \eta(t)^2 \phi^2}{\kminrho} \left(\alpha_1^2 + \beta_1^2 + \frac{M\log(M)}{n} \right)
+
\frac{1}{4}\left\{
\|\fhat - \fstar \|_{\LPi}^2 +  
\left[\left(\frac{\alpha_2}{\alpha_1}\right)^2 + \left(\frac{\beta_2}{\beta_1}\right)^2 \right] \|\fhat - \fstar \|_{\psi}^2 \right\}.
\label{eq:secondtermbound}
\end{align}

~\\
\noindent{\it Step 3.}

Substituting the inequalities 
\eqref{eq:firsttermbound2} and  \eqref{eq:secondtermbound} %, \eqref{eq:thirdtermbound} and \eqref{eq:fourthtermbound} 
to \Eqref{eq:basicineqLast}, we obtain
\begin{align}
&\|\fhat - \fstar\|_{\LPi}^2 
+ \lambdaone  \|\fhat \|_{\psi}^2 \notag\\
\leq &
\frac{12 \eta(t)^2 \phi^2}{\kminrho} \left(\alpha_1^2 + \beta_1^2 + \frac{M\log(M)}{n} \right)
+
\frac{1}{2}
\left\{
\|\fhat - \fstar \|_{\LPi}^2 +  
\left[\left(\frac{\alpha_2}{\alpha_1}\right)^2 + \left(\frac{\beta_2}{\beta_1}\right)^2 \right] \|\fhat - \fstar \|_{\psi}^2 \right\} \notag\\
&+ \lambdaone  \|\fstar \|_{\psi}^2.
\label{eq:combineq1}
\end{align}
Now, by the triangular inequality, the term $\|\fhat - \fstar \|_{\psi}^2$ can be bounded as 
\begin{align*}
&\|\fhat - \fstar \|_{\psi}^2
\leq
\left( \|\fhat\|_{\psi} + \| \fstar \|_{\psi} \right)^2
\leq 
2 \left(\|\fhat\|_{\psi}^2 + \| \fstar \|_{\psi}^2 \right). %^{\frac{2}{p}} \\
\end{align*}
Thus, when $\lambdaone \geq \left(\frac{\alpha_2}{\alpha_1}\right)^2 + \left(\frac{\beta_2}{\beta_1}\right)^2$, 
\Eqref{eq:combineq1} yields
\begin{align}
&\frac{1}{2}\|\fhat - \fstar\|_{\LPi}^2 
\leq 
\frac{12 \eta(t)^2 \phi^2}{\kminrho} \left(\alpha_1^2 + \beta_1^2 + \frac{M\log(M)}{n} \right)
+
2 \lambdaone  \|\fstar \|_{\psi}^2.\notag
\end{align}
Therefore by multiplying 2 to both sides, we have 
\begin{align}
&\|\fhat - \fstar\|_{\LPi}^2 
\leq 
\frac{24 \eta(t)^2 \phi^2}{\kminrho} \left(\alpha_1^2 + \beta_1^2 + \frac{M\log(M)}{n} \right)
+
4 \lambdaone  \|\fstar \|_{\psi}^2.\notag
\end{align}
This gives the assertion. 
\end{proof}

\section{Bounding the Probabilities of $\scrE_1(t)$ and $\scrE_2(t')$}
\label{sec:BoundProbE1E2}
Here we derive bounds of the probabilities of the events $\scrE_1(t)$ and $\scrE_2(t')$ (see \Eqref{eq:defE1} and \Eqref{eq:defE2} for their definitions).
The goal of this section is to derive Lemmas \ref{lemm:uniformRatioBoundOnM} and \ref{th:SquareBound}.

Using Propositions \ref{prop:TalagrandConcent} and \ref{prop:localFmsBound}, 
we obtain the following ratio type uniform bound.
\begin{Lemma}
\label{lemm:uniformratiobound}
Under the Spectral Assumption (Assumption \ref{eq:specass}) and the Embedded Assumption (Assumption \ref{ass:linfbound}), 
there exists a constant $C_{\sm}$ depending only on $\sm$, $c$ and $C_1$ such that 
\begin{align}
\EE\left[\sup_{f_m \in \calH_m: \|f_m \hnorm{m}=1} 
\frac{|\frac{1}{n}\sum_{i=1}^n \sigma_i f_m(x_i)|}{\Uns(f_m)} \right] 
\leq C_{\sm}.
\notag
\end{align}
\end{Lemma}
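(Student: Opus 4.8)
The plan is to prove the bound by a peeling (stratification) argument over the value of $\|f_m\|_{\LPi}$, feeding each stratum into the local Rademacher estimate of Proposition~\ref{prop:localFmsBound}. Two preliminary observations drive everything. First, since $\sup_X k_m(X,X)\leq 1$ (Assumption (A\ref{ass:kernelbound})) the reproducing property gives $\|f_m\|_{\LPi}\leq\|f_m\|_{\infty}\leq\|f_m\hnorm{m}$, so on the sphere $\{\|f_m\hnorm{m}=1\}$ the quantity $\sigma:=\|f_m\|_{\LPi}$ ranges over $(0,1]$. Second, the Young-inequality computation \eqref{eq:sdecomptwo} and the display following it show that, writing $P:=\frac{\cm^{-\sm}}{\sqrt n}\vee\frac{\cm^{-\sm(3-\sm)/(1+\sm)}}{n^{1/(1+\sm)}}$, one has $\Uns(f_m)\geq 3P(\|f_m\|_{\LPi}+\sm\cm\|f_m\hnorm{m})$, and that this lower bound already dominates the right-hand side of \eqref{eq:localFmsBound} (with $b$ supplied by the Embedded Assumption (A\ref{ass:linfbound})) up to a constant $c_2$ depending only on $\sm,c,C_1$. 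Thus $\Uns$ retains both the local Rademacher scaling and an additive floor $3P\sm\cm$ coming from the $\|f_m\hnorm{m}$ term.

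For the peeling I would set $\calF_k:=\{f_m:\|f_m\hnorm{m}=1,\ \|f_m\|_{\LPi}\in(2^{-(k+1)},2^{-k}]\}$ for $k\geq 0$, which exhaust the sphere. On $\calF_k$ the denominator is bounded below deterministically by $\Uns(f_m)\geq 3P(2^{-(k+1)}+\sm\cm)$, while $\calF_k\subset\calB_{2^{-k},1,b_k}$ with $b_k$ the value furnished by (A\ref{ass:linfbound}) at $\sigma=2^{-k},a=1$. Proposition~\ref{prop:localFmsBound} (whose entropy hypothesis holds with $\tilde c_s$ depending only on $\sm,c$ by Proposition~\ref{prop:upperboundofe} and \eqref{eq:entropycondition}) then bounds $\EE[\sup_{\calF_k}|\frac1n\sum_i\sigma_i f_m(x_i)|]$ by $C_s'c_2\,\Psi_k$, where $\Psi_k:=\frac{(2^{-k})^{1-\sm}}{\sqrt n}\vee\frac{(2^{-k})^{(1-\sm)^2/(1+\sm)}}{n^{1/(1+\sm)}}$. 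Using $\sup_f(\cdots)=\sup_k\sup_{\calF_k}(\cdots)$ and $\EE\sup_k\leq\sum_k\EE$, the target expectation is at most $\sum_{k\geq 0}\frac{C_s'c_2\,\Psi_k}{3P(2^{-(k+1)}+\sm\cm)}$.

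It remains to show this series is bounded by a constant depending only on $\sm,c,C_1$. I would bound $\Psi_k$ by the sum of its two pieces and pair each piece with the matching piece of $P$ (using $P\geq\cm^{-\sm}/\sqrt n$ for the first and $P\geq\cm^{-\sm(3-\sm)/(1+\sm)}/n^{1/(1+\sm)}$ for the second), so that in each pairing the powers of $n$ cancel exactly, leaving two series whose $k$-th terms are, up to constants, $\frac{(2^{-k})^{1-\sm}\cm^{\sm}}{2^{-(k+1)}+\sm\cm}$ and $\frac{(2^{-k})^{(1-\sm)^2/(1+\sm)}\cm^{\sm(3-\sm)/(1+\sm)}}{2^{-(k+1)}+\sm\cm}$. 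Substituting $u_k:=2^{-k}/\cm$ and using that each numerator exponent together with its companion $\cm$-power totals degree one, each term reduces to $u_k^{\theta}/\max(u_k,\sm)$ for a suitable $\theta\in(0,1)$. The factor $\max(u_k,\sm)$ induces a crossover at $\sigma\approx\sm\cm$: for $u_k\geq\sm$ the term behaves like $u_k^{\theta-1}$ and the shells form an increasing geometric sequence dominated by its largest ($\sigma\approx\sm\cm$) value, while for $u_k<\sm$ the term behaves like $u_k^{\theta}/\sm$ and decays geometrically. Both one-sided geometric sums converge to constants determined by $\sm$ (and $C_s',c_2$), which yields the claimed $C_{\sm}$.

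The main obstacle is precisely this convergence step, and it is where the additive term $\sm\cm\|f_m\hnorm{m}$ in $\Uns$ becomes essential: if one lower-bounded the denominator only by the local Rademacher scale $\Psi_k$, which vanishes as $k\to\infty$, the ratio on each shell would be $O(1)$ and the peeling sum would diverge. Retaining the floor $3P\sm\cm$ is exactly what turns the two tails into genuinely summable geometric series and makes the final constant independent of $n$, $M$, and the free reals $\cm$. A minor point to dispatch along the way is the degenerate case $\|f_m\|_{\LPi}=0$ with $\|f_m\hnorm{m}=1$, for which the empirical process is almost surely zero and contributes nothing.
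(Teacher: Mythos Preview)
Your argument is correct and follows the same strategy as the paper: peel over $\|f_m\|_{\LPi}$, control each shell's numerator by Proposition~\ref{prop:localFmsBound} (with $b$ supplied by Assumption~\ref{ass:linfbound}), and use the additive floor $3P\,\sm\cm$ in $\Uns$ to make the shell sum converge independently of $\cm$ and $n$. The only difference is the peeling scheme: the paper peels \emph{outward} from the crossover point $\tau=\sm\cm$ with ratio $z=2^{1/\sm}$, so the base ball $\{\|f_m\|_{\LPi}\leq\tau\}$ is handled by the floor alone and the outer shells produce a single decreasing geometric series $\sum_{k\geq1}z^{1-k\sm}$; you instead peel inward from $\sigma=1$ with ratio $2$ and must split at the same crossover $u_k\approx\sm$ into two one-sided geometric sums. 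The paper's choice of base point and ratio sidesteps that case distinction, but the two executions are equivalent.
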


\begin{proof} [Proof of Lemma \ref{lemm:uniformratiobound}]
Let $\calH_m(\delta) := \{ f_m \in \calH_m \mid \|f_m\hnorm{m} = 1, \|f_m\|_{\LPi} \leq \delta \}$ and $z = 2^{1/\sm} > 1$.
Define $\tau := \sm \cm$. Then 
by combining Propositions \ref{prop:upperboundofe} and \ref{prop:localFmsBound} with Assumption \ref{ass:linfbound}, we have 
\begin{align*}
&
\EE\left[\sup_{f_m \in \calH_m: \|f_m \hnorm{m}=1} 
\frac{|\frac{1}{n}\sum_{i=1}^n \sigma_i f_m(x_i)|}{\Uns(f_m)} \right]  \\
\leq 
&
\EE\left[\sup_{f_m \in \calH_m(\tau) } 
\frac{|\frac{1}{n}\sum_{i=1}^n \sigma_i f_m(x_i)|}{\Uns(f_m)} \right] 
+
\sum_{k= 1}^{\infty}
\EE\left[\sup_{f_m \in \calH_m(\tau z^k) \backslash \calH_m(\tau z^{k-1})} 
\frac{|\frac{1}{n}\sum_{i=1}^n \sigma_i f_m(x_i)|}{\Uns(f_m)} \right] \\
\leq 
&
C_{\sm}'  \frac{\frac{\tau^{1-\sm}\tilde{c}_{\sm}^{\sm}}{\sqrt{n}}}
{ 3 \frac{\cm^{-\sm}}{\sqrt{n}} \sm \cm} 
\vee 
\frac{\frac{C_1^{\frac{1-\sm}{1+\sm}} \tau^{\frac{(1-\sm)^2}{1+\sm}}\tilde{c}_{\sm}^{\frac{2\sm}{1+\sm}}}{n^{\frac{1}{1+\sm}}}}
{3\frac{\cm^{-\frac{\sm(3-\sm)}{1+\sm}}}{n^{\frac{1}{1+\sm}}} \sm\cm } \\
&+
\sum_{k=1}^\infty
C_{\sm}'  \frac{\frac{z^{k(1-\sm)}\tau^{1-\sm}\tilde{c}_{\sm}^{\sm}}{\sqrt{n}}}
{ 3 \frac{\cm^{-\sm}}{\sqrt{n}} \tau z^{k-1}} 
\vee 
\frac{\frac{C_1^{\frac{1-\sm}{1+\sm}} z^{k\frac{(1-\sm)^2}{1+\sm}} \tau^{\frac{(1-\sm)^2}{1+\sm}}\tilde{c}_{\sm}^{\frac{2\sm}{1+\sm}}}{n^{\frac{1}{1+\sm}}}}
{3\frac{\cm^{-\frac{\sm(3-\sm)}{1+\sm}}}{n^{\frac{1}{1+\sm}}} \tau z^{k-1} } \\
\leq  
&
\frac{C_{\sm}' }{3} \left(\sm^{-\sm} \tilde{c}_{\sm}^{\sm} \vee \sm^{-3\sm} C_1^{\frac{1-\sm}{1+\sm}} \tilde{c}_{\sm}^{\frac{2\sm}{1+\sm}}\right) 
\left( 1 +  \sum_{k=1}^\infty z^{1 - k\sm} \vee z^{1 - k\frac{\sm(3-\sm)}{1+\sm}} \right) \\
=
&
\frac{C_{\sm}' \sm^{-3\sm}}{3}  \left(\tilde{c}_{\sm}^{\sm} \vee C_1^{\frac{1-\sm}{1+\sm}} \tilde{c}_{\sm}^{\frac{2\sm}{1+\sm}}\right) 
\left(1 +  \frac{z^{1-\sm}}{1-z^{-\sm}}\vee \frac{z^{1-\frac{\sm(3-\sm)}{1+\sm}}}{1-z^{-\frac{\sm(3-\sm)}{1+\sm}}} \right) \\
\leq
&
9 C_{\sm}'   \left(\tilde{c}_{\sm}^{\sm} \vee C_1^{\frac{1-\sm}{1+\sm}} \tilde{c}_{\sm}^{\frac{2\sm}{1+\sm}}\right) 
\left(1 +  \frac{z^{1-\sm}}{1-z^{-\sm}}\vee \frac{z^{1-\frac{\sm(3-\sm)}{1+\sm}}}{1-z^{-\frac{\sm(3-\sm)}{1+\sm}}} \right),
\end{align*}
where we used $\sm^{-\sm} \leq 3$ for $0<\sm$ in the last line.
Thus by setting, $C_{\sm} = 
9 C_{\sm}'   \left(\tilde{c}_{\sm}^{\sm} \vee C_1^{\frac{1-\sm}{1+\sm}} \tilde{c}_{\sm}^{\frac{2\sm}{1+\sm}}\right) 
\left(1 +  \frac{z^{1-\sm}}{1-z^{-\sm}}\vee \frac{z^{1-\frac{\sm(3-\sm)}{1+\sm}}}{1-z^{-\frac{\sm(3-\sm)}{1+\sm}}} \right)$,
we obtain the assertion.
\end{proof}

This lemma immediately gives the following corollary.
\begin{Corollary}
\label{cor:basicuniformcor}
Under the Spectral Assumption (Assumption \ref{eq:specass}) and the Embedded Assumption (Assumption \ref{ass:linfbound}),  
there exists a constant $C_{\sm}$ depending only on $\sm,c$ and $C_1$ such that 
\begin{align*}
\EE\left[\sup_{f_m \in \calH_m} 
\frac{|\frac{1}{n}\sum_{i=1}^n \sigma_i f_m(x_i)|}
{\Uns(f_m)}\right] 
\leq C_{\sm}. 
\end{align*}
\end{Corollary}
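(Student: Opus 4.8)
The plan is to deduce the Corollary directly from Lemma \ref{lemm:uniformratiobound} by a scaling argument: the only difference between the two statements is that the Corollary takes the supremum over \emph{all} of $\calH_m$, whereas Lemma \ref{lemm:uniformratiobound} restricts to the unit sphere $\{f_m : \|f_m\hnorm{m}=1\}$. Since the ratio appearing inside the supremum is invariant under positive rescaling of $f_m$, enlarging the domain from the unit sphere to all of $\calH_m$ cannot change the value of the supremum, and the bound carries over verbatim with the same constant $C_{\sm}$.

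First I would verify that both the numerator and the denominator are positively homogeneous of degree one in $f_m$. For the numerator $\left|\frac{1}{n}\sum_{i=1}^n \sigma_i f_m(x_i)\right|$ this is immediate, since evaluation $f_m \mapsto f_m(x_i)$ is linear. For the denominator, inspecting the definition
\[
\Uns(f_m) = 3\left(
\frac{\cm^{-\sm}}{\sqrt{n}} \vee
\frac{\cm^{-\frac{\sm(3-\sm)}{1+\sm}}}{n^{\frac{1}{1+\sm}}}  \right)
\left(\|f_m\|_{\LPi} +  \sm \cm \|f_m\hnorm{m}\right)
+
\sqrt{\frac{\log(M)}{n}}\,\|f_m\|_{\LPi},
\]
one sees that $f_m$ enters only through $\|f_m\|_{\LPi}$ and $\|f_m\hnorm{m}$, both of which scale linearly, while the leading coefficient is independent of $f_m$; hence $\Uns(\lambda f_m)=\lambda\,\Uns(f_m)$ for every $\lambda>0$. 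Consequently the ratio is homogeneous of degree zero, i.e.\ unchanged under $f_m \mapsto \lambda f_m$.

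Next, for any nonzero $f_m \in \calH_m$ I would set $g_m := f_m/\|f_m\hnorm{m}$, so that $\|g_m\hnorm{m}=1$, and invoke the degree-zero homogeneity to obtain
\[
\frac{\left|\frac{1}{n}\sum_{i=1}^n \sigma_i f_m(x_i)\right|}{\Uns(f_m)}
= \frac{\left|\frac{1}{n}\sum_{i=1}^n \sigma_i g_m(x_i)\right|}{\Uns(g_m)}.
\]
The single degenerate point $f_m=0$ produces the indeterminate form $0/0$ and may simply be excluded from (or assigned value zero in) the supremum without affecting it. Since $f_m\mapsto f_m/\|f_m\hnorm{m}$ maps $\calH_m\setminus\{0\}$ onto the unit sphere, the supremum over all of $\calH_m$ equals the supremum over $\{g_m : \|g_m\hnorm{m}=1\}$. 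Taking expectations on both sides and applying Lemma \ref{lemm:uniformratiobound} then yields the claimed bound with the same constant $C_{\sm}$.

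There is essentially no substantive obstacle here, which is why the paper labels this an immediate corollary; the only points demanding a moment's care are confirming that \emph{every} term of $\Uns$ (including the $\sqrt{\log(M)/n}\,\|f_m\|_{\LPi}$ term) scales linearly, and noting that the normalization does not lose any mass of the supremum apart from the single degenerate point $f_m=0$.
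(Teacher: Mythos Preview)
Your proposal is correct and matches the paper's own proof essentially verbatim: the paper also divides numerator and denominator by $\|f_m\hnorm{m}$, uses the positive homogeneity $\Uns(f_m)/\|f_m\hnorm{m}=\Uns(f_m/\|f_m\hnorm{m})$ to reduce the supremum to the unit sphere, and then invokes Lemma~\ref{lemm:uniformratiobound}. Your additional remarks on checking each term of $\Uns$ and on the degenerate point $f_m=0$ are sound and, if anything, more careful than the paper's terse write-up.
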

\begin{proof}
By dividing the denominator and the numerator by the RKHS norm $\|f_m\hnorm{m}$, we have
\begin{align*}
&\EE\left[\sup_{f_m \in \calH_m}  
\frac{|\frac{1}{n}\sum_{i=1}^n \sigma_i f_m(x_i)|}
{\Uns(f_m)}\right]  \\
= &
\EE\left[\sup_{f_m \in \calH_m} 
\frac{|\frac{1}{n}\sum_{i=1}^n \sigma_i f_m(x_i)|/\|f_m\hnorm{m}}
{\Uns(f_m)/\|f_m\hnorm{m}}\right] \\
= &
\EE\left[\sup_{f_m \in \calH_m} 
\frac{|\frac{1}{n}\sum_{i=1}^n \sigma_i f_m(x_i)/\|f_m\hnorm{m}|}
{\Uns(f_m/\|f_m\hnorm{m})}\right] \\
= &
\EE\left[\sup_{f_m \in \calH_m: \|f_m \hnorm{m}=1} 
\frac{|\frac{1}{n}\sum_{i=1}^n \sigma_i f_m(x_i)|}
{\Uns(f_m)}\right] \\
\leq & C_{\sm}. 
~~~~~~(\because \text{Lemma \ref{lemm:uniformratiobound}})
\end{align*}

\end{proof}

\begin{Lemma}
\label{lemm:basicuniformlemm}
If $\frac{\log(M)}{\sqrt{n}}\leq 1$, then under the Spectral Assumption (Assumption \ref{eq:specass}) and the Embedded Assumption (Assumption \ref{ass:linfbound})  
there exists a constant $\tilde{C}_*$ depending only on $\{s_m\}_{m=1}^M$, $c$, $C_1$ such that 
\begin{align*}
\EE\left[\max_m \sup_{f_m \in \calH_m} 
\frac{|\frac{1}{n}\sum_{i=1}^n \sigma_i f_m(x_i)|}{\Uns(f_m)} \right] 
\leq \tilde{C}_*. 
\end{align*}
\end{Lemma}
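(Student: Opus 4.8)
The plan is to bootstrap the per-coordinate estimate of Corollary~\ref{cor:basicuniformcor} into a bound on the expected maximum. Writing $Z_m := \sup_{f_m \in \calH_m}\frac{|\frac{1}{n}\sum_{i=1}^n \sigma_i f_m(x_i)|}{\Uns(f_m)}$, the corollary says $\EE[Z_m] \le C_{\sm}$ for each $m$, so it is natural to split $\EE[\max_m Z_m] \le \max_m \EE[Z_m] + \EE[\max_m (Z_m - \EE[Z_m])] \le \bar C + \EE[\max_m (Z_m - \EE[Z_m])]$, where $\bar C := \max_m C_{\sm}$ depends only on $\{\sm\}_{m=1}^M$, $c$ and $C_1$. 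Everything then reduces to controlling the expected maximum of the $M$ centered variables, which I would do through $\EE[\max_m (Z_m - \EE[Z_m])] \le \int_0^\infty \sum_{m=1}^M P(Z_m - \EE[Z_m] > s)\,\dd s$ together with an exponential deviation bound for each $Z_m$. The decisive feature to exploit is that the summand $\sqrt{\log(M)/n}\,\|f_m\|_{\LPi}$ inside $\Uns(f_m)$ — which was pure slack in Corollary~\ref{cor:basicuniformcor}, since it only enlarges the denominator — sets the deviation scale of $Z_m$ to order $1/\sqrt{\log M}$, exactly what is needed so that the union over the $M$ coordinates costs only a constant.

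For the deviation bound I would turn to Talagrand's inequality (Proposition~\ref{prop:TalagrandConcent}). The obstruction, which is the step I expect to dominate the work, is that one cannot apply it directly to the ratio class $\{(\sigma,x)\mapsto \sigma f_m(x)/\Uns(f_m) : f_m\in\calH_m\}$: the supremum-norm parameter $U = \sup_{f_m}\|f_m\|_\infty/\Uns(f_m)$ is infinite. Indeed, by the Embedded Assumption~\ref{ass:linfbound}, functions normalized to $\|f_m\hnorm{m}=1$ with vanishing $\LPi$-norm satisfy $\|f_m\|_\infty \le C_1\|f_m\|_{\LPi}^{\sm}$, whereas $\Uns(f_m)$ decays only like $\|f_m\|_{\LPi}$ (or faster), so for small $\sm$ the ratio diverges as $\|f_m\|_{\LPi}\to 0$. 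The fix is to peel: by homogeneity of the ratio fix $\|f_m\hnorm{m}=1$ and decompose $\calH_m$ into geometric $\LPi$-shells $\{f_m : \|f_m\|_{\LPi}\in(\tau z^{k-1},\tau z^{k}]\}$ with $z = 2^{1/\sm}$ and $\tau = \sm\cm$, exactly the device used in Lemma~\ref{lemm:uniformratiobound}, treating the base region $\{f_m : \|f_m\|_{\LPi}\le\tau\}$ separately. On a single shell the numerator class $\{\sigma f_m\}$ has finite, explicit Talagrand parameters, with supremum norm at most $C_1(\tau z^{k})^{\sm}$ and variance at most $(\tau z^{k})^2$, so the inequality applies shell by shell.

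Finally I would divide each shell's deviation bound by the shell lower bound on $\Uns(f_m)$ and sum. The variance (sub-Gaussian) term, divided by $\Uns(f_m)\ge\sqrt{\log(M)/n}\,\|f_m\|_{\LPi}$, becomes $O(\sqrt{t/\log M})$ uniformly, so choosing $t = \log M + k + u$ and union-bounding over the $M$ coordinates and the $O(\log n)$ effective shells makes the $\sqrt{\log M}$ price of the maximum cancel against this scale and leaves a geometrically summable series. The genuinely delicate point — and where I expect most of the bookkeeping to lie — is the Bernstein (supremum-norm) term $U_k t/n$ coming from the spiky functions in the small shells (small $\|f_m\|_{\LPi}$, small $\sm$): one must check, using the full family of lower bounds on $\Uns(f_m)$ (the pieces $\cm^{-\sm}/\sqrt n$, $\cm^{-\sm(3-\sm)/(1+\sm)}/n^{1/(1+\sm)}$ and the deterministic $\sm\cm^{1-\sm}/\sqrt n$ at the base) and the constraint $\|f_m\|_{\LPi}\le 1$, that this contribution remains bounded by a constant independent of both $M$ and the free reals $\{\cm\}$, invoking $\log(M)/\sqrt n\le 1$. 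Assembling these with $\bar C$ yields the claimed $\tilde C_*$ depending only on $\{\sm\}_{m=1}^M$, $c$ and $C_1$.
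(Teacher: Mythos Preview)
Your overall architecture --- bound $\EE[Z_m]$ via Corollary~\ref{cor:basicuniformcor}, upgrade to a deviation inequality for each $Z_m$ via Talagrand, pay $\log M$ for the union over coordinates, and integrate the tail --- is exactly the paper's route. The divergence is at the step you flag as the obstruction: you claim $U=\sup_{f_m}\|f_m\|_\infty/\Uns(f_m)$ is infinite and therefore launch into a shell-by-shell peeling. This claim is false, and it sends you down a much harder road than necessary.

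The error is in your assertion that ``$\Uns(f_m)$ decays only like $\|f_m\|_{\LPi}$ (or faster)'' as $\|f_m\|_{\LPi}\to 0$ with $\|f_m\hnorm{m}=1$. Look again at the definition: the factor $(\|f_m\|_{\LPi}+\sm\cm\|f_m\hnorm{m})$ contains the term $\sm\cm\|f_m\hnorm{m}$, so $\Uns(f_m)$ stays bounded below by $3(\cm^{-\sm}/\sqrt n)\,\sm\cm\|f_m\hnorm{m}>0$ and does \emph{not} vanish. More to the point, Young's inequality in the form \eqref{eq:sdecomptwo} gives
\[
\frac{\|f_m\|_{\LPi}^{1-\sm}\|f_m\hnorm{m}^{\sm}}{\sqrt n}\;\le\;\frac{\cm^{-\sm}}{\sqrt n}\bigl(\|f_m\|_{\LPi}+\sm\cm\|f_m\hnorm{m}\bigr)\;\le\;\tfrac{1}{3}\Uns(f_m),
\]
so together with the Embedded Assumption one gets $\|f_m\|_\infty/\Uns(f_m)\le C_1\sqrt n/3$ uniformly in $f_m$ (and in the free reals $\cm$). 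Likewise $\|f_m\|_{\LPi}/\Uns(f_m)\le\sqrt{n/\log M}$ directly from the last summand of $\Uns(f_m)$. Hence Talagrand applies \emph{directly} to the ratio class with $B\le n/\log M$ and $U\le C_1\sqrt n$: this yields $P\bigl(Z_m\ge K[C_{\sm}+\sqrt{t/\log M}+C_1 t/\sqrt n]\bigr)\le e^{-t}$, and after the substitution $t\leftarrow t+\log M$ and a union bound over $m$ one integrates the tail to get $\tilde C_*$. No peeling is needed, and the ``genuinely delicate'' bookkeeping you anticipated for the Bernstein term on small shells simply does not arise.
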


\begin{proof} [Proof of Lemma \ref{lemm:basicuniformlemm}]
First notice that the $\LPi$-norm and the $\infty$-norm of $\frac{\sigma_i f_m(x_i)}{\Uns(f_m)}$ can be evaluated by  
\begin{align}
&\left\| \frac{\sigma_i f_m(x_i)}{\Uns(f_m)} \right\|_{\LPi} = \frac{\left\|f_m\right\|_{\LPi}}{\Uns(f_m)} 
\leq
 \frac{\|f_m\|_{\LPi}}{\sqrt{\frac{\log(M)}{n}} \|f_m\|_{\LPi}}
\leq \sqrt{\frac{n}{\log(M)}},
\label{eq:ratioLtwonormBound}
\\
&
\left\| \frac{\sigma_i f_m(x_i)}{\Uns(f_m)} \right\|_{\infty} =  \frac{\| f_m \|_{\infty}}{\Uns(f_m)} 
\leq  \frac{C_1 \|f_m \|_{\LPi}^{1-\sm} \|f_m \hnorm{m}^{\sm} }{\Uns(f_m)} \leq \frac{C_1}{3} \sqrt{n}\leq C_1 \sqrt{n},
\label{eq:ratioinfnormBound}
\end{align}
where the second line is shown by using the relation \eqref{eq:sdecomptwo}.
Let $C_* := \max_m C_{\sm}$ where $C_{\sm}$ is the constant appeared in Lemma \ref{lemm:uniformratiobound}.
%Corollary \ref{cor:basicuniformcor}.
Thus Talagrand's inequality and Corollary \ref{cor:basicuniformcor} imply
\begin{align*}
&P\left(\max_m \sup_{f_m \in \calH_m} 
\frac{|\frac{1}{n}\sum_{i=1}^n \sigma_i f_m(x_i)|}{\Uns(f_m)}
\geq
K\left[C_* + \sqrt{\frac{t}{\log(M)}} + \frac{C_1 t}{\sqrt{n}}\right]
\right) \notag \\
\leq 
&
\sum_{m=1}^M 
P\left(\sup_{f_m \in \calH_m} 
\frac{|\frac{1}{n}\sum_{i=1}^n \sigma_i f_m(x_i)|}{\Uns(f_m)}
\geq
K\left[C_* + \sqrt{\frac{t}{\log(M)}} + \frac{C_1 t}{\sqrt{n}}\right]
\right) \notag 
\\ 
\leq
&
\sum_{m=1}^M 
P\left(\sup_{f_m \in \calH_m} 
\frac{|\frac{1}{n}\sum_{i=1}^n \sigma_i f_m(x_i)|}{\Uns(f_m)}
\geq
K\left[C_{\sm} + \sqrt{\frac{t}{\log(M)}} + \frac{C_1 t}{\sqrt{n}}\right]
\right) \notag 
\\ 
\leq
&
M e^{-t}.
\end{align*}
By setting $t \leftarrow t + \log(M)$, we obtain 
\begin{align*}
&P\left(\max_m \sup_{f_m \in \calH_m} 
\frac{|\frac{1}{n}\sum_{i=1}^n \sigma_i f_m(x_i)|}{\Uns(f_m)}
\geq
K\left[C_* + \sqrt{\frac{t + \log(M)}{\log(M)}} + \frac{C_1 (t + \log(M))}{\sqrt{n}}\right]
\right) \leq e^{-t}
\notag 
\end{align*}
for all $t\geq 0$.
Consequently the expectation of the $\max$-$\sup$ term can be bounded as 
\begin{align*}
&
\EE\left[\max_m 
\sup_{f_m \in \calH_m} 
\frac{|\frac{1}{n}\sum_{i=1}^n \sigma_i f_m(x_i)|}{\Uns(f_m)} \right] \notag \\
\leq 
&
K\left[C_* + 1 + \frac{C_1\log(M)}{\sqrt{n}}\right]
+ \int_{0}^{\infty} K\left[C_* + \sqrt{\frac{t + 1 + \log(M)}{\log(M)}} + \frac{C_1(t+1+\log(M))}{\sqrt{n}}\right]
e^{-t} \dd t \notag \\
\leq &
2K\left[C_* + \sqrt{2} + \sqrt{\frac{\pi}{4 \log(M)}} + \frac{C_1(2+\log(M))}{\sqrt{n}}\right] \leq \tilde{C}_*, 
\notag 
%+K\left[ C_1 \sqrt{n} \frac{\log(M+1)}{n} + C_1 \sqrt{n} \sqrt{\frac{\log(M+1)}{n}}\right]
\end{align*}
where we used $\sqrt{t + 1 + \log(M)} \leq \sqrt{t} + \sqrt{1+\log(M)}$ and $\int_{0}^\infty \sqrt{t} e^{-t} \dd t = \sqrt{\frac{\pi}{4}}$, $\frac{\log(M)}{\sqrt{n}} \leq 1$, and 
$\tilde{C}_* = 2K[C_* + \sqrt{2} +\sqrt{\frac{\pi}{4}}+3C_1]$.
\end{proof}

\begin{Lemma}
\label{lemm:uniformRatioBoundOnM}
Suppose the Basic Assumption (Assumption \ref{ass:BasicAss}),
 the Spectral Assumption (Assumption \ref{eq:specass}) and the Embedded Assumption (Assumption \ref{ass:linfbound}) hold.
Define $\bar{\phi} = K L \left[2 \tilde{C}_* +1 + C_1 \right]$.
If $\frac{\log(M)}{\sqrt{n}}\leq 1$, then 
the following holds 
\begin{align*}
&P\left(\max_m \sup_{f_m \in \calH_m} 
\frac{|\frac{1}{n}\sum_{i=1}^n \epsilon_i f_m(x_i)|}{\Uns(f_m)}
\geq 
\bar{\phi}
\eta(t) \right) \leq
 e^{-t}. 
\end{align*}
\end{Lemma}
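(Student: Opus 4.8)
The plan is to recognize the left-hand quantity as the supremum of a single mean-zero empirical process and apply Talagrand's concentration inequality (Proposition \ref{prop:TalagrandConcent}), with the expectation controlled by reducing to the Rademacher bound already established in Lemma \ref{lemm:basicuniformlemm}. Concretely, set $W_i := (x_i,\epsilon_i)$ and consider the union class $\calG := \bigcup_{m=1}^M \{ (x,\epsilon)\mapsto \epsilon f_m(x)/\Uns(f_m) \mid f_m \in \calH_m\}$. Since $\Uns$ is positively homogeneous of degree one, each ratio is scale invariant (as exploited in Corollary \ref{cor:basicuniformcor}), and since $\EE[\epsilon \mid X]=0$ by (A\ref{ass:truenoise}) every $g\in\calG$ has $\EE[g]=0$; thus $Z := \max_m \sup_{f_m}\frac{|\frac{1}{n}\sum_i \epsilon_i f_m(x_i)|}{\Uns(f_m)} = \sup_{g\in\calG}|\frac{1}{n}\sum_i g(W_i)-\EE g|$. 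Each $\calH_m$ is separable, so $\calG$ is separable in $\infty$-norm and Proposition \ref{prop:TalagrandConcent} applies.

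First I would record the two deterministic inputs to Talagrand's bound. Using $|\epsilon|\le L$ (Assumption \ref{ass:BasicAss}) together with the ratio estimates \eqref{eq:ratioLtwonormBound} and \eqref{eq:ratioinfnormBound}, the variance satisfies $B = \sup_{g}\EE[g^2] \le L^2 \sup_{f_m}\frac{\|f_m\|_{\LPi}^2}{\Uns(f_m)^2} \le L^2\frac{n}{\log(M)}$, whence $\sqrt{Bt/n}\le L\sqrt{t/\log(M)}\le L\sqrt{t}\le L\,\eta(t)$ (the middle step using $\log(M)\ge 1$); and $U = \sup_g\|g\|_\infty \le L\sup_{f_m}\frac{\|f_m\|_\infty}{\Uns(f_m)}\le L C_1\sqrt{n}$, whence $Ut/n \le L C_1 t/\sqrt{n} \le L C_1\,\eta(t)$.

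The crux is bounding $\EE[Z]$, where the random (rather than Rademacher) coefficients $\epsilon_i$ must be removed. I would symmetrize the mean-zero empirical process to obtain $\EE[Z]\le 2\,\EE[\sup_{g\in\calG}|\frac{1}{n}\sum_i\sigma_i g(W_i)|]$, and then, conditionally on the data, apply the Ledoux--Talagrand contraction principle to the $1$-Lipschitz maps $t\mapsto \epsilon_i t/L$ (which vanish at $0$). Because each $\calH_m$ is a linear space and $\Uns(f_m)=\Uns(-f_m)$, the index set is symmetric, so the absolute values may be dropped and the contraction incurs no extra constant; this yields $\EE[\sup_{g}|\frac{1}{n}\sum_i\sigma_i g(W_i)|] \le L\,\EE[\max_m\sup_{f_m}\frac{|\frac{1}{n}\sum_i\sigma_i f_m(x_i)|}{\Uns(f_m)}] \le L\tilde{C}_*$ by Lemma \ref{lemm:basicuniformlemm}, so $\EE[Z]\le 2L\tilde{C}_*$. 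Maintaining the symmetric-class structure is exactly what keeps the constant at $2\tilde{C}_*$ rather than $4\tilde{C}_*$, matching $\bar{\phi}$.

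Finally, plugging $\EE[Z]\le 2L\tilde{C}_*$, $\sqrt{Bt/n}\le L\,\eta(t)$, and $Ut/n\le LC_1\,\eta(t)$ into Proposition \ref{prop:TalagrandConcent} gives, with probability at least $1-e^{-t}$,
\[
Z \le K\big[\,2L\tilde{C}_* + L\,\eta(t) + LC_1\,\eta(t)\,\big] \le KL\big[2\tilde{C}_* + 1 + C_1\big]\,\eta(t) = \bar{\phi}\,\eta(t),
\]
where the last step uses $\eta(t)\ge 1$ to absorb the constant term. The only genuinely delicate point is the symmetrization/contraction step and, in particular, exploiting the symmetry of the class to preserve the exact constant; everything else is a direct transcription of the Rademacher analysis in Lemma \ref{lemm:basicuniformlemm}.
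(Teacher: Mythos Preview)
Your proposal is correct and follows essentially the same route as the paper: both bound $\EE[Z]$ via symmetrization followed by the Ledoux--Talagrand contraction inequality to reduce to Lemma~\ref{lemm:basicuniformlemm} (yielding $2L\tilde{C}_*$), insert the moment bounds from \eqref{eq:ratioLtwonormBound} and \eqref{eq:ratioinfnormBound} (scaled by $L$) into Talagrand's inequality, and then absorb the three resulting terms into $\bar{\phi}\,\eta(t)$ using $\eta(t)\ge 1$. Your extra care in noting the symmetry of the index class (so that contraction costs no additional factor of $2$) and in checking separability of $\calG$ are welcome refinements, but the argument is the paper's.
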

\begin{proof} [Proof of Lemma \ref{lemm:uniformRatioBoundOnM}]
By the contraction inequality \cite[Theorem 4.12]{Book:Ledoux+Talagrand:1991} and Lemma \ref{lemm:basicuniformlemm},
we have 
\begin{align*}
&
\EE\left[\max_m \sup_{f_m \in \calH_m} 
\frac{|\frac{1}{n}\sum_{i=1}^n \epsilon_i f_m(x_i)|}{\Uns(f_m)}\right] 
\leq 
2 \EE\left[\max_m \sup_{f_m \in \calH_m} 
\frac{|\frac{1}{n}\sum_{i=1}^n \sigma_i \epsilon_i f_m(x_i)|}{\Uns(f_m)}\right] \leq 
2 L \tilde{C}_*,
\end{align*}
where we used $\epsilon_i \leq L$ (Basic Assumption).
Using this and \Eqref{eq:ratioLtwonormBound} and \Eqref{eq:ratioinfnormBound},
Talgrand's inequality gives %the assertion. 
\begin{align*}
&P\left(\max_m \sup_{f_m \in \calH_m} 
\frac{|\frac{1}{n}\sum_{i=1}^n \epsilon_i f_m(x_i)|}{\Uns(f_m)}
\geq K L \left[
2 \tilde{C}_* 
+
\sqrt{t} + \frac{C_1 t}{\sqrt{n}} \right]
\right) 
\leq
 e^{-t}. 
\end{align*}
Thus we have 
\begin{align*}
&P\left(\max_m \sup_{f_m \in \calH_m} 
\frac{|\frac{1}{n}\sum_{i=1}^n \epsilon_i f_m(x_i)|}{\Uns(f_m)}
\geq K L \left[
2 \tilde{C}_* 
+
1 + C_1 \right]
\max\left(1,\sqrt{t},\frac{t}{\sqrt{n}}\right)
\right) 
\leq
 e^{-t}. 
\end{align*}
Therefore by the definition of $\bar{\phi}$ and $\eta(t)$, we obtain the assertion.

\end{proof}

\begin{Lemma}
\label{th:SquareBound}
Suppose the Basic Assumption (Assumption \ref{ass:BasicAss}),
 the Spectral Assumption (Assumption \ref{eq:specass}) and the Embedded Assumption (Assumption \ref{ass:linfbound}) hold.
Let $\bar{\phi}'=K[2C_1 \tilde{C}_*   +  C_1 + C_1^2]$. 
% that depends only $s,c,K,L,C_1$ such that 
Then, %$\lambdatmp = n^{-\frac{1}{1+q+s}} \vee (\frac{\log(M)}{n})^{\frac{1}{1+q}}$ and 
if $\frac{\log(M)}{\sqrt{n}} \leq 1$,
we have for all $t\geq 0$
\begin{align*}
\left| \textstyle \left\|\sum_{m=1}^M f_m \right\|_n^2 - \left\|\sum_{m=1}^M f_m \right\|_{\LPi}^2 \right|  \leq 
\phi'
\sqrt{n} \left(
\sum_{m=1}^M \Uns(f_m) \right)^2  \eta(t),~~
\end{align*}
for all  $f_m \in \calH_m~(m=1,\dots,M)$
with probability $1- \exp( - t )$. %$\exp(-n + \log(M))$.
\end{Lemma}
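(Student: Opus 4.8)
The plan is to reduce the claim to a concentration statement for the normalized \emph{quadratic} empirical process indexed by $\calHtot$, and then dispatch it with symmetrization, a contraction step, and Talagrand's inequality. Since $\left\|\sum_m f_m\right\|_n^2-\left\|\sum_m f_m\right\|_{\LPi}^2$ is homogeneous of degree $2$ and $\sum_{m=1}^M \Uns(f_m)$ is homogeneous of degree $1$ under $f_m\mapsto\lambda f_m$, it suffices to prove the bound on the normalized class
\[
\calG := \left\{ g=\sum_{m=1}^M f_m ~\Big|~ f_m\in\calH_m,~ \sum_{m=1}^M \Uns(f_m)\le 1 \right\},
\]
that is, to show $Z:=\sup_{g\in\calG}\left|\|g\|_n^2-\|g\|_{\LPi}^2\right|\le \bar{\phi}'\sqrt{n}\,\eta(t)$ with probability $1-e^{-t}$; the general statement then follows by dividing through by $\left(\sum_m \Uns(f_m)\right)^2$.

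First I would record two uniform \emph{a priori} bounds on $\calG$ that come directly from the ratio estimates \eqref{eq:ratioLtwonormBound} and \eqref{eq:ratioinfnormBound}. For $g=\sum_m f_m$ with $\sum_m \Uns(f_m)\le1$, the triangle inequality together with $\|f_m\|_{\infty}\le\frac{C_1}{3}\sqrt{n}\,\Uns(f_m)$ and $\|f_m\|_{\LPi}\le\sqrt{n/\log(M)}\,\Uns(f_m)$ gives $\|g\|_{\infty}\le\frac{C_1}{3}\sqrt{n}$ and $\|g\|_{\LPi}^2\le n/\log(M)$. These supply the two Talagrand parameters in Proposition \ref{prop:TalagrandConcent} applied to the uncentred class $\{g^2:g\in\calG\}$: the variance parameter is $B=\sup_g\EE[(g^2-\EE g^2)^2]\le\sup_g\|g\|_{\infty}^2\|g\|_{\LPi}^2\le\frac{C_1^2 n^2}{9\log(M)}$, and the supremum parameter is $U=\sup_g\|g^2\|_{\infty}\le\frac{C_1^2 n}{9}$. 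Hence $\sqrt{B t/n}\le\frac{C_1}{3}\sqrt{n t/\log(M)}\le C_1\sqrt{n}\sqrt{t}$ and $U t/n\le C_1^2 t$, which already furnish the $C_1\sqrt{n}\sqrt{t}$ and $C_1^2 t$ pieces of $\bar{\phi}'\sqrt{n}\,\eta(t)$.

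The remaining ingredient is $\EE[Z]$, and this is the crux. By the standard symmetrization inequality, $\EE[Z]\le 2\,\EE\big[\sup_{g\in\calG}\big|\frac1n\sum_i\sigma_i g(x_i)^2\big|\big]$. Because every $g\in\calG$ obeys $|g(x_i)|\le\frac{C_1}{3}\sqrt{n}$, the map $u\mapsto u^2$ is Lipschitz with constant $\tfrac{2C_1}{3}\sqrt{n}$ on the relevant range and vanishes at $0$; the contraction inequality \citep[Theorem 4.12]{Book:Ledoux+Talagrand:1991} then replaces the squared process by the linear one, so that $\EE[Z]\le 2\cdot\tfrac{2C_1}{3}\sqrt{n}\,\EE\big[\sup_{g\in\calG}\big|\frac1n\sum_i\sigma_i g(x_i)\big|\big]$. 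Finally I would decouple the linear Rademacher average across kernels via $\big|\frac1n\sum_i\sigma_i g(x_i)\big|\le\sum_m\big|\frac1n\sum_i\sigma_i f_m(x_i)\big|\le\big(\max_m\sup_{f_m}\frac{|\frac1n\sum_i\sigma_i f_m(x_i)|}{\Uns(f_m)}\big)\sum_m\Uns(f_m)$, and the constraint $\sum_m\Uns(f_m)\le1$ combined with Lemma \ref{lemm:basicuniformlemm} bounds its expectation by $\tilde{C}_*$. This yields $\EE[Z]\le 2C_1\tilde{C}_*\sqrt{n}$, matching the first piece of $\bar{\phi}'$.

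Assembling the three estimates through Proposition \ref{prop:TalagrandConcent} gives $Z\le K[2C_1\tilde{C}_*+C_1+C_1^2]\sqrt{n}\,\max(1,\sqrt{t},t/\sqrt{n})=\bar{\phi}'\sqrt{n}\,\eta(t)$ with probability $1-e^{-t}$, as claimed. The main obstacle is the second-order nature of the process: the class $\{g^2\}$ is not itself a bounded Lipschitz class, so the entire argument hinges on the \emph{uniform} supremum-norm control $\|g\|_{\infty}\le\frac{C_1}{3}\sqrt{n}$ from \eqref{eq:ratioinfnormBound} to legitimize the contraction step, and it is exactly this $\sqrt{n}$ that surfaces as the $\sqrt{n}$ factor in the final bound. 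A secondary point needing care is that the per-kernel decoupling must be arranged so that the dependence on $M$ enters only through the already-$\log(M)$-calibrated constant $\tilde{C}_*$ of Lemma \ref{lemm:basicuniformlemm}, rather than accumulating an extra factor of $M$.
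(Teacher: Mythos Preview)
Your proposal is correct and follows essentially the same route as the paper: symmetrize the quadratic process, apply the contraction inequality using the uniform sup-norm bound \eqref{eq:ratioinfnormBound} to reduce to the linear Rademacher process, control the latter via the max--sup ratio estimate of Lemma~\ref{lemm:basicuniformlemm}, and finish with Talagrand's inequality using the $B$ and $U$ bounds coming from \eqref{eq:ratioLtwonormBound} and \eqref{eq:ratioinfnormBound}. The only cosmetic difference is that you normalize to the class $\calG=\{\sum_m f_m:\sum_m\Uns(f_m)\le1\}$ whereas the paper works directly with the ratio $\big|\|\sum_m f_m\|_n^2-\|\sum_m f_m\|_{\LPi}^2\big|/(\sum_m\Uns(f_m))^2$; these are equivalent by homogeneity, exactly as you note.
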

\begin{proof} [Proof of Lemma \ref{th:SquareBound}]
\begin{align}
&\EE\left[\sup_{f_m \in \calH_m} \frac{\left| \textstyle \left\|\sum_{m=1}^M f_m \right\|_n^2 - \left\|\sum_{m=1}^M f_m \right\|_{\LPi}^2 \right| }{
\left(\sum_{m=1}^M \Uns(f_m) \right)^2}  \right] \notag \\
\leq &  
2 \EE\left[ \sup_{f_m \in \calH_m} \frac{\textstyle  \left| \frac{1}{n} \sum_{i=1}^n \sigma_i ( \sum_{m=1}^M f_m(x_i))^2   \right| }{
\left(\sum_{m=1}^M \Uns(f_m) \right)^2}  \right] 
\notag \\
\leq &
\sup_{f_m \in \calH_m} 
\frac{\textstyle  \left\| \sum_{m=1}^M f_m \right\|_{\infty} }{
\sum_{m=1}^M \Uns(f_m) } \times
2 \EE\left[ 
\sup_{f_m \in \calH_m} 
\frac{\textstyle  \left| \frac{1}{n} \sum_{i=1}^n \sigma_i ( \sum_{m=1}^M f_m(x_i))   \right| }{
\sum_{m=1}^M  \Uns(f_m) }  \right],
\label{eq:squareUpperBoundtmp}
\end{align} 
where we used the contraction inequality in the last line \cite[Theorem 4.12]{Book:Ledoux+Talagrand:1991}.
Thus using \Eqref{eq:ratioinfnormBound}, the RHS of the inequality \eqref{eq:squareUpperBoundtmp} can be bounded as 
\begin{align*}
&2C_1 \sqrt{n} \EE\left[ \sup_{f_m \in \calH_m} 
\frac{\textstyle  \left| \frac{1}{n} \sum_{i=1}^n \sigma_i ( \sum_{m=1}^M f_m(x_i))   \right| }{
\sum_{m=1}^M \Uns(f_m) }  \right] \\
\leq & 
2 C_1 \sqrt{n} \EE\left[ \sup_{f_m \in \calH_m} \max_m 
\frac{\textstyle  \left| \frac{1}{n} \sum_{i=1}^n \sigma_i f_m(x_i)   \right| }{
\Uns(f_m)  }  \right],
%\\ & C_s   \left(\frac{\lambdatmp^{-\frac{1+s}{2}}}{\sqrt{n}} \vee \frac{1}{\lambda n^{\frac{1}{1+s}}}\right).
\end{align*}
where we used the relation 
\begin{equation}
\frac{\sum_{m} a_m}{\sum_m b_m} \leq \max_m\left(\frac{a_m}{b_m}\right)
\label{eq:maxsuprelation}
\end{equation}
for all $a_m \geq 0$ and $b_m \geq 0$ with a convention $\frac{0}{0}=0$.
By Lemma \ref{lemm:basicuniformlemm}, 
the right hand side is upper bounded by $2 C_1 \sqrt{n} \tilde{C}_*$.
Here we again apply Talagrand's concentration inequality, then we have
\begin{align}
&P\left(  \sup_{f_m \in \calH_m} \frac{ \left| \textstyle \left\|\sum_{m=1}^M f_m \right\|_n^2 - \left\|\sum_{m=1}^M f_m \right\|_{\LPi}^2 \right| }
{\left(\sum_{m=1}^M \Uns(f_m) \right)^2} 
%&~~~~~~~
\geq K\left[2C_1 \tilde{C}_* \sqrt{n}  + \sqrt{tn} C_1  + C_1^2 t   \right] \right) \leq e^{-t},
\notag
\end{align}
where we substituted the following upper bounds of $B$ and $U$.
\begin{align*}
B\leq &\sup_{f_m \in \calH_m} \EE \left[ \left( \frac{(\sum_{m=1}^M f_m)^2 }
{\left(\sum_{m=1}^M \Uns(f_m)\right)^2}\right)^2 \right] \\
\leq & 
\sup_{f_m \in \calH_m} \EE \left[ \frac{(\sum_{m=1}^M f_m)^2 } 
{\left( \sum_{m=1}^M  \Uns(f_m) \right)^2} 
\frac{(\|\sum_{m=1}^M f_m\|_{\infty})^2 } 
{ \left( \sum_{m=1}^M \Uns(f_m)\right)^2}  \right] \\
\mathop{\leq}^{\text{\eqref{eq:ratioinfnormBound}}} & 
\sup_{f_m \in \calH_m}  \frac{\left(\sum_{m=1}^M \|f_m\|_{\LPi}\right)^2 } 
{\left( \sum_{m=1}^M  \Uns(f_m) \right)^2} 
\frac{(\sum_{m=1}^M C_1 \sqrt{n} \Uns(f_m))^2 } 
{ \left( \sum_{m=1}^M \Uns(f_m)\right)^2}  \\
\mathop{\leq}^{\text{\eqref{eq:ratioLtwonormBound}}} & C_1^2 n^2 \frac{1}{\log(M)} \leq C_1^2 n^2,
\end{align*}
where in the second inequality we used the relation 
$$
\textstyle 
\EE\left[\left(\sum_{m=1}^M f_m\right)^2\right] = \EE\left[ \sum_{m,m'=1}^M f_m f_{m'}\right] 
\leq \sum_{m,m'=1}^M \|f_m\|_{\LPi} \|f_{m'}\|_{\LPi} = (\sum_{m=1}^M \|f_m\|_{\LPi})^2
$$
and in the third and forth inequality we used  
\Eqref{eq:ratioinfnormBound} and \Eqref{eq:ratioLtwonormBound} with Eq.\eqref{eq:maxsuprelation}
respectively.
Here we again use \Eqref{eq:ratioLtwonormBound} with Eq.\eqref{eq:maxsuprelation} to obtain 
\begin{align*}
U = &\sup_{f_m \in \calH_m} \left\| \frac{(\sum_{m=1}^M f_m)^2 }
{\left(\sum_{m=1}^M \Uns(f_m) \right)^2}\right\|_{\infty} 
\leq 
C_1^2 n.
\end{align*}
Therefore %for $t \leftarrow \sqrt{n}t$, %^{\frac{1+s}{2}}$, 
the above inequality implies 
the following inequality 
\begin{align}
&\sup_{f_m \in \calH_m} \frac{ \left| \textstyle \left\|\sum_{m=1}^M f_m \right\|_n^2 - \left\|\sum_{m=1}^M f_m \right\|_{\LPi}^2 \right| }
{\left(\sum_{m=1}^M \Uns(f_m) \right)^2}  %\notag \\
\leq 
K\left[2C_1 \tilde{C}_s  + C_1 + C_1^2  \right]\sqrt{n}\max(1,\sqrt{t},t/\sqrt{n}),\notag
%K\left[L (2 C_s  + 2K) + 2 L\right]
\end{align} 
with probability $1 - \exp( - t)$.
Remind $\bar{\phi}' = K\left[2C_1 \tilde{C}_*   +  C_1 + C_1^2  \right]$, then we obtain the assertion. 
\end{proof}

\section{Proof of Theorem \ref{th:LowerBounds} (minimax learning rate)}
\label{sec:proofOfMinimax}

Let {\it the $\delta$-packing number} $Q(\delta,\calH,\LPi)$ of a function class $\calH$ be the largest number of functions $\{f_1, \dots, f_Q \} \subseteq \calH$
such that $\|f_i - f_j\|_{\LPi} \geq \delta$ for all $i\neq j$.

\begin{proof} [Proof of Theorem \ref{th:LowerBounds}]
The proof utilizes the techniques developed by \cite{NIPS:Raskutti+Martin:2009,arXiv:Raskutti+Martin:2010} that applied the information theoretic technique developed by 
\cite{AS:Yang+Barron:99} to the MKL settings.
To simplify the notation, we write $\calF := \calHpsi(R)$, $N(\varepsilon,\calH) := N(\varepsilon,\calH,\LPi)$ and $Q(\varepsilon,\calH) := Q(\varepsilon,\calH,\LPi)$.
It can be easily shown that $Q(2\varepsilon,\calF) \leq N(2\varepsilon,\calF) \leq Q(\varepsilon,\calF)$.
Here due to Theorem 15 of \cite{COLT:Steinwart+etal:2009},   
Assumption \ref{eq:specass2} yields
\begin{align}
\label{eq:coveringUpperLowerBounds}
\log N(\varepsilon, \tilde{\calH}(1)) \sim \varepsilon^{-2 \smm{}}.
\end{align}

We utilize the following inequality given by Lemma 3 of \cite{NIPS:Raskutti+Martin:2009}:
\[
\min_{\hat{f}} \max_{f^* \in \calHpsi(R_p)} \EE \|\hat{f} - f^* \|_{\LPi}^2 \geq 
\frac{\delta_n^2}{4}\left(1 - \frac{\log N(\varepsilon_n,\calF) + n \varepsilon_n^2/2\sigma^2 + \log 2}{\log Q(\delta_n,\calF)} \right).
\]

First we show the assertion for the $\ell_{\infty}$-norm ball:
$\calHpsi(R) = \calHl{\infty}(R)  := \left\{f = \sum_{m=1}^M f_m \bmid \max_{1\leq m \leq M} \|f_m\hnorm{m} \leq R \right\}.$ 
In this situation, there is a constant $C$ that depends only $s$ such that 
\begin{align*}
\log Q(\delta,\calF) \geq C M \log Q(\delta/\sqrt{M},\repH(R)),~~~
\log N(\varepsilon,\calF) \leq M \log N(\varepsilon/\sqrt{M},\repH(R)),
\end{align*}
(this is shown in Lemma 5 of \cite{arXiv:Raskutti+Martin:2010}, but we give the proof in Lemma \ref{lemm:QlogMbound} for completeness).
Using this expression, the minimax-learning rate is bounded as 
\[
\min_{\hat{f}} \max_{f^* \in \calHlp(R_p)} \EE \|\hat{f} - f^* \|_{\LPi}^2 \geq 
\frac{\delta_n^2}{4}\left(1 - \frac{ M \log N(\varepsilon_n/\sqrt{M},\repH(R)) + n \varepsilon_n^2/2\sigma^2 + \log 2}{
C M \log Q(\delta_n/\sqrt{M},\repH(R))} \right).
\]
Here we choose $\varepsilon_n$ and $\delta_n$ to satisfy the following relations: 
\begin{align}
&\frac{n}{2\sigma^2} \varepsilon_n^2 \leq M \log N\left(\varepsilon_n/\sqrt{M},\repH(R)\right), \label{eq:epssqMN} \\
&M \log N\left(\varepsilon_n/\sqrt{M},\repH(R)\right) \geq \log 2, \label{eq:2leqMlogN} \\
&4 \log N\left(\epsilon_n/\sqrt{M},\repH(R)\right) \leq C \log Q\left(\delta_n/\sqrt{M},\repH(R)\right). \label{eq:NleqM} 
\end{align}
With $\varepsilon_n$ and $\delta_n$ that satisfy the above relations \eqref{eq:epssqMN} and \eqref{eq:NleqM}, we have 
\begin{align}
\min_{\hat{f}} \max_{f^* \in \calHlp(R_p)} \EE \|\hat{f} - f^* \|_{\LPi}^2 \geq \frac{\delta_n^2}{16}.
\label{eq:basicMinimaxBound}
\end{align}
By \Eqref{eq:coveringUpperLowerBounds},
the relation \eqref{eq:epssqMN} can be rewritten as %$\varepsilon_n$ that satisfies 
$$
\frac{n}{2\sigma^2} \varepsilon_n^2 \leq C M \left(\frac{\varepsilon_n}{R \sqrt{M}} \right)^{-2s}.
$$
It is sufficient to impose
\begin{equation}
\varepsilon_n^2 \leq C n^{-\frac{1}{1+s}} M R^{\frac{2s}{1+s}},
\label{eq:varepsilonnMRbound}
\end{equation}
with a constant $C$.
Since we have assumed that $n > \frac{\bar{c}^2  M^2}{R^2 \|\boldone\|_{\psi^*}^2}$ $(=\frac{1}{R^2}~\text{for $\|\cdot\|_{\psi}=\|\cdot\|_{\ell_\infty}$})$, 
the conditions \eqref{eq:2leqMlogN} can be satisfied 
if the constant $C$ in \Eqref{eq:varepsilonnMRbound} is taken sufficiently small
so that we have 
\begin{equation}
\log 2 \leq \log N(\varepsilon_n/\sqrt{M},\repH(R)) \sim \left(\frac{\varepsilon_n}{R \sqrt{M}} \right)^{-2s}.
\label{eq:logNandlog2inequ}
\end{equation}
The relation \eqref{eq:NleqM} can be satisfied by taking $\delta_n = c\varepsilon_n$ with an appropriately chosen constant $c$.
Thus \Eqref{eq:basicMinimaxBound} gives 
\begin{align}
\min_{\hat{f}} \max_{f^* \in \calHlp(R_p)} \EE \|\hat{f} - f^* \|_{\LPi}^2 \geq C n^{-\frac{1}{1+s}} M R^{\frac{2s}{1+s}},
\label{eq:MinimaxBoundInfty}
\end{align}
with a constant $C$. This gives the assertion for $p=\infty$.

Finally we show the assertion for general isotropic $\psi$-norm $\|\cdot\|_{\psi}$.  %general Orlicz-norm $\|\cdot\|_{\psi}$. %$1\leq p < \infty$. 
To show that, we prove that $\calHl{\infty}(R\|\boldone\|_{\psi^*}/(\bar{c}M)) \subset \calHpsi(R)$. 
This is true if $\frac{R \|\boldone\|_{\psi^*}}{\bar{c}M} \boldone \in \calHpsi(R)$ because of the second condition of the definition \eqref{eq:defistropic} of isotropic property.
By the isotropic property, the $\psi$-norm of $\frac{R \|\boldone\|_{\psi^*}}{\bar{c}M} \boldone$ is bounded as
\begin{align*}
\left\|\frac{R \|\boldone\|_{\psi^*}}{\bar{c}M} \boldone \right\|_{\psi} & =  \frac{ R \|\boldone\|_{\psi^*}}{\bar{c}M} \left\| \boldone \right\|_{\psi} 
\mathop{\leq}^{\text{isotropic}} \frac{ R }{\bar{c}M} \bar{c} M = R.
\end{align*}
Thus we have $\frac{R \|\boldone\|_{\psi^*}}{\bar{c}M} \boldone \in \calHpsi(R)$ and thus $\calHl{\infty}(R\|\boldone\|_{\psi^*}/(\bar{c}M)) \subset \calHpsi(R)$.
%By the definition, Orlicz-norm of $\frac{R}{M\Phi^{-1}(1/M)} \boldone$ is bounded as 
%\begin{align*}
%  &\left\|\frac{R}{M\Phi^{-1}(1/M)} \boldone \right\|_{\psi} \\
%= &\sup_{b_m}\left\{\sum_{m=1}^M \left| \frac{R}{M\Phi^{-1}(1/M)}b_m \right| \mid \sum_{m=1}^M \Phi(|b_m|) \leq 1 \right\} \\
%= &\frac{R}{M\Phi^{-1}(1/M)} \sup_{b_m}\{\sum_{m=1}^M |b_m| \mid \frac{1}{M} \sum_{m=1}^M \Phi(|b_m|) \leq \frac{1}{M} \} \\
%\opleq{Jensen} & \frac{R}{M\Phi^{-1}(1/M)} \sup_{b_m}\left\{\sum_{m=1}^M |b_m| \mid \Phi\left(\frac{1}{M} \sum_{m=1}^M  |b_m|\right) \leq \frac{1}{M} \right\} \\
%= & \frac{R}{M\Phi^{-1}(1/M)} M \sup_{b_m}\left\{\frac{1}{M}\sum_{m=1}^M |b_m| \mid \Phi\left(\frac{1}{M} \sum_{m=1}^M  |b_m|\right) \leq \frac{1}{M} \right\} \\
%= & \frac{R}{M\Phi^{-1}(1/M)}  M \Phi^{-1} \left(1/M\right) = 1.
%%\leq & R/M\Phi^{-1}(1/M) \sup_{b_m}\{\sum_{m=1}^M |b_m| \mid \frac{1}{M} \sum_{m=1}^M \Phi(|b_m|) \leq \frac{1}{M} \} \\
%\end{align*}
%This indicates $\calHl{\infty}(R/M\Phi^{-1}(1/M)) \subset \calHpsi(R)$.
Therefore we have 
\begin{align*}
\min_{\hat{f}} \max_{f^* \in \calHpsi(R)} \EE \|\hat{f} - f^* \|_{\LPi}^2 & \geq \min_{\hat{f}} \max_{f^* \in \calHl{\infty}(R\|\boldone\|_{\psi^*}/(\bar{c}M))} \EE \|\hat{f} - f^* \|_{\LPi}^2  \\
& \geq C n^{-\frac{1}{1+s}} M \left(\frac{R\|\boldone\|_{\psi^*}}{\bar{c}M}\right)^{\frac{2s}{1+s}},~~~~~~~~(\because \text{\Eqref{eq:MinimaxBoundInfty}}). 
\end{align*}
%\begin{align*}
%\min_{\hat{f}} \max_{f^* \in \calHlp(R)} \EE \|\hat{f} - f^* \|_{\LPi}^2 & \geq \min_{\hat{f}} \max_{f^* \in \calHl{\infty}\left(R/M^{\frac{1}{p}}\right)} \EE \|\hat{f} - f^* \|_{\LPi}^2  \\
%& \geq C n^{-\frac{1}{1+s}} M \left(R/M^{\frac{1}{p}} \right)^{\frac{2s}{1+s}}~~~~~~~~(\because \text{\Eqref{eq:MinimaxBoundInfty}})  \\
%& \geq C n^{-\frac{1}{1+s}} M^{1-\frac{2s}{p(1+s)}} R^{\frac{2s}{1+s}}.
%\end{align*}
Note that due to the condition $n > \frac{\bar{c}^2 M^2}{R^2 \|\boldone\|_{\psi^*}^2}$,
\Eqref{eq:MinimaxBoundInfty} is still valid under the condition that 
$\frac{R\|\boldone\|_{\psi^*}}{\bar{c}M}$ is substituted into $R$ in \Eqref{eq:MinimaxBoundInfty}
(more precisely, \Eqref{eq:logNandlog2inequ} is valid).
%$R \leftarrow \frac{R\|\boldone\|_{\psi^*}}{\bar{c}M}$.
Resetting $C \leftarrow C \bar{c}^{-\frac{2s}{1+s}}$, we obtain the assertion.
\end{proof}

\begin{Lemma}
\label{lemm:QlogMbound}
There is a constant $C$ such that
$$
\log Q(\delta,\calHl{\infty}(R)) \geq C M \log Q(\delta/\sqrt{M},\repH(R)),
$$
for sufficiently small $\delta$.
\end{Lemma}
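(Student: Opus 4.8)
The plan is to build a large $\delta$-packing of $\calHl{\infty}(R)$ out of a single packing of the common space $\repH(R)$ by varying the chosen packing elements \emph{coordinate-wise}, and then to thin out the resulting family with a coding-theoretic argument so that its members stay well separated in $\LPi$. The starting point is the structure of the minimax setup: each $f_m \in \calH_m$ has the form $f_m(x) = \tilde f_m(x^{(m)})$ with $\tilde f_m \in \repH$ and $\|f_m\hnorm{m} = \|\tilde f_m\|_{\repH}$, and, because the coordinates $\{x^{(m)}\}_{m=1}^M$ are i.i.d.\ copies of $\tilde X$, the spaces $\calH_m$ are mutually orthogonal in $\LPi$. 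Hence for $f = \sum_{m=1}^M f_m$ one has $\|f\|_{\LPi}^2 = \sum_{m=1}^M \|f_m\|_{\LPi}^2$, with $\|f_m\|_{\LPi}$ equal to the $L_2$-norm of $\tilde f_m$ under the law of $\tilde X$.

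First I would fix a $(\sqrt{2}\,\delta/\sqrt{M})$-packing $\{\tilde g_1,\dots,\tilde g_P\}\subseteq\repH(R)$ of maximal cardinality $P = Q(\sqrt{2}\,\delta/\sqrt{M},\repH(R))$. To each tuple $\boldsymbol{i}=(i_1,\dots,i_M)\in\{1,\dots,P\}^M$ I attach the function $f^{\boldsymbol{i}}(x)=\sum_{m=1}^M \tilde g_{i_m}(x^{(m)})$, which lies in $\calHl{\infty}(R)$ since $\|f^{\boldsymbol{i}}_m\hnorm{m}=\|\tilde g_{i_m}\|_{\repH}\le R$ for every $m$. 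For $\boldsymbol{i}\neq\boldsymbol{j}$, orthogonality gives $\|f^{\boldsymbol{i}}-f^{\boldsymbol{j}}\|_{\LPi}^2 = \sum_{m:\, i_m\neq j_m}\|\tilde g_{i_m}-\tilde g_{j_m}\|_{L_2}^2 \ge d_H(\boldsymbol{i},\boldsymbol{j})\cdot\tfrac{2\delta^2}{M}$, where $d_H$ is the Hamming distance. Consequently, restricting to tuples that are pairwise at Hamming distance at least $M/2$ yields functions that are pairwise $\delta$-separated in $\LPi$.

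The heart of the matter is therefore to produce \emph{many} tuples with large pairwise Hamming distance; a naive product set fails because two tuples may differ in only one coordinate, giving separation $\delta/\sqrt{M}$ and destroying the factor $M$ in the exponent. Here I would invoke the Gilbert--Varshamov bound for $P$-ary codes of length $M$ with minimum distance $M/2$: there exists a code $\calA\subseteq\{1,\dots,P\}^M$ with $|\calA|\ge P^M/\big(2^M(P-1)^{M/2}\big)\ge 2^{-M}P^{M/2}$, hence $\log|\calA|\ge\tfrac{M}{2}\log P - M\log 2 \ge \tfrac{M}{4}\log P$ once $P\ge 16$. Since $P=Q(\sqrt{2}\,\delta/\sqrt{M},\repH(R))\to\infty$ as $\delta\to 0$, this holds for all sufficiently small $\delta$. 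The family $\{f^{\boldsymbol{i}}\}_{\boldsymbol{i}\in\calA}$ is then a $\delta$-packing of $\calHl{\infty}(R)$, giving $\log Q(\delta,\calHl{\infty}(R))\ge\tfrac{M}{4}\log Q(\sqrt{2}\,\delta/\sqrt{M},\repH(R))$.

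The final step is to trade $\sqrt{2}\,\delta/\sqrt{M}$ for $\delta/\sqrt{M}$ at the cost of a constant, which is where I would use the polynomial growth of the metric entropy: by \eqref{eq:coveringUpperLowerBounds} and the comparability of covering and packing numbers, $\log Q(\varepsilon,\repH(R))\sim(\varepsilon/R)^{-2s}$, so $\log Q(\sqrt{2}\,\delta/\sqrt{M},\repH(R))\ge c'\,\log Q(\delta/\sqrt{M},\repH(R))$ for a positive constant $c'$ (comparable to $2^{-s}$) and all small $\delta$. Combining the two displays yields the claim with $C=\tfrac{1}{4}c'$. The main obstacle is precisely the Gilbert--Varshamov selection, which supplies the $\Theta(M)$ factor in the exponent; the remainder is bookkeeping with the orthogonality of the $\calH_m$ and the known polynomial entropy of $\repH$.
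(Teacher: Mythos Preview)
Your proposal is correct and follows essentially the same route as the paper: build the product family from a $\sqrt{2}\,\delta/\sqrt{M}$-packing of $\repH(R)$, use orthogonality of the $\calH_m$ to convert $\LPi$-separation into a Hamming-distance condition, and then select a large subfamily at pairwise Hamming distance $\ge M/2$ via a Gilbert--Varshamov volume argument, finally absorbing the $\sqrt{2}$ factor using the polynomial entropy $\log Q(\varepsilon,\repH(R))\sim (\varepsilon/R)^{-2s}$. The only cosmetic difference is that you cite Gilbert--Varshamov directly, whereas the paper spells out the greedy covering count explicitly; the constants differ but the structure is identical.
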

\begin{proof}
The proof is analogous to that of Lemma 5 in \cite{arXiv:Raskutti+Martin:2010}. 
We describe the outline of the proof. 
%, but we show the proof for completeness. 
Let $N = Q(\sqrt{2} \delta/\sqrt{M},\repH(R))$ and $\{f_m^1,\dots,f_m^N\}$ be a $\sqrt{2}\delta/\sqrt{M}$-packing of $\calH_m(R)$.
Then we can construct a function class $\Upsilon$ as 
$$
\Upsilon = \left\{f^{\boldsymbol{j}} = \sum_{m=1}^M f_m^{j_m} \mid \boldsymbol{j} = (j_1,\dots,j_M) \in \{1,\dots,N\}^M \right\}.
$$

We denote by $[N] := \{1,\dots,N\}$.
For two functions $f^{\boldsymbol{j}},f^{\boldsymbol{j}'} \in \Upsilon$, 
we have by the construction
$$
\|f^{\boldsymbol{j}} - f^{\boldsymbol{j}'}\|_{\LPi}^2 = \sum_{m=1}^M \|f_m^{j_m} - f_m^{j'_m}\|_{\LPi}^2 \geq \frac{2\delta^2}{M} \sum_{m=1}^M \boldsymbol{1}[j_m \neq j'_m].
$$
Thus, it suffices to construct a sufficiently large subset $A \subset [N]^M$ such that all different pairs $\boldsymbol{j},\boldsymbol{j}'\in A$ 
have at least $M/2$ of Hamming distance $d_H(\boldsymbol{j},\boldsymbol{j}') := \sum_{m=1}^M \boldsymbol{1}[j_m \neq j'_m]$.

Now we define $d_H(A,\boldsymbol{j}) := \min_{\boldsymbol{j}' \in A} d_H(\boldsymbol{j}',\boldsymbol{j})$.
If $|A|$ satisfies 
\begin{align}
\label{eq:basicAbound}
\left|\left\{\boldsymbol{j} \in [N]^M ~\Big|~ d_H(A,\boldsymbol{j})\leq \frac{M}{2} \right\}\right|
< |[N]^M|=N^M,
\end{align}
then there exists a member $\boldsymbol{j}' \in [N]^M$ such that $\boldsymbol{j}'$ is more than $\frac{M}{2}$ away from $A$ with respect to $d_H$, i.e. $d_H(A,\boldsymbol{j}')>\frac{M}{2}$.
That is, we can add $\boldsymbol{j}'$ to $A$ as long as \Eqref{eq:basicAbound} holds.
Now since 
\begin{align}
\left|\left\{\boldsymbol{j} \in [N]^M ~\Big|~ d_H(A,\boldsymbol{j})\leq \frac{M}{2} \right\}\right|
\leq |A|  {M \choose M/2}N^{M/2}, 
\end{align}
\Eqref{eq:basicAbound} holds as long as $A$ satisfies
$$
|A| \leq \frac{1}{2} \frac{N^M}{{M \choose M/2}N^{M/2}} =: Q^*.
$$
The logarithm of $Q^*$ can be evaluated as follows 
\begin{align*}
\log Q^* & = \log\left(\frac{1}{2} \frac{N^M}{{M \choose M/2}N^{M/2}}\right)  = M\log N - \log 2 - \log{M \choose M/2} - \frac{M}{2} \log N \\
&\geq  \frac{M}{2} \log N - \log 2 - \log 2^M \geq \frac{M}{2} \log \frac{N}{16}.
\end{align*}
There exists a constant $C$ %that depends only $s$ 
such that $N = Q(\sqrt{2} \delta/\sqrt{M},\repH(R)) \geq C Q(\delta/\sqrt{M},\repH(R))$ because  $\log Q(\delta,\repH(R)) \sim \left(\frac{\delta}{R} \right)^{-2s}$.
Thus we obtain the assertion for sufficiently large $N$.
\end{proof}

\section{Proof of Technical Lemmas}
\label{sec:MiscLemmas}
\subsection{Proof of Lemma \ref{lem:smcmuniformbound}}
\label{sec:smcmuniformboundproof}

Remind that \Eqref{eq:simpleConveRatemin} gives 
\begin{align}
&\|\fhat - \fstar\|_{\LPi}^2 \notag \\
&= \mathcal{O}_p\Bigg(
\min_{\substack{ \{\cm\}_{m=1}^M: \\ \cm >0}}\Bigg\{
\alpha_1^2 + \beta_1^2 + 
\left[ \left(\frac{\alpha_2}{\alpha_1}\right)^2 + \left(\frac{\beta_2}{\beta_1}\right)^2\right]  \|\fstar \|_{\psi}^2
+
\frac{M\log(M)}{n}\Bigg\} \Bigg).
\label{eq:simpleConveRateminTmp2}
\end{align}
We derive an upper bound of the right hand side by adding a constraint $\cm = \cmm{}~(\forall m)$.
Since $\sm = \smm{}~(\forall m)$,
under the constraint  $\cm = \cmm{}~(\forall m)$ we have 
\begin{align*}
&\frac{\alpha_2}{\alpha_1} = \frac{3 \frac{\smm{} \cmm{}^{1-\smm{}}}{\sqrt{n}} \left\| \boldone \right\|_{\psi^*} }{3 \sqrt{M \frac{\cmm{}^{-2\smm{}}}{n}}}
= \frac{1}{\sqrt{M}} \smm{} \cmm{}\left\| \boldone \right\|_{\psi^*}, \\
&\
\frac{\beta_2}{\beta_1} = \frac{3 \frac{\smm{} \cmm{}^{\frac{(1-\smm{})^2}{1+\smm{}}}}{n^{\frac{1}{1+\smm{}}}} \|\boldone\|_{\psi^*} }{3 
\sqrt{M \frac{\cmm{}^{-\frac{2\smm{}(3-\smm{})}{1+\smm{}}}}{n^{\frac{2}{1+\smm{}}}}}}
=  \frac{1}{\sqrt{M}} \smm{} \cmm{}\left\| \boldone \right\|_{\psi^*},
\end{align*}
Thus $\frac{\alpha_2}{\alpha_1} = \frac{\beta_2}{\beta_1}$, and \Eqref{eq:simpleConveRateminTmp2} becomes
\begin{align}
&\|\fhat - \fstar\|_{\LPi}^2 
= \calO_p\Bigg(
\min_{\substack{ \cmm{} >0, \\ \cm= \cmm{}}}\Bigg\{
\alpha_1^2 + \beta_1^2 + 
2
 \frac{1}{M} \smm{}^2 \cmm{}^2 \left\| \boldone \right\|_{\psi^*}^2 \|\fstar \|_{\psi}^2
+
\frac{M\log(M)}{n}\Bigg\} \Bigg).
\label{eq:simpleConveRateminTmp3}
\end{align}
By the definition, we see that the first two terms are monotonically decreasing function with respect to $\cmm{}$ and 
the third term is monotonically increasing function.
The minimum of the right hand side is attained by balancing 
$\alpha_1^2 + \beta_1^2$ and $2 \frac{1}{M} \smm{}^2 \cmm{}^2 \left\| \boldone \right\|_{\psi^*}^2 \|\fstar \|_{\psi}^2$.
Since $\alpha_1^2 + \beta_1^2 \leq 2 \max\left(\alpha_1^2, \beta_1^2 \right)$,
\Eqref{eq:simpleConveRateminTmp3} indicates that 
\begin{align}
&\|\fhat - \fstar\|_{\LPi}^2 
\leq \calO_p\Bigg(
\min_{\substack{ \cmm{} >0, \\ \cm= \cmm{}}}\Bigg\{
2 \max\left(\alpha_1^2, \beta_1^2 \right) + 
2
 \frac{1}{M} \smm{}^2 \cmm{}^2 \left\| \boldone \right\|_{\psi^*}^2 \|\fstar \|_{\psi}^2
+
\frac{M\log(M)}{n}\Bigg\} \Bigg).
\label{eq:simpleConveRateminTmp4}
\end{align}
To balance the first term and the second term,
we need to consider two situations:
$\alpha_1^2 = \frac{1}{M} \smm{}^2 \cmm{}^2 \left\| \boldone \right\|_{\psi^*}^2 \|\fstar \|_{\psi}^2$
or
$\beta_1^2 = \frac{1}{M} \smm{}^2 \cmm{}^2 \left\| \boldone \right\|_{\psi^*}^2 \|\fstar \|_{\psi}^2$.

First we balance the terms $\alpha_1^2$ and $\frac{1}{M} \smm{}^2 \cmm{}^2 \left\| \boldone \right\|_{\psi^*}^2 \|\fstar \|_{\psi}^2$ under the restriction 
that $\cm = \cmm{}~(\forall m)$:
\begin{align}
 & \alpha_1^2 = \frac{1}{M} \smm{}^2 \cmm{}^2 \left\| \boldone \right\|_{\psi^*}^2 \|\fstar \|_{\psi}^2 \notag
\\
\Leftrightarrow ~~& 9 M \frac{\cmm{}^{-2\smm{}}}{n} = \frac{1}{M} \smm{}^2 \cmm{}^2 \left\| \boldone \right\|_{\psi^*}^2 \|\fstar \|_{\psi}^2 \notag
\\
%\Rightarrow ~~& \alpha_1^2 = \alpha_2 \|\fstar \|_{\psi} \\
%\Leftrightarrow ~~& \sum_{m=1}^M \frac{\cmm{}^{-2\smm{}}}{n} = \left\| \left(\frac{\smm{} \cmm{}^{1-\smm{}}}{\sqrt{n}}\right)_{m=1}^M \right\|_{\psi^*}  \|\fstar \|_{\psi} \\
%\Leftrightarrow ~~& M \frac{\cmm{}^{-2\smm{}}}{n} = \frac{\smm{} \cmm{}^{1-\smm{}}}{\sqrt{n}} \left\| \boldone \right\|_{\psi^*}  \|\fstar \|_{\psi} \\
\Leftrightarrow ~~& \cmm{}^{-1} = (\smm{}/3)^{\frac{1}{1+\smm{}}}M^{-\frac{1}{1+\smm{}}} n^{\frac{1}{2(1+\smm{})}} (\|\boldone\|_{\psi^*} \|f^*\|_{\psi})^{\frac{1}{1+\smm{}}}.
%\frac{\cmm{}^{-2\smm{}}}{\sqrt{n}} = \frac{\smm{} \cmm{}^{1-\smm{}}}{\sqrt{n}} \left\| \boldone \right\|_{\psi^*}  \|\fstar \|_{\psi} \\
\label{eq:rconcreteAlpha}
\end{align}
For this $\cmm{}$, we obtain
\begin{align}
&\alpha_1^2 %= \left(\frac{\alpha_2}{\alpha_1}\right)^2 \|\fstar \|_{\psi}^2  
=
9 M \frac{\cmm{}^{-2\smm{}}}{n} \notag \\
=
& 9^{\frac{1}{1+\smm{}}} \smm{}^{\frac{2\smm{}}{1+\smm{}}} M^{1-\frac{2\smm{}}{1+\smm{}}} n^{-\frac{1}{1+\smm{}}}(\|\boldone\|_{\psi^*} \|f^*\|_{\psi})^{\frac{2\smm{}}{1+\smm{}}}
\leq 
9 M^{1-\frac{2\smm{}}{1+\smm{}}} n^{-\frac{1}{1+\smm{}}}(\|\boldone\|_{\psi^*} \|f^*\|_{\psi})^{\frac{2\smm{}}{1+\smm{}}},
\label{eq:Alpha1bound}
\end{align}
where we used $\smm{}^{\frac{2\smm{}}{1+\smm{}}} \leq 1$ and $9^{\frac{1}{1+\smm{}}} \leq 9$ in the last inequality.

Next we balance the terms $\beta_1^2$ and $\frac{1}{M} \smm{}^2 \cmm{}^2 \left\| \boldone \right\|_{\psi^*}^2 \|\fstar \|_{\psi}^2$ under the restriction 
that $\cm = \cmm{}~(\forall m)$:
\begin{align*}
 & \beta_1^2 = \frac{1}{M} \smm{}^2 \cmm{}^2 \left\| \boldone \right\|_{\psi^*}^2 \|\fstar \|_{\psi}^2 \\
%\Rightarrow ~~& \beta_1^2 = \beta_2 \|\fstar \|_{\psi} \\
\Leftrightarrow ~~&9 M \frac{\cmm{}^{-\frac{2\smm{}(3-\smm{})}{1+\smm{}}}}{n^{\frac{2}{1+\smm{}}}} = 
\frac{1}{M} \smm{}^2 \cmm{}^2 \left\| \boldone \right\|_{\psi^*}^2 \|\fstar \|_{\psi}^2 \\
%\frac{\smm{} \cmm{}^{\frac{(1-\smm{})^2}{1+\smm{}}}}{n^{\frac{1}{1+\smm{}}}} \|\boldone\|_{\psi^*} \|f^*\|_{\psi} \\
\Leftrightarrow ~~& 
\cmm{}^{-1} = 
(\smm{}/3)^{\frac{1+\smm{}}{1+4\smm{} - \smm{}^2}}  M^{-\frac{1+\smm{}}{1+4\smm{} - \smm{}^2}} n^{\frac{1}{1+4\smm{} - \smm{}^2}} 
\left(\|\boldone\|_{\psi^*} \|f^*\|_{\psi}\right)^{\frac{1+\smm{}}{1+4\smm{} - \smm{}^2}}.
\end{align*}
For this $\cmm{}$, we obtain
\begin{align*}
&\beta_1^2 %= \left(\frac{\beta_2}{\beta_1}\right)^2 \|\fstar \|_{\psi}^2  
= 9 M \frac{\cmm{}^{-\frac{2\smm{}(3-\smm{})}{1+\smm{}}}}{n^{\frac{2}{1+\smm{}}}}  \\
=&
9^{\frac{1+\smm{}}{1+4\smm{}-\smm{}^2}} \smm{}^{\frac{2\smm{}(3-\smm{})}{1+4\smm{} - \smm{}^2}} M^{-\frac{1-2\smm{} + \smm{}^2}{1+4\smm{}-\smm{}^2}} n^{-\frac{2}{1+4\smm{}-\smm{}^2}}(\|\boldone\|_{\psi^*} \|f^*\|_{\psi})^{\frac{2\smm{}(3-\smm{})}{1+4\smm{}-\smm{}^2}} \\
\leq &
9 M^{\frac{1-2\smm{} + \smm{}^2}{1+4\smm{}-\smm{}^2}} n^{-\frac{2}{1+4\smm{}-\smm{}^2}}(\|\boldone\|_{\psi^*} \|f^*\|_{\psi})^{\frac{2\smm{}(3-\smm{})}{1+4\smm{}-\smm{}^2}},
\end{align*}
where we used $\smm{}^{\frac{2\smm{}(3-\smm{})}{1+4\smm{} - \smm{}^2}} \leq 1$ and $9^{\frac{1+\smm{}}{1+4\smm{}-\smm{}^2}}\leq 9$ in the last inequality.

Therefore the right hand side of \Eqref{eq:simpleConveRateminTmp4} is further bounded as
\begin{align*}
&\|\fhat - \fstar\|_{\LPi}^2 \\
\leq 
\calO_p\Bigg(
&
4 \max\Bigg\{ 9 M^{1-\frac{2\smm{}}{1+\smm{}}} n^{-\frac{1}{1+\smm{}}}(\|\boldone\|_{\psi^*} \|f^*\|_{\psi})^{\frac{2\smm{}}{1+\smm{}}}, \\
&9 M^{\frac{1-2\smm{} + \smm{}^2}{1+4\smm{}-\smm{}^2}} n^{-\frac{2}{1+4\smm{}-\smm{}^2}}(\|\boldone\|_{\psi^*} \|f^*\|_{\psi})^{\frac{2\smm{}(3-\smm{})}{1+4\smm{}-\smm{}^2}} \Bigg\} + 
\frac{M\log(M)}{n} \Bigg) \\
= 
\calO_p\Bigg( &
 M^{1-\frac{2\smm{}}{1+\smm{}}} n^{-\frac{1}{1+\smm{}}}(\|\boldone\|_{\psi^*} \|f^*\|_{\psi})^{\frac{2\smm{}}{1+\smm{}}} + \\
&M^{\frac{ (1-\smm{})^2}{1+4\smm{}-\smm{}^2}} n^{-\frac{2}{1+4\smm{}-\smm{}^2}}(\|\boldone\|_{\psi^*} \|f^*\|_{\psi})^{\frac{2\smm{}(3-\smm{})}{1+4\smm{}-\smm{}^2}}  + 
\frac{M\log(M)}{n} \Bigg).
\end{align*}
%This gives the first assertion.
%
%As for the second assertion, 
Finally, if $n \geq  (\|\boldone\|_{\psi^*} \|\fstar\|_{\psi}/M)^{\frac{4s}{1-s}}$,
the first term of the right hand side of this bound is not less than the second term:
$$
 M^{1-\frac{2\smm{}}{1+\smm{}}} n^{-\frac{1}{1+\smm{}}}(\|\boldone\|_{\psi^*} \|f^*\|_{\psi})^{\frac{2\smm{}}{1+\smm{}}} \geq 
 M^{\frac{ (1-\smm{})^2}{1+4\smm{}-\smm{}^2}} n^{-\frac{2}{1+4\smm{}-\smm{}^2}}(\|\boldone\|_{\psi^*} \|f^*\|_{\psi})^{\frac{2\smm{}(3-\smm{})}{1+4\smm{}-\smm{}^2}}.
$$
%Thus we obtain the assertion.

More precisely, with $\cmm{}$ given in \Eqref{eq:rconcreteAlpha}, 
the upper bound \eqref{eq:Alpha1bound} of $\alpha_1$
gives that, for $n \geq  (\|\boldone\|_{\psi^*} \|\fstar\|_{\psi}/M)^{\frac{4s}{1-s}}$, we have
\begin{align*}
\sqrt{n}\max\left\{\alpha_1^2, \beta_1^2,\frac{M\log(M)}{n}\right\}
&
\leq \sqrt{n} 9  M^{1-\frac{2\smm{}}{1+\smm{}}} n^{-\frac{1}{1+\smm{}}}(\|\boldone\|_{\psi^*} \|f^*\|_{\psi})^{\frac{2\smm{}}{1+\smm{}}} \vee \frac{M\log(M)}{\sqrt{n}} \\
&
= 
9  \left(\frac{M}{\sqrt{n}}\right)^{\frac{1-\smm{}}{1+\smm{}}}
(\|\boldone\|_{\psi^*} \|f^*\|_{\psi})^{\frac{2\smm{}}{1+\smm{}}} \vee \frac{M\log(M)}{\sqrt{n}}.
\end{align*}
Thus by setting 
$\lambdaone 
= 18 M^{\frac{1-\smm{}}{1+\smm{}}} n^{-\frac{1}{1+\smm{}}}\|\boldone\|_{\psi^*}^{\frac{2\smm{}}{1+\smm{}}} \|f^*\|_{\psi}^{-\frac{2}{1+\smm{}}}
\geq \left(\frac{\alpha_2}{\alpha_1}\right)^2 + \left(\frac{\beta_2}{\beta_1}\right)^2$,
then 
Theorem \ref{th:convergencerateofLpMKL} gives that
for all $n$ and $t'$ that satisfy  $\frac{\log(M)}{\sqrt{n}}\leq 1$ and 
$\frac{4\phi}{\kminrho} 
\left\{
9  \left(\frac{M}{\sqrt{n}}\right)^{\frac{1-\smm{}}{1+\smm{}}}
(\|\boldone\|_{\psi^*} \|f^*\|_{\psi})^{\frac{2\smm{}}{1+\smm{}}} \vee \frac{M\log(M)}{\sqrt{n}}
 \right\}
\eta(t') \leq \frac{1}{12}$
and for all $t \geq 1$,
we have 
\begin{align}
\|\fhat - \fstar\|_{\LPi}^2 
\leq 
&
\frac{24 \eta(t)^2 \phi^2}{\kminrho} \left(
18 M^{1-\frac{2\smm{}}{1+\smm{}}} n^{-\frac{1}{1+\smm{}}}(\|\boldone\|_{\psi^*} \|f^*\|_{\psi})^{\frac{2\smm{}}{1+\smm{}}}
 + \frac{M\log(M)}{n} \right) \notag \\
&+
4 \times 18 M^{1-\frac{2\smm{}}{1+\smm{}}} n^{-\frac{1}{1+\smm{}}}(\|\boldone\|_{\psi^*} \|f^*\|_{\psi})^{\frac{2\smm{}}{1+\smm{}}} \\
\leq
&
C \eta(t)^2  \left(
M^{1-\frac{2\smm{}}{1+\smm{}}} n^{-\frac{1}{1+\smm{}}}(\|\boldone\|_{\psi^*} \|f^*\|_{\psi})^{\frac{2\smm{}}{1+\smm{}}}
 + \frac{M\log(M)}{n} \right), \notag
%\label{eq:ConvRateInMainTh}
\end{align}
%\begin{align}
%\|\fhat - \fstar \|_{\LPi}^2 \leq C_1 M \max \left( n^{-\frac{1+q}{1+q+s}}, \frac{\log(M+1)}{n} \right) \eta(t), 
%\end{align}
with probability $1- \exp(- t) - \exp(-t')$ where $C$ is a sufficiently large constant depending on $\phi$ and $\kminrho$.
Finally notice that 
the condition 
$\frac{4\phi}{\kminrho} 
\left\{
9  \left(\frac{M}{\sqrt{n}}\right)^{\frac{1-\smm{}}{1+\smm{}}}
(\|\boldone\|_{\psi^*} \|f^*\|_{\psi})^{\frac{2\smm{}}{1+\smm{}}} \vee \frac{M\log(M)}{\sqrt{n}}
 \right\}
\eta(t') \leq \frac{1}{12}$
automatically gives $\frac{\log(M)}{\sqrt{n}}\leq 1$, thus we can drop the condition $\frac{\log(M)}{\sqrt{n}}\leq 1$.
Then we obtain the assertion.

\subsection{Proof of Lemma \ref{lemm:dualmixednorm}}
\label{sec:dualofnorms}
We assume $1<p<\infty$ and $1<q < \infty$. 
The proof for the situations $p=1,\infty$ or $q=1,\infty$ is straight forward.
First applying H{\"o}lder's inequality twice, we obtain  
\begin{align*}
\langle \boldb, \bolda \rangle &= 
\sum_{j=1}^{M'} \sum_{k=1}^{M_j} b_{j,k} a_{j,k} \\
&\leq 
\sum_{j=1}^{M'} \left\{ \left(\sum_{k=1}^{M_j} |b_{j,k}|^{p^*}\right)^{\frac{1}{p^*}} \left(\sum_{k=1}^{M_j} |a_{j,k}|^{p}\right)^{\frac{1}{p}}\right\} ~~~
(\because \text{H{\"o}lder's inequality}) \\
&\leq
\left\{\sum_{j=1}^{M'}\left(\sum_{k=1}^{M_j} |b_{j,k}|^{p^*}\right)^{\frac{q^*}{p^*}}\right\}^{\frac{1}{q^*}} 
\left\{\sum_{j=1}^{M'}\left(\sum_{k=1}^{M_j} |a_{j,k}|^{p}\right)^{\frac{q}{p}}\right\}^{\frac{1}{q}}~~~
(\because \text{H{\"o}lder's inequality}).
\end{align*}
Therefore we obtain that 
\begin{align}
\|\boldb\|_{\psi^*} \leq \left\{ \sum_{j=1}^{M'} (\sum_{k=1}^{M_j} |b_{j,k}|^{p^*})^{\frac{q^*}{p^*}} \right\}^{\frac{1}{q^*}}.
\label{eq:abinnerupper}
\end{align}
On the other hand, if we set
$$
a_{j,k} = b_{j,k}^{\frac{1}{p-1}} 
\frac{(\sum_{k=1}^{M_j} b_{j,k}^{p^*})^{\frac{q^*}{p^*}-1}}
{\{\sum_{j'=1}^{M'}(\sum_{k=1}^{M_{j'}} b_{j',k}^{p^*})^{\frac{q^*}{p^*}}\}^{\frac{1}{q}}},
$$
then we have 
\begin{align*}
\|\bolda\|_{\psi} 
& 
= 
\left\{ 
\sum_{j=1}^{M'}
\left(\sum_{k=1}^{M_j} b_{j,k}^{\frac{p}{p-1}}\right)^{\frac{q}{p}} 
\left(\sum_{k=1}^{M_j} b_{j,k}^{p^*}\right)^{q (\frac{q^*}{p^*}-1)}
\right\}^{\frac{1}{q}}
\frac{1}
{\{\sum_{j'=1}^{M'}(\sum_{k=1}^{M_{j'}} b_{j',k}^{p^*})^{\frac{q^*}{p^*}}\}^{\frac{1}{q}}} \\
& 
= 
\left\{
\sum_{j=1}^{M'}
\left(\sum_{k=1}^{M_j} b_{j,k}^{\frac{p}{p-1}}\right)^{q\left(\frac{1}{p} - 1 + \frac{q^*}{p^*}\right)} 
\right\}^{\frac{1}{q}}
\frac{1}
{\{\sum_{j'=1}^{M'}(\sum_{k=1}^{M_{j'}} b_{j',k}^{p^*})^{\frac{q^*}{p^*}}\}^{\frac{1}{q}}} \\
&
= 
\left\{
\sum_{j=1}^{M'}
\left(\sum_{k=1}^{M_j} b_{j,k}^{\frac{p}{p-1}}\right)^{\frac{q^*}{q^*-1}\left(\frac{q^*-1}{p^*}\right)} 
\right\}^{\frac{1}{q}}
\frac{1}
{\{\sum_{j'=1}^{M'}(\sum_{k=1}^{M_{j'}} b_{j',k}^{p^*})^{\frac{q^*}{p^*}}\}^{\frac{1}{q}}} = 1,
\end{align*}
and 
\begin{align*}
\langle \bolda,\boldb \rangle
& 
= 
\sum_{j=1}^{M'}
\left\{
\left(
\sum_{k=1}^{M_j}
b_{j,k}^{1 + \frac{1}{p-1}} 
\right)
\left(\sum_{k=1}^{M_j} b_{j,k}^{p^*}\right)^{\frac{q^*}{p^*}-1}
\right\}
\frac{1}
{\{\sum_{j'=1}^{M'}(\sum_{k=1}^{M_{j'}} b_{j',k}^{p^*})^{\frac{q^*}{p^*}}\}^{\frac{1}{q}}} \\
& 
= 
\sum_{j=1}^{M'}
%\left\{
\left(\sum_{k=1}^{M_j} b_{j,k}^{p^*}\right)^{\frac{q^*}{p^*}}
%\right\}
\frac{1}
{\{\sum_{j'=1}^{M'}(\sum_{k=1}^{M_{j'}} b_{j',k}^{p^*})^{\frac{q^*}{p^*}}\}^{\frac{1}{q}}} \\
& 
= 
\left\{\sum_{j'=1}^{M'}\left(\sum_{k=1}^{M_{j'}} b_{j',k}^{p^*}\right)^{\frac{q^*}{p^*}}\right\}^{\frac{1}{q^*}}.
\end{align*}
Therefore we obtain 
\begin{align}
\| \boldb \|_{\psi^*} \geq 
\left\{\sum_{j'=1}^{M'}\left(\sum_{k=1}^{M_{j'}} b_{j',k}^{p^*}\right)^{\frac{q^*}{p^*}}\right\}^{\frac{1}{q^*}}.
\label{eq:abinnerlower}
\end{align}
Combining Eqs.\eqref{eq:abinnerlower},\eqref{eq:abinnerlower}, we have 
%\begin{align}
$
\| \boldb \|_{\psi^*} 
=
\left\{\sum_{j'=1}^{M'}\left(\sum_{k=1}^{M_{j'}} b_{j',k}^{p^*}\right)^{\frac{q^*}{p^*}}\right\}^{\frac{1}{q^*}}.
$
%\end{align}
Thus we obtain the assertion.

\subsection{Proof of Lemma \ref{lem:inhomogeneousConvRate}}
Remind that 
$$
%\textstyle
\alpha_1 = 3 \left(\frac{r_1^{-2s} + M-1}{n}\right)^{\frac{1}{2}},
\alpha_2 = 3 \frac{s r_1^{1-s}}{\sqrt{n}},
\beta_1 = 3 \bigg(\frac{r_1^{-\frac{2s(3-s)}{1+s}} + M-1}{n^{\frac{2}{1+s}}}\bigg)^{\frac{1}{2}},
\beta_2 = 3 \frac{s r_1^{\frac{(1-s)^2}{1+s}}}{n^{\frac{1}{1+s}}}.
$$
Thus we have 
\begin{align*}
\left(\frac{\alpha_2}{\alpha_1}\right)^2 = \frac{\frac{s^2 r_1^{2(1-s)}}{n}}{
\frac{r_1^{-2s} + M-1}{n}} 
\simeq \min \left\{s^2 r_1^2, \frac{s^2 r_1^{2(1-s)}}{M-1}\right\},
\end{align*}
and
\begin{align*}
\left(\frac{\beta_2}{\beta_1}\right)^2 = 
\frac{\frac{s^2 r_1^{\frac{2(1-s)^2}{1+s}}}{n^{\frac{2}{1+s}}}}
{\frac{r_1^{-\frac{2s(3-s)}{1+s}} + M-1}{n^{\frac{2}{1+s}}}} 
\simeq \min \left\{ s^2 r_1^2, \frac{s^2 r_1^{\frac{2(1-s)^2}{1+s}}}{M-1} \right\}.
\end{align*}

Suppose $r_1^{-2s} \geq M-1$ and $r_1^{-\frac{2s(3-s)}{1+s}} \geq M-1$, then
we have $\alpha_1^2 \simeq  r_1^{-2s} n^{-1}$,
$\beta_1^2 = r_1^{-\frac{2s(3-s)}{1+s}} n^{-\frac{2}{1+s}}$
$\left(\frac{\alpha_2}{\alpha_1}\right)^2 \simeq s^2 r_1^2$ and $\left(\frac{\beta_2}{\beta_1}\right)^2 \simeq s^2 r_1^2$.
Thus the minimization problem in \Eqref{eq:simpleConveRatemin} with the constraint for $r_1$
becomes 
\begin{align}
&
\min_{\substack{ \cmm{1} > 0: \\ r_1^{-2s} \geq M-1,~r_1^{-\frac{2s(3-s)}{1+s}} \geq M-1 }} \Bigg\{
\alpha_1^2 + \beta_1^2 + 
\left[ \! \left(\frac{\alpha_2}{\alpha_1}\right)^2 \! + \left(\frac{\beta_2}{\beta_1}\right)^2\right]  \|\fstar \|_{\psi}^2 
%\mid r_1^{-2s} \geq M-1$ and $r_1^{-\frac{2s(3-s)}{1+s}} \geq M-1
 \Bigg\} \notag \\
\simeq
&
\min_{\substack{ \cmm{1} > 0: \\ r_1^{-2s} \geq M-1,~r_1^{-\frac{2s(3-s)}{1+s}} \geq M-1 }}
\Bigg\{ r_1^{-2s} n^{-1} + r_1^{-\frac{2s(3-s)}{1+s}} n^{-\frac{2}{1+s}} + r_1^2 \|\fstar \|_{\psi}^2 \Bigg\}.
\label{eq:minInhomr1}
\end{align}
If we neglect the constraints 
$r_1^{-2s} \geq M-1$ and $r_1^{-\frac{2s(3-s)}{1+s}} \geq M-1$,
the minimum is attained at $r_1$ (up to a constant factor) that satisfies 
$
\max\{r_1^{-2s} n^{-1},r_1^{-\frac{2s(3-s)}{1+s}} n^{-\frac{2}{1+s}}\} = r_1^2 \|\fstar \|_{\psi}^2,
$
%or
%$
%r_1^{-\frac{2s(3-s)}{1+s}} n^{-\frac{2}{1+s}}= r_1^2 \|\fstar \|_{\psi}^2,
%$
%%(here we omit the constant factor).
i.e.
$$
r_1 = \max\left\{n^{-\frac{1}{2(1+s)}} \|\fstar \|_{\psi}^{-\frac{1}{1+s}}, %~~\text{or}~~
%r_1 = 
n^{-\frac{1}{1+4s-s^2}} \|\fstar \|_{\psi}^{-\frac{1+s}{1+4s-s^2}}\right\}.
$$
Therefore if $n \geq \|\fstar\|_{\psi}^{\frac{4s}{1-s}}$
(this is satisfied because $\|\fstar \|_{\ell_1} = M$, $\|\fstar \|_{\ell_{\infty}} = 1$ and $n \geq M^{\frac{4s}{1-s}}$ is imposed), 
then the minimum is attained at 
$r_1 = n^{-\frac{1}{2(1+s)}} \|\fstar \|_{\psi}^{-\frac{1}{1+s}}.$
%Now since $\|\fstar \|_{\ell_1} = M$ and $\|\fstar \|_{\ell_{\infty}} = 1$,
%the condition $n \geq M^{\frac{4s}{1-s}}$ indicates 
%$n \geq \|\fstar\|_{\psi}^{\frac{4s}{1-s}}$ for both $\ell_1$ and $\ell_\infty$ regularizations.
Finally the condition $n \geq (M\log(M))^{\frac{1+s}{s}}$ yields that 
$r_1^{-2s} \geq M-1$ and $r_1^{-\frac{2s(3-s)}{1+s}} \geq M-1$
for $r_1 = n^{-\frac{1}{2(1+s)}} \|\fstar \|_{\psi}^{-\frac{1}{1+s}}$.
Therefore the constraints for $r_1$ in \Eqref{eq:minInhomr1} can be removed.
Summarizing the above discussions, we obtain  
\begin{align*}
%&
\min_{\substack{ \{\cm\}_{m=1}^M: \\ \cm > 0 }}\! \Bigg\{
\alpha_1^2 + \beta_1^2 + 
\left[ \! \left(\frac{\alpha_2}{\alpha_1}\right)^2 \! + \left(\frac{\beta_2}{\beta_1}\right)^2\right]  \|\fstar \|_{\psi}^2 
%\mid r_1^{-2s} \geq M-1$ and $r_1^{-\frac{2s(3-s)}{1+s}} \geq M-1
 \Bigg\} 
%\\
\simeq
%&
n^{-\frac{1}{1+s}} \|\fstar \|_{\psi}^{\frac{2s}{1+s}}.
\end{align*}

Thus we obtain the following convergence rates:
\begin{align*}
&
\|\fhat^{(1)} - \fstar\|_{\LPi}^2
=\mathcal{O}_p\left( 
%\max\left\{
n^{-\frac{1}{1+s}} M^{\frac{2\smm{}}{1+\smm{}}}
%,n^{-\frac{2}{1+4\smm{} - \smm{}^2}} M^{\frac{2\smm{}(3-\smm{})}{(1+\smm{})(1+4\smm{} - \smm{}^2)}} 
%\right\} 
+ \frac{M\log(M)}{n}
\right), \\
&
\|\fhat^{(\infty)} - \fstar\|_{\LPi}^2
=\mathcal{O}_p\left(
n^{-\frac{1}{1+s}} 
%\max\left\{n^{-\frac{1}{1+s}}, 
%n^{-\frac{2}{1+4\smm{} - \smm{}^2}} 
%\right\}
+ \frac{M\log(M)}{n}
 \right).
\end{align*}
Now since $n \geq (M\log(M))^{\frac{1+s}{s}}$, the above convergence rates can be simplified as 
\begin{align*}
&
\|\fhat^{(1)} - \fstar\|_{\LPi}^2
=\mathcal{O}_p\left( 
n^{-\frac{1}{1+s}} M^{\frac{2\smm{}}{1+\smm{}}}
\right),~~ 
%\\ &
\|\fhat^{(\infty)} - \fstar\|_{\LPi}^2
=\mathcal{O}_p\left(
n^{-\frac{1}{1+s}} 
 \right).
\end{align*}

\subsection{Proof of Lemma \ref{lemm:localRademacherComplexity} 
(Derivation of Local Rademacher Complexity)}
\label{sec:DerivationLocalRademacher}

For $f\in \calHtot$, we define 
$$
\Unall(f) 
:=
\alpha_1 \frac{\|f \|_{\LPi}}{\sqrt{\kminrho}} +  
\alpha_2 \|f \|_{\psi} 
+ \beta_1 \frac{\|f \|_{\LPi}}{\sqrt{\kminrho}} +  
\beta_2 \|f \|_{\psi} 
+\sqrt{\frac{M\log(M)}{n}} \frac{\|f \|_{\LPi}}{\sqrt{\kminrho}}.
$$
Then 
by \Eqref{eq:Unbounds} we obtain 
\begin{align*}
\sum_{m=1}^M \Uns(f_m) 
\leq 
\Unall(f).
\end{align*}

We know that there exists a constant $\tilde{\phi}$ such that 
\begin{align}
&P\left(
\max_m \sup_{f_m \in \calH_m} 
\frac{|\frac{1}{n}\sum_{i=1}^n \sigma_i f_m(x_i)|}{\Uns(f_m)}
\geq 
\tilde{\phi}
\eta(t) \right) \leq
 e^{-t},
\label{eq:PStbound}
\end{align}
(see Lemma \ref{lemm:uniformRatioBoundOnM}).
Let $
\bar{\eta}(t) := \max\{\sqrt{t},t/n\}, %\begin{cases} t & (t\leq 1) \\ \eta(t) & (\text{otherwise}) \end{cases},
$
and the event $\calS_t$ be 
$$
\calS_t := 
\left\{
\tilde{\phi}
\bar{\eta}(t)
\leq 
\max_m \sup_{f_m \in \calH_m} 
\frac{|\frac{1}{n}\sum_{i=1}^n \sigma_i f_m(x_i)|}{\Uns(f_m)}
\leq 
\tilde{\phi}
\bar{\eta}(t+1)\right\}.
$$
Then, by \Eqref{eq:PStbound},
we have $P(\calS_t) \leq e^{-t}$ for $t\geq 1$.
Using this relation, we obtain the following upper bound of the local Rademacher complexity:
\begin{align*}
  &R_n(\calH_{\psi}^{(r)}(R))  \\
= &\EE\left[ \sup_{f \in \calH_{\psi}^{(r)}(R)} \frac{1}{n} \sum_{i=1}^n \sigma_i f(x_i) \right] \\
= &\sum_{t=0}^{\infty} \EE\left[ \sup_{f \in \calH_{\psi}^{(r)}(R)} \frac{1}{n} \sum_{i=1}^n \sigma_i f(x_i) \mid \calS_t \right] P(\calS_t) \\
\leq &
\EE\left[ \sup_{f \in \calH_{\psi}^{(r)}(R)} \frac{1}{n} \sum_{i=1}^n \sigma_i f(x_i) \mid \calS_0 \right]
+ \sum_{t=1}^{\infty} \EE\left[ \sup_{f \in \calH_{\psi}^{(r)}(R)} \frac{1}{n} \sum_{i=1}^n \sigma_i f(x_i) \mid \calS_t \right] P(\calS_t) \\
\leq & 
\EE\left[ \sup_{f \in \calH_{\psi}^{(r)}(R)}  \sum_{m=1}^M \tilde{\phi} \Uns(f_m) \mid S_0 \right] 
+ \sum_{t=1}^{\infty} \EE\left[ \sup_{f \in \calH_{\psi}^{(r)}(R)} \sum_{m=1}^M \tilde{\phi} \eta(t+1) \Uns(f_m) \mid \calS_t \right] e^{-t} \\ 
\leq & 
\EE\left[ \sup_{f \in \calH_{\psi}^{(r)}(R)} \tilde{\phi} \Unall(f) \mid S_0 \right] 
+ \sum_{t=1}^{\infty} \EE\left[ \sup_{f \in \calH_{\psi}^{(r)}(R)} \tilde{\phi} \eta(t+1) \Unall(f) \mid \calS_t \right] e^{-t} \\ 
\leq &
\tilde{\phi}
\left(\alpha_1 \frac{r}{\sqrt{\kminrho}} +  
\alpha_2 R 
+ \beta_1 \frac{r}{\sqrt{\kminrho}} +  
\beta_2 R 
+\sqrt{\frac{M\log(M)}{n}} \frac{r}{\sqrt{\kminrho}}\right) 
\left(1+\sum_{t=1}^{\infty} \eta(t+1) e^{-t}\right).
\end{align*}
Since 
\begin{align*}
\sum_{t=1}^{\infty} \eta(t+1) e^{-t}
\leq \int_{t=1}^{\infty} \left(\sqrt{t+1} + \frac{t+1}{\sqrt{n}}\right)e^{-(t-1)} \dd t
\leq 5,
\end{align*}
we obtain 
\begin{align*}
R_n(\calH_{\psi}^{(r)}(R)) \leq 
6 
\tilde{\phi}
\left(\alpha_1 \frac{r}{\sqrt{\kminrho}} +  
\alpha_2 R 
+ \beta_1 \frac{r}{\sqrt{\kminrho}} +  
\beta_2 R 
+\sqrt{\frac{M\log(M)}{n}} \frac{r}{\sqrt{\kminrho}}\right).
\end{align*}
By re-setting $\tilde{\phi} \leftarrow 6\tilde{\phi}$, we obtain the local Rademacher complexity upper bound.

\vskip 0.2in
\bibliographystyle{abbrvnat}
\bibliography{main} 

\end{document}